\newtheorem{theorem}{Theorem}[section]
\newtheorem{definition}{Definition}[section]
\newtheorem{lemma}{Lemma}[section]
\DeclareMathOperator*{\E}{\mathbb{E}}
\DeclareMathOperator*{\argmin}{\operatornamewithlimits{argmin}}
\DeclareMathOperator*{\argmax}{\operatornamewithlimits{argmax}}
\title{Quantization based Fast Inner Product Search}
\date{}
\author{
Ruiqi Guo,~Sanjiv Kumar,~Krzysztof Choromanski,~David Simcha\\
\\
Google Research, New York, NY 10011, USA\\
\texttt{\{guorq, sanjivk, kchoro, dsimcha\}@google.com} \\
}
\begin{document}

\maketitle

\begin{abstract}
We propose a quantization based approach for fast approximate Maximum Inner Product Search (MIPS). Each database vector is quantized in multiple subspaces via a set of codebooks, learned directly by minimizing the inner product quantization error. Then, the inner product of a query to a database vector is approximated as the sum of inner products with the subspace quantizers. Different from recently proposed LSH approaches to MIPS, the database vectors and queries do not need to be augmented in a higher dimensional feature space. We also provide a theoretical analysis of the proposed approach, consisting of the concentration results under mild assumptions. Furthermore, if a small sample of example queries is given at the training time, we propose a modified codebook learning procedure which further improves the accuracy.  Experimental results on a variety of datasets including those arising from deep neural networks show that the proposed approach significantly outperforms the existing state-of-the-art.
\end{abstract}

\section{Introduction}
Many information processing tasks such as retrieval and classification involve  computing the inner product of a query vector with a set of database vectors, with the goal of returning the database instances having the largest inner products. This is often called Maximum Inner Product Search (MIPS) problem. Formally, given a database $X=\{x_i\}_{i=1\cdots n}$, and a query vector $q$ drawn from the query distribution $\mathbf{Q}$, where $x_i, q \in \mathbb{R}^d$, we want to find $x_q^* \in X$ such that 
%its inner product with the query, $q^T x^*$ is maximized:
$ x_q^*=\argmax_{x \in X} (q^T x)$. This definition can be trivially extended to return top-$N$ largest inner products. 

The MIPS problem is particularly appealing for large scale applications. For example, a recommendation system needs to retrieve the most relevant items to a user from an inventory of millions of items, whose relevance is commonly represented as inner products~\cite{cremonesi2010performance}. Similarly, a large scale classification system needs to classify an item into one of the categories, where the number of categories may be very large~\cite{dean2013cvpr}. A brute-force computation of inner products via a linear scan requires $O(n d)$ time and space, which becomes computationally prohibitive when the number of database vectors and the data dimensionality is large. Therefore it is valuable to consider algorithms that can compress the database $X$ and compute approximate $x_q^*$ much faster than the brute-force search. 

The problem of MIPS is related to that of Nearest Neighbor Search with respect to $L_2$ distance ($L_2$NNS) or angular distance ($\theta$NNS) between a query and a database vector:
\[q^T x = 1/2(||x||^2 + ||q||^2 - ||q-x||^2) = ||q||||x||\cos{\theta}, \] or
\[ \argmax_{x \in X} (q^T x) = \argmax_{x \in X} (||x||^2 - ||q-x||^2) = \argmax_{x \in X}(||x|| cos\theta), \]

where $||.||$ is the $L_2$ norm. Indeed, if the database vectors are scaled such that $||x||=$ constant $~~\forall x \in X$, the MIPS problem becomes equivalent to L$_2$NNS or $\theta$NNS problems, which have been studied extensively in the literature. However, when the norms of the database vectors vary, as often true in practice, the MIPS problem becomes quite challenging. The inner product (distance) does not satisfy the basic axioms of a metric such as triangle inequality and co-incidence. For instance, it is possible to have $x^T x \leq x^T y$ for some $y \neq x$. In this paper, we focus on the MIPS problem where both database and the query vectors can have arbitrary norms.

As the main contribution of this paper, we develop a Quantization-based Inner Product (QUIP) search method to address the MIPS problem. We formulate the problem of quantization as that of codebook learning, which directly minimizes the quantization error in inner products (Sec.~\ref{sec:approx}). Furthermore, if a small sample of example queries is provided at the training time, we propose a constrained optimization framework which further improves the accuracy (Sec.~\ref{sec:opt}). We also provide a concentration-based theoretical analysis of the proposed method (Sec.~\ref{sec:theory}).   Extensive experiments on four real-world datasets, involving recommendation (\emph{Movielens}, \emph{Netflix}) and deep-learning based classification (\emph{ImageNet} and \emph{VideoRec}) tasks show that the proposed approach consistently outperforms the state-of-the-art techniques under both fixed space and fixed time scenarios (Sec.~\ref{sec:experiment}).

%tries to preserve the  which  Similar to data-dependent quantizer learning works in L$_2$NN~\cite{gong2011iterative,norouzi2013cartesian}, we also propose a codebook optimization framework based on quantization error and rank preservation, when training queries are available.

\section{Related works}
\label{sec:relatedworks}
%The general MIPS problem is not equivalent to L$_2$NN Fast, approximate MIPS has beenproblem has been studied for decades. It shares significant similarity to the Nearest Neighbors Search (L$_2$NN)) problem in Euclidean space, which has been extensively studied in its own merit~\cite{andoni2006lsh,kleinberg1997two,yianilos1993soda}.  Thus, approaches based metric space assumption, for example, k-d tree, R-tree, etc.~\cite{yianilos1993soda}, cannot be directly applied to MIPS.

The MIPS problem has been studied for more than a decade. For instance, Cohen et al.~\cite{cohen1999approximating} studied it in the context of document clustering and presented a method based on randomized sampling without computing the full matrix-vector multiplication. In \cite{koenigstein2012cikm, ram2012kdd}, the authors described a procedure to modify tree-based search to adapt to MIPS criterion. Recently, Bachrach et al.~\cite{bachrach2014speeding} proposed an approach that transforms the input vectors such that the MIPS problem becomes equivalent to the $L_2$NNS problem in the transformed space, which they solved using a PCA-Tree.

The MIPS problem has received a renewed attention with the recent seminal work from Shrivastava and Li~\cite{shrivastava2014asymmetric}, which introduced an Asymmetric Locality Sensitive Hashing (ALSH) technique with provable search guarantees. They also transform MIPS into $L_2$NNS, and use the popular LSH technique~\cite{andoni2006lsh}.
%\subsection{Asymmetric Locality Sensitive Hashing (ALSH)}
%\label{sec:alsh}
Specifically, ALSH applies different vector transformations to a database vector $x$ and the query $q$,  respectively:
\[
\hat{x} = [\tilde{x}; ||\tilde{x}||^2; ||\tilde{x}||^4; \cdots ||\tilde{x}||^{2^m}].~~~~\hat{q} = [q; 1/2; 1/2; \cdots ;1/2].
\]
where $\tilde{x}=U_0 \frac{x}{\max_{x \in X} ||x||}$, $U_0$ is some constant that satisfies $0<U_0<1$, and $m$ is a nonnegative integer. Hence, $x$ and $q$ are mapped to a new $(d+m)$ dimensional space asymmetrically. Shrivastava and Li~\cite{shrivastava2014asymmetric} showed that when $m\rightarrow \infty$, MIPS in the original space is equivalent to $L_2$NNS in the new space. The proposed hash function followed $L_2$LSH form~\cite{andoni2006lsh}:  
$
h^{L2}_i(\hat{x})=\lfloor \frac{P_i^T \hat{x}+b_i}{r} \rfloor,
$
%~~Dist^{L2}(x,q)=\sum_{i=1}^{b} h_i^{L2} (\hat{x}) \neq h_i^{L2} (\hat{q})
where $P_i$ is a $(d+m)$-dimensional vector whose entries are sampled i.i.d from the standard Gaussian, $\mathcal{N}(0,1)$, and $b_i$ is sampled uniformly from $[0, r]$. The same authors later proposed an improved version of ALSH  
based on Signed Random Projection (SRP)~\cite{shrivastava014e}. It transforms each vector using a slightly different procedure and represents it as a binary code. Then, Hamming distance is used for MIPS.
\[
\hat{x} = [\tilde{x}; \frac{1}{2}-||\tilde{x}||^2; \frac{1}{2}-||\tilde{x}||^4; \cdots \frac{1}{2}-||\tilde{x}||^{2^m}],~~~~\hat{q} = [q; 0; 0; \cdots ;0], ~~\textrm{and}
\]
\[
h^{SRP}_i(\hat{x})=sign(P_i^T \hat{x});~~Dist^{SRP}(x,q)=\sum_{i=1}^{b} h_i^{SRP} (\hat{x}) \neq h_i^{SRP} (\hat{q}).
\]
%\subsection{Simple LSH}
%\label{sec:simplelsh}
Recently, Neyshabur and Srebro~\cite{neyshabur2014simpler} argued that a symmetric transformation was sufficient to develop a provable LSH approach for the MIPS problem if query was restricted to unit norm. They used a transformation similar to the one used by Bachrach et al.~\cite{bachrach2014speeding} to augment the original vectors:
\[
\hat{x} = [\tilde{x}; \sqrt{1-||\tilde{x}||^2}].~~~~\hat{q} = [\tilde{q}; 0].
\]
where $\tilde{x}=\frac{x}{max_{x\in X} ||x||}$,  $\tilde{q}=\frac{q}{||q||}$. They showed that this transformation led to significantly improved results over the SRP based LSH from~\cite{shrivastava014e}. In this paper, we take a quantization based view of the MIPS problem and show that it leads to even better accuracy under both fixed space or fixed time budget on a variety of real world tasks. 

%methods have been studied in the context of L$_2$NN~\cite{bohm2001searching,jegou2011product}. 

\section{Quantization-based inner product (QUIP) search}
\label{sec:approx}
Instead of augmenting the input vectors to a higher dimensional space as in~\cite{neyshabur2014simpler,shrivastava2014asymmetric}, we approximate the inner products by mapping each vector to a set of subspaces, followed by independent quantization of database vectors in each subspace. In this work, we use a simple procedure for generating the subspaces. Each vector's elements are first permuted using a random (but fixed) permutation\footnote{Another possible choice is random rotation of the vectors which is slightly more expensive than permutation but leads to improved theoretical guarantees as discussed in the appendix.}. Then each permuted vector is mapped to $K$ subspaces using simple chunking, as done in product codes~\cite{sabin1984icassp, jegou2011pami}. For ease of notation, in the rest of the paper we will assume that both query and database vectors have been permuted. Chunking leads to block-decomposition of the query $q \sim \mathbf{Q}$ and each database vector $x \in X$:
\[
x=[x^{(1)}; x^{(2)}; \cdots; x^{(K)}]~~~~q=[q^{(1)}; q^{(2)}; \cdots; q^{(K)}],  
\]
where each $x^{(k)}, q^{(k)} \in \mathbb{R}^l, l = \lceil d/K \rceil.$\footnote{One can do zero-padding wherever necessary, or use different dimensions in each block.}
%To keep the energy between subspace balanced, one can first permutate the dimensions randomly, or, as suggested by Jegou et al.~\cite{jegou2011pami}, applying a random orthogonal projection before subspacing. 
The $k^{th}$ subspace containing the $k^{th}$ blocks of all the database vectors, $\{x^{(k)}\}_{i=1...n}$, is then quantized by a codebook $U^{(k)} \in \mathbb{R}^{l \times C_k}$ where $C_k$ is the number of quantizers in subspace $k$. Without loss of generality, we assume $C_k = C ~~\forall~k$. Then, each database vector $x$ is quantized in the $k^{th}$ subspace as $x^{(k)} \approx U^{(k)} \alpha_x^{(k)}$, where $\alpha_x^{(k)}$ is a $C$-dimensional one-hot assignment vector with exactly one $1$ and rest $0$. Thus, a database vector $x$ is quantized by a single dictionary element $u^{(k)}_x$ in the $k^{th}$ subspace.
%\begin{align}
%x=[x^{(1)}; x^{(2)}; \cdots; x^{(k)}]\rightarrow \mu(x) = [\mu^{(1)}(x^{(1)}); \mu^{(2)}(x^{(2)}); \cdots; \mu^{(k)}(x^{(k)})]
%\label{eqn:quantize}
%\end{align}
%where $\mu^{(k)}$ maps a subvector to its closest codeword $\mu^{(k)}_c$. 
Given the quantized database vectors, the exact inner product is approximated as:
\begin{equation}
q^T x = \sum_k q^{(k)T} x^{(k)} \approx \sum_k q^{(k)T} U^{(k)} \alpha^{(k)}_x = \sum_k q^{(k)T} u^{(k)}_x
\label{eqn:approx}
\end{equation}
Note that this approximation is 'asymmetric' in the sense that only database vectors $x$ are quantized, not the query vector $q$. One can quantize $q$ as well but it will lead to increased approximation error. %Compared to methods which quantize both $x$ and $q$, such as LSH, asymmetric distances often lead to smaller quantization error. Furthermore, LSH like distance computation are integral, whereas asymmetric distance gives floating point values. 
In fact, the above asymmetric computation for all the database vectors can still be carried out very efficiently via look up tables similar to~\cite{jegou2011pami}, except that each entry in the $k^{th}$ table is a dot product between $q^{(k)}$ and columns of $U^{(k)}$ . 

Before describing the learning procedure for the codebooks $U^{(k)}$ and the assignment vectors $\alpha_x^{(k)}$ $\forall~x, k$, we first show an interesting property of the approximation in (\ref{eqn:approx}). Let $S^{(k)}_c$ be the $c^{th}$ partition of the database vectors in subspace $k$ such that $S^{(k)}_c = \{x^{(k)}\!:\alpha_x^{(k)}[c] = 1\}$, where  $\alpha^{(k)}_x[c]$ is the $c^{th}$ element of $\alpha^{(k)}_x$ and $U_c^{(k)}$ is the $c^{th}$ column of $U^{(k)}$.  

%When the quantizers satisfy that each subquantizer is the is mean of subvectors of the the corresponding partition $S^{(k)}_c$, which is some partition of the subspace, then the proposed approximation in Eqn~(\ref{eqn:approx}) is the unbiased estimator of exact inner product $q^T x$.
\begin{lemma}
\label{thm:unbiased}
If $\displaystyle U^{(k)}_c=\frac{1}{|S^{(k)}_c|}\sum_{x^{(k)} \in S^{(k)}_c} x^{(k)}$, then~(\ref{eqn:approx}) is an unbiased estimator of $q^T x$.
\end{lemma}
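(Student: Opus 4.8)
The plan is first to pin down the probabilistic model under which ``unbiased'' is meant, since for a fixed query--database pair $(q,x)$ the approximation in~(\ref{eqn:approx}) is a deterministic quantity. The natural source of randomness here is the database vector itself: fix the query $q$, draw $x$ uniformly at random from the database $X$, and regard $\widehat{q^T x} := \sum_k q^{(k)T} u^{(k)}_x$ as an estimator of the true value $q^T x = \sum_k q^{(k)T} x^{(k)}$. Unbiasedness then means $\E_x[\widehat{q^T x}] = \E_x[q^T x]$, and by linearity of expectation it suffices to show that the expected residual $\E_x\big[\sum_k q^{(k)T}(x^{(k)} - u^{(k)}_x)\big]$ vanishes.

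Next I would handle the sum one subspace at a time, using that $q^{(k)}$ is a constant (non-random) vector so it factors out of the expectation: the task reduces to proving $\E_x[x^{(k)} - u^{(k)}_x] = 0$ for each $k$. The key step is to condition on which partition $x$ falls into within subspace $k$. If $x \in S^{(k)}_c$ then $u^{(k)}_x = U^{(k)}_c$, and by the hypothesis $U^{(k)}_c = \frac{1}{|S^{(k)}_c|}\sum_{y^{(k)} \in S^{(k)}_c} y^{(k)}$ this codeword is exactly the conditional mean, i.e. $\E[x^{(k)} \mid x \in S^{(k)}_c] = U^{(k)}_c$. Hence the conditional expected residual $\E[x^{(k)} - u^{(k)}_x \mid x \in S^{(k)}_c]$ is zero for every $c$, and the law of total expectation gives $\E_x[x^{(k)} - u^{(k)}_x] = 0$. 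Multiplying by $q^{(k)T}$ and summing over $k$ then yields zero total bias, establishing the claim.

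The proof is essentially a one-line centroid computation, so the main ``obstacle'' is conceptual rather than technical: correctly identifying that unbiasedness is asserted over the random choice of database vector (equivalently, over a random codeword sampled in proportion to partition sizes), not over any single fixed pair. A secondary point worth checking is that the partitions $\{S^{(k)}_c\}_c$ differ across subspaces $k$, so one cannot simultaneously condition $x$ on a single partition for all $k$; treating each subspace independently via linearity and a separate conditioning argument is what sidesteps this. I would also remark that the result relies on the codebook being the \emph{centroid} specifically, and it is precisely the mean (as opposed to, say, a median or an arbitrary representative of each cell) that makes every subspace's residual mean-zero.
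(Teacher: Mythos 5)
Your proposal is correct and follows essentially the same route as the paper: decompose the residual by subspace, factor out $q^{(k)}$, condition on which cell $S^{(k)}_c$ the point falls into, and use that the codeword is the cell centroid so each conditional residual vanishes. The only cosmetic difference is that the paper additionally averages over $q\sim\mathbf{Q}$, while you fix $q$ and show the bias is zero for every query, which is at least as strong.
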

\begin{proof}
\begin{align*}
\E_{q\sim \mathbf{Q},x\in X}[q^Tx - \sum_k q^{(k)T} u_x^{(k)}]&=\sum_k \E_{q\sim\mathbf{Q}} q^{(k)T} \E_{x\in X}[(x^{(k)}-u_x^{(k)}]\\
&=\sum_k \E_{q \sim \mathbf{Q}} q^{(k)T} \E_{x\in X}[\sum_c \mathbb{I}[x^{(k)} \in S^{(k)}_c] (x^{(k)}-U_c^{(k)})]\\
&=0.
\end{align*}
Where $\mathbb{I}$ is the indicator function, and the last equality holds because for each $k$, $\E_{x \in S^{(k)}_c}[x^{(k)}-U^{(k)}_c]=0$ by definition. 
\end{proof}

We will provide the concentration inequalities for the estimator in ({\ref{eqn:approx}) in Sec.~\ref{sec:theory}. Next we describe the learning of quantization codebooks in different subspaces. We focus on two different training scenarios: when only the database vectors are given (Sec.~\ref{sec:kmeans}), and when a sample of example queries is also provided (Sec.~\ref{sec:opt}). The latter can result in significant performance gain when queries do not follow the same distribution as the database vectors. Note that the actual queries used at the test time are different from the example queries, and hence unknown at the training time.

\subsection{Learning quantization codebooks from database}
\label{sec:kmeans}
%Lemma~\ref{thm:unbiased} states that, for any partition of $X^{(k)}$, $\mathbf{S}^{(k)}=\{S^{(k)}_c\}_C$, the approximation of Eqn~(\ref{eqn:approx}) is unbiased if the quantizers are the Euclidean mean of the subvectors in that partition. In product quantization for L2$_2$NN, the quantizers are obtained independently for each subspace using vector quantization with \emph{K-means} in L2 distance. X
Our goal is to learn data quantizers that minimize the quantization error due to the inner product approximation given in (\ref{eqn:approx}). Assuming each subspace to be independent, the expected squared error can be expressed as:
\begin{align}
\begin{split}
\E_{q \sim \mathbf{Q}} \E_{x \in X} [q^T x - \sum_k q^{(k)T} U^{(k)}\alpha_x^{(k)}]^2 &= \sum_k \E_{q \sim \mathbf{Q}} \E_{x \in X} [q^{(k)T} (x^{(k)} - u_x^{(k)})]^2\\
&=\sum_k \E_{x \in X} (x^{(k)} - u_x^{(k)})^T \Sigma_{\mathbf{Q}}^{(k)} (x^{(k)} - u_x^{(k)}),
\end{split}
\label{eqn:mse}
\end{align}
where $\Sigma^{(k)}_{\mathbf{Q}}=\E_{q \sim \mathbf{Q}}  q^{(k)} q^{(k)T}$ is the non-centered query covariance matrix in subspace $k$. %Recall that our goal is to learn a codebook in each subspace $U^{(k)}$ along with the assignment vector for each $x$, i.e., $\alpha_x^{(k)}$ leading to the corresponding quantizer $u_x^{(k)} = U^{(k)}\alpha_x^{(k)}$. 
Minimizing the error in (\ref{eqn:mse}) is equivalent to solving a modified \emph{k-Means} problem in each subspace independently. Instead of using the Euclidean distance, Mahalanobis distance specified by $\Sigma_{\mathbf{Q}}^{(k)}$ is used for assignment. One can use the standard Lloyd's algorithm to find the solution for each subspace $k$ iteratively by alternating between two steps:
\begin{eqnarray}
\label{eqn:kmeans}
%\begin{split}
c^{(k)}_x &=& \argmin_{c} (x^{(k)} - U^{(k)}_c)^T \Sigma_{\mathbf{Q}}^{(k)} (x^{(k)} - U^{(k)}_c), ~~ \alpha_x^{(k)}[c_x^{(k)}] = 1, ~~\forall~c,x \nonumber \\
U^{(k)}_c&=&\frac{\sum_{x^{(k)} \in S^{(k)}_c} x^{(k)}} {|S^{(k)}_c|} ~~~\forall~c.
%\end{split}
\end{eqnarray}
The Lloyd's algorithm is known to converge to a local minimum (except in pathological cases where it may oscillate between equivalent solutions)~\cite{bottou94kmeans}. Also, note that the resulting quantizers are always the Euclidean means of their corresponding partitions, and hence, Lemma~\ref{thm:unbiased} is applicable to (\ref{eqn:mse}) as well, leading to an unbiased estimator.

The above procedure requires the non-centered query covariance matrix $\Sigma_{\mathbf{Q}}$, which will not be known if query samples are not  available at the training time. In that case, one possibility is to assume 
%$\Sigma_{\mathbf{Q}}\propto \mathbf{I}$, which leads to the standard \emph{k-Means} formulation. A more appropriate choice is to assume
that the queries come from the same distribution as the database vectors, i.e., $\Sigma_{\mathbf{Q}}=\Sigma_{X}$. In the experiments we will show that this version performs reasonably well. However, if a small set of example queries is available at the training time, besides estimating the query covariance matrix, we propose to impose novel constraints that lead to improved quantization, as described next.

\subsection{Learning quantization codebook from database and example query samples}
\label{sec:opt}
In most applications, it is possible to have access to a small set of example queries, $Q$. Of course, the actual queries used at the test-time are different from this set. Given these exemplar queries, we propose to modify the learning criterion by imposing additional constraints while minimizing the expected quantization error. Given a query $q$, since we are interested in finding the database vector $x^*_q$ with highest dot-product, ideally we want the dot product of query to the quantizer of $x^*_q$ to be larger than the dot product with any other quantizer. Let us denote the matrix containing the $k^{th}$ subspace assignment vectors $\alpha_x^{(k)}$ for all the database vectors by $A^{(k)}$. Thus, the modified optimization is given as,
%Let us rewrite the quantization function as the product of a codebook and the quantized code of $x$: $\mu(x)=\mathbf{U} \alpha_x$. That is, on each subspace, $\mu^{(k)}(x^{(k)})=\mathbf{U}^{(k)} \alpha^{(k)}_x$ where $\mathbf{U}^{(k)}=[\mu^{(k)}_1; \mu^{(k)}_2 \cdots ; \mu^{(k)}_C]$ is the dictionary matrix for the $k$-th subspace, and $\alpha^{(k)}_x \in \{1, 2, \cdots C\}$ is the index of which codeword $c^*$ the subvector $x^{(k)}$ has been quantize to, i.e. $c^*; x^{(k)} \in S^{(k)}_{c^*}$. We will also use $\alpha^{(k)}_x$ to indicate its index vector $\alpha^{(k)}_{xc}=I[c=c^*]$.  Let us abuse the notation of $\mathbf{Q}$ to indicate the set of training queries, we solve the following optimization problem, with respect to dictionary $\mathbf{U}$ and indictor matrix $\mathbf{A}$:
\begin{align}
\begin{split}
\argmin_{U^{(k)}, A^{(k)}} ~~~~~~& \E_{q \in Q}  \sum_{x \in X} [\sum_k  q^{(k)T} x^{(k)} - \sum_k  q^{(k)T} U^{(k)} \alpha^{(k)}_x ]^2 \\
s.t. ~~~~~~& \forall q,x,~~ \sum_k  q^{(k)T} U^{(k)}\alpha_x^{(k)} \leq \sum_k  q^{(k)T}U^{(k)}\alpha_{x_q^*}^{(k)}~\text{where}~~x_q^*=\argmax_{x} q^T x
\end{split}
\label{eqn:opt}
\end{align}
%Intuitively, we are minimizing the empirical quantization error while trying to satisfy the constraints that the correct max dot product for $q$, $x_{q^*}$, always has the largest dot product with $q$ among all database vector $X$ when they are quantized with dictionary $\mathbf{U}$ to codes $\mathbf{A}$. 
We relax the above hard constraints using slack variables to allow for some violations, which leads to the following equivalent objective:
\begin{align}
\argmin_{U^{(k)}, A^{(k)}}  \E_{q \in Q}  \sum_{x \in X} \sum_k \big(q^{(k)T} (x^{(k)} - U^{(k)}\alpha_x^{(k)})\big)^2 
+ \lambda \sum_{q \in Q}  \sum_{x \in X} [\sum_k q^{(k)T} (U^{(k)}\alpha_x^{(k)}-U^{(k)}\alpha_{x^*_q}^{(k)})]_{+} 
\label{eqn:opt2}
\end{align}
where $[z]_{+}=max(z,0)$ is the standard hinge loss, and $\lambda$ is a nonnegative coefficient. We use an iterative procedure to solve the above optimization, which alternates between solving $U^{(k)}$ and $A^{(k)}$ for each $k$. In the beginning, each codebook $U^{(k)}$ is initialized with a set of random database vectors mapped to the $k^{th}$ subspace. Then, we iterate through the following three steps:
\begin{enumerate}[leftmargin=15pt,itemsep=-.3ex]
\item Find a set of violated constraints $W$ with each element as a triplet, i.e., $W_j = \{q_j, x^*_{q_j}, x_j^-\}_{j=1\cdots J}$, where $q_j \in Q$ is an exemplar query, $x^*_{q_j}$ is the database vector having the maximum dot product with $q_j$, and $x_j^-$ is a vector such that $q_j^T x_{q_j}^* \geq q_j^T x_j^-$ but 
%that satisfy:
\[
\sum_k q_j^{(k)T} U^{(k)} \alpha^{(k)}_{x^*_{q_j}} < \sum_k q_j^{(k)T} U^{(k)}\alpha^{(k)}_{x_j^-}  
\]
\item Fixing $U^{(k)}$ and all columns of $A^{(k)}$ except $\alpha^{(k)}_{x}$, one can update $\alpha^{(k)}_{x}$ $\forall ~ x, k$ as:
\begin{eqnarray*}
c^{(k)}_{x}\!=\!\argmin_c\!\big( (x^{(k)} \!-\! U^{(k)}_c)^T \Sigma^{(k)}_{Q} (x^{(k)}\! - \!U^{(k)}_c) \!+\!
\lambda \big(\sum_j\! q^{(k)T} U^{(k)}_c (\mathbb{I}[x=x_j^-]\! -\! \mathbb{I}[x=x_{q_j}^*]) \big), \\
\alpha^{(k)}_{x}[c^{(k)}_{x}] = 1
\end{eqnarray*}

Since $C$ is typically small (256 in our experiments), we can find $c^{(k)}_{x}$ by enumerating all possible values of $c$.

\item Fixing $A$, and all the columns of $U^{(k)}$ except $U_c^{(k)}$, one can update $U_c^{(k)}$ by gradient descent where gradient can be computed as:
\begin{align*}
\nabla U_c^{(k)}=  2 \Sigma^{(k)}_{Q} \sum_{x\in X} \alpha^{(k)}_x[c] (U_c^{(k)} - x^{(k)}) + \lambda \sum_j \big( q_j^{(k)}(\alpha^{(k)}_{x_j^-}[c]-\alpha^{(k)}_{x_{q_j}^*}[c]) \big)
\end{align*}
\end{enumerate}
Note that if no violated constraint is found, step 2 is equivalent to finding the nearest neighbor of $x^{(k)}$ in $U^{(k)}$in Mahalanobis space specified by $\Sigma^{(k)}_{Q}$. Also, in that case, by setting $\nabla U_c^{(k)}=0$, the update rule in step 3 becomes $ U^{(k)}_c=\frac{1}{|S^{(k)}_c|}\sum_{x^{(k)} \in S^{(k)}_c} x^{(k)}$ which is the stationary point for the first term. Thus, if no constraints are violated, the above procedure becomes identical to \emph{k-Means}-like procedure described in Sec.~\ref{sec:kmeans}. The steps 2 and 3 are guaranteed not to increase the value of the objective in (\ref{eqn:opt}).
%It is also easy to see that the above procedure is guaranteed to converge. This is because step 2 and step 3 will not increase the objective in (\ref{eqn:opt}). Moreover, there are only finite possible assignment in $A$. Hence the proposed procedure will stop after finite iterations.
In practice, we have found that the iterative procedure can be significantly sped up by modifying the step 3 as perturbation of the stationary point of the first term %i.e., $U^{(k)}_c=\frac{1}{|S^{(k)}_c|}\sum_{x^{(k)} \in S^{(k)}_c} x^{(k)}$ 
with a single gradient step of the second term. The time complexity of step 1 is at most $O(nKC|Q|)$, but in practice it is much cheaper because we limit the number of constraints in each iteration to be at most $J$. Step 2 takes $O(nKC)$ and step 3 $O((n+J)KC)$ time. In all the  experiments, we use at most $J=1000$ constraints in each iteration, Also, we fix $\lambda=.01$, step size $\eta_t=1/(1+t)$ at each iteration $t$, and the maximum number of iterations $T=30$.

\section{Theoretical analysis}
\label{sec:theory}
%bitwa

In this section we present concentration results about the quality of the quantization-based inner product search method. %Without loss of generality, we will assume that each query $q$ is $L_{2}$-normalized.
Due to the space constraints, proofs of the theorems are provided in the appendix. We start by defining a few quantities.
%The dataset against which queries will be examined will be denoted as $X$ and we will associate with it a uniform
%distribution $\mathcal{X}$ on $X$.  
%We start by first giving a few definitions necessary to build further analysis.
\begin{definition}
\label{def:event}
Given fixed $a, \epsilon > 0$, let $\mathcal{F}(a, \epsilon)$ be an event such that
the exact dot product $q^{T}x$ is at least $a$, but the quantized version is either smaller than $q^{T}x(1-\epsilon)$ or larger than $q^{T}x(1+\epsilon)$.
\end{definition}
Intuitively, the probability of event $\mathcal{F}(a,\epsilon)$ measures the chance that difference between the exact and the quantized dot product is large, when the exact dot product is large. We would like this probability to be small. Next, we introduce the concept of balancedness for subspaces.

\begin{definition}
\label{def:balance}
Let $v$ be a vector which is chunked into $K$ subspaces: $v^{(1)},...,v^{(K)}$. We say that chunking is $\eta$-balanced if the following holds for every $k \in \{1,...,K\}$:
$$\|v^{(k)}\|^{2} \leq (\frac{1}{K} + (1-\eta))\|v\|^{2}$$
\end{definition}
%\begin{definition}
%We say that a dataset distribution $\mathcal{X}$ is balanced if
%for every block $i$ and $S \subseteq \mathbb{R}^{\frac{d}{K}}$, where $S$ is Lebesgue-measurable, the following holds:
%$$E[x^{(i)}|x^{(1)},...,x^{(i-1)},x^{(i)} \in S] = E[x^{(i)} | x^{(i)} \in S].$$
%In other words, the mean of the $i$th block of a dataset vector
%is insensitive to conditioning on the outcomes from the previous blocks.
%If blocks are independent then $\mathcal{X}$ is trivially balanced but this requirement does not need to be satisfied in order for balanceness to hold.
%\end{definition}
%\begin{definition}
%Let $L^{(k)} = E_{q \sim \mathcal{Q}} [\sum_{S^{(k)}_{c}} \sum_{x\in S^{(k)}_{c}}(q^{(k)T} x^{k} - q^{(k)T} u^{(k)}_{x})^2]$.
%\end{definition}

% Note that one of the versions of our algorithm aims to minimize the expression above for every $k \in \{1,...,K\}$.
%For a set $W$ of datapoints and a datapoint $y$ we denote: $W \backslash y = \{w-y:w \in W\}$, where the difference-operator on the pair of datapoints is defined as a pointwise (per each coordinate) subtraction.
Since the input data may not satisfy the balancedness condition, we next show that random permutation tends to create more balanced subspaces. Obviously, a (fixed) random permutation applied to vector entries does not change the dot product. 

\begin{theorem}
\label{perm_theorem_main}
Let $v$ be a vector of dimensionality $d$ and let $perm(v)$ be its version after applying random permutation of its dimensions. Then the expected $perm(v)$ is $1$-balanced.
\end{theorem}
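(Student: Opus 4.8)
The plan is to compute, for each chunk index $k$, the expectation of $\|perm(v)^{(k)}\|^{2}$ over the uniformly random permutation, and to check that it meets the $\eta=1$ balancedness threshold $\frac{1}{K}\|v\|^{2}$. Since a permutation merely reorders coordinates, $\|perm(v)\|^{2}=\|v\|^{2}$ holds with probability one, so the total squared mass is conserved; the only thing in question is how that mass distributes across the $K$ chunks in expectation.

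First I would set up indicator random variables. For simplicity assume $d=Kl$, so each chunk contains exactly $l=d/K$ coordinates; the general $l=\lceil d/K\rceil$ case with zero-padding only introduces deterministically-zero coordinates that affect neither side of the inequality. Let $\pi$ be the random permutation, and for each original coordinate $j\in\{1,\dots,d\}$ and each target position $i$, set $X_{ij}=\mathbb{I}[\pi(j)=i]$. Then the value landing at position $i$ contributes squared mass $\sum_{j} X_{ij}\,v_{j}^{2}$, and writing $B_{k}$ for the block of $l$ positions forming chunk $k$, we have $\|perm(v)^{(k)}\|^{2}=\sum_{i\in B_{k}}\sum_{j} X_{ij}\,v_{j}^{2}$.

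Next, by linearity of expectation together with the symmetry of the uniform permutation, $\E[X_{ij}]=\Pr[\pi(j)=i]=1/d$ for every pair $(i,j)$. Summing over the $l$ positions of $B_{k}$ gives $\Pr[\text{coordinate } j \text{ lands in chunk } k]=l/d=1/K$, independent of $j$. Hence $\E[\|perm(v)^{(k)}\|^{2}]=\sum_{j}\frac{1}{K}v_{j}^{2}=\frac{1}{K}\|v\|^{2}$ for every $k$.

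Finally I would note that this value exactly equals the $\eta=1$ bound $\bigl(\frac{1}{K}+(1-\eta)\bigr)\|v\|^{2}=\frac{1}{K}\|v\|^{2}$, so in expectation the chunking is $1$-balanced, in fact with equality for each $k$. There is no substantive obstacle here: once the indicators are in place the result is a single linearity-of-expectation computation. The only mild care needed is the uneven-chunk / zero-padding situation, which I would dispatch by the remark above that padded entries are identically zero.
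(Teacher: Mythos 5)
Your proof is correct and follows essentially the same route as the paper's: both arguments introduce indicator variables for which block a coordinate lands in, observe that each coordinate falls in a given block with probability $1/K$, and conclude by linearity of expectation that $\E[\|perm(v)^{(k)}\|^{2}]=\frac{1}{K}\|v\|^{2}$, which meets the $\eta=1$ threshold with equality. Your additional remarks on the uneven-chunk/zero-padding case are a harmless refinement the paper omits.
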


Another choice of creating balancedness is via a (fixed) random rotation, which also does not change the dot-product. This leads to even better balancedness property as discussed in the appendix (see Theorem 2.1). 
%Suppose $q_{max} = \max_{k=1,...,K} \max_{q \sim \mathbf{Q}}\|q^{(k)}\|_{2}$.
Next we show that the probability of $\mathcal{F}(a,\epsilon)$ can be upper bounded by an exponentially small quantity in $K$,  indicating that the quantized dot products accurately approximate large exact dot products when the quantizers are the means obtained from Mahalanobis \emph{k-Means} as described in Sec.~\ref{sec:kmeans}. Note that in this case quantized dot-product is an unbiased estimator of the exact dot-product as shown in Lemma~\ref{thm:unbiased}.
\begin{theorem}
\label{lipsch_theory_main}
Assume that the dataset $X$ of dimensionality $d$ resides entirely in the ball $\mathcal{B}(p,r)$ of radius $r$, centered at $p$ . Further, let $\{x-p : x \in X\}$ be $\eta$-balanced for some $0 < \eta < 1$, where $\backslash$ is applied pointwise, and let $\E[\sum_k (x^{(k)}-u_x^{(k)})]_{k=1\cdots K}$ be a martingale. Denote $q_{max} = \max_{k=1,...,K} \max_{q \in Q}\|q^{(k)}\|$. Then, there exist $K$ sets of codebooks, each with $C$ quantizers, such that the following is true:
$$\mathbb{P}(\mathcal{F}(a,\epsilon)) \leq 
2e^{-(\frac{a \epsilon}{r})^{2}\frac{C^{\frac{2K}{d}}}{8q_{max}^{2}(1+(1-\eta)K)}}.$$
\end{theorem}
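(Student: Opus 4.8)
The plan is to reduce the event $\mathcal{F}(a,\epsilon)$ to a deviation event for the dot-product quantization error and then apply a martingale concentration inequality. First I would write the error of the approximation~(\ref{eqn:approx}) as $D := q^Tx - \sum_k q^{(k)T}u_x^{(k)} = \sum_k q^{(k)T}(x^{(k)}-u_x^{(k)})$. On $\mathcal{F}(a,\epsilon)$ we have $q^Tx \geq a$ together with $|D| > \epsilon\, q^Tx$, so $|D| > \epsilon\, q^Tx \geq \epsilon a$. Hence $\mathcal{F}(a,\epsilon) \subseteq \{\,|D| > \epsilon a\,\}$ and it suffices to bound $\mathbb{P}(|D| > \epsilon a)$. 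By Lemma~\ref{thm:unbiased} the quantizers are partition means, so $\E[D]=0$, and by hypothesis the partial sums $\sum_{k\le j} q^{(k)T}(x^{(k)}-u_x^{(k)})$ form a mean-zero martingale indexed by the subspace index $k$, which sets up an application of the two-sided Azuma--Hoeffding inequality.

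The crux is to produce codebooks whose per-subspace martingale increments are uniformly bounded, which is exactly where the balancedness hypothesis and the ``there exist codebooks'' phrasing enter. By Cauchy--Schwarz, $|q^{(k)T}(x^{(k)}-u_x^{(k)})| \le \|q^{(k)}\|\,\|x^{(k)}-u_x^{(k)}\| \le q_{max}\,\|x^{(k)}-u_x^{(k)}\|$, so it remains to control the quantization radius in each block. Applying $\eta$-balancedness to $v = x - p$ (which satisfies $\|v\| \le r$ since $X \subseteq \mathcal{B}(p,r)$) gives $\|x^{(k)}-p^{(k)}\|^2 \le (\tfrac{1}{K} + (1-\eta))\,r^2$, i.e. every block $x^{(k)}$ lies in a ball of radius $r_k = r\sqrt{\tfrac1K + (1-\eta)}$ centered at $p^{(k)}$ inside a space of dimension $l=\lceil d/K\rceil$. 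I would then invoke a standard covering argument: a ball of radius $r_k$ in $\mathbb{R}^l$ admits a net of $C$ points with covering radius at most $2\,r_k\, C^{-1/l} = 2\,r_k\, C^{-K/d}$; choosing these net points as $U^{(k)}$ and assigning each $x^{(k)}$ to its nearest net point both furnishes the claimed codebooks and guarantees the increment bound $c_k := 2\,q_{max}\, r_k\, C^{-K/d}$.

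With these bounds I would compute the sum of squared increments using the balancedness identity $\sum_k r_k^2 = K r^2(\tfrac1K + (1-\eta)) = r^2(1+(1-\eta)K)$, giving $\sum_k c_k^2 = 4\,q_{max}^2\,r^2\,(1+(1-\eta)K)\,C^{-2K/d}$. The Azuma--Hoeffding inequality then yields
\[
\mathbb{P}(|D| > \epsilon a) \le 2\exp\!\left(-\frac{(\epsilon a)^2}{2\sum_k c_k^2}\right) = 2\exp\!\left(-\Big(\tfrac{a\epsilon}{r}\Big)^2 \frac{C^{2K/d}}{8\,q_{max}^2\,(1+(1-\eta)K)}\right),
\]
which is precisely the stated bound once combined with $\mathbb{P}(\mathcal{F}(a,\epsilon)) \le \mathbb{P}(|D|>\epsilon a)$.

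I expect the main obstacle to be the codebook-construction step rather than the concentration step: one must exhibit a single family of nets attaining the $C^{-K/d}$ quantization radius simultaneously in all $K$ subspaces while remaining compatible with the mean-zero martingale structure that Azuma--Hoeffding requires. Tracking the covering constant carefully is what produces the factor $8$ in the denominator (a looser net would only inflate this constant), so this is the place where the estimate is most delicate. A secondary subtlety is reconciling the martingale assumption on the block errors with the unbiasedness of Lemma~\ref{thm:unbiased}: the theorem takes the martingale property as a hypothesis, but one still wants to argue that revealing subspaces one at a time leaves each conditional increment mean-zero, which is what legitimizes the concentration argument.
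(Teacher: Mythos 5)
Your proposal follows essentially the same route as the paper's proof: the same reduction of $\mathcal{F}(a,\epsilon)$ to the deviation event $\{|D|>a\epsilon\}$, the same use of $\eta$-balancedness to confine each block to a ball of radius $r\sqrt{\tfrac{1}{K}+(1-\eta)}$, the same volumetric covering argument producing the $C^{-K/d}$ quantization radius (the paper obtains the factor $2$ from the distance to the center of mass of each covering ball rather than from the net's covering radius, but the resulting increment bound $2q_{max}\gamma C^{-K/d}$ is identical), and the same application of Azuma's inequality yielding the stated constant $8$. The argument is correct and matches the paper's.
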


%The predictions of Theorem \ref{lipsch_theory_main} are showed on 
The above theorem shows that the probability of $\mathcal{F}(a, \epsilon)$ decreases exponentially as the number of subspaces (i.e., blocks) $K$ increases. This is consistent with experimental observation that increasing $K$ leads to more accurate retrieval. 
%drastically reduces the probability of large discrepancy between the dot product of the quantized version and the dot product of the exact one. In fact, the probability decreases exponentially-fast. Let us emphasize that in the theorem above the independence between blocks is not required.
%The balanceness condition does not need to be satisfied a priori, but it is very easy to convert dataset to the form, that satisfies desired level of balanceness.

Furthermore, if we assume that each subspace is independent, which is a slightly more restrictive assumption than the martingale assumption made in Theorem~\ref{lipsch_theory_main}, we can use Berry-Esseen~\cite{NT07} inequality to obtain an even stronger upper bound as given below. 

\begin{theorem}
\label{ind_theory1_main}
Suppose, $\Delta = \max_{k=1,...,K} \Delta^{(k)}$, where $\Delta^{(k)}=\max_x ||u^{(k)}_x-x^{(k)}||$ is the maximum distance between a datapoint and its quantizer in subspace $k$.   Assume $\Delta \leq \frac{a^{\frac{1}{3}}}{q_{max}}$. Then,
$$\mathbb{P}(\mathcal{F}(a,\epsilon)) \leq \frac{2\sum_{k=1}^{K}L^{(k)}}{\sqrt{2\pi}|X|a\epsilon}e^{-\frac{a^{2}\epsilon^{2}|X|^{2}}{2(\sum_{k=1}^{K}L^{(k)})^{2}}}+
\frac{\beta K(\sum_{k=1}^{K} L^{(k)})^{\frac{3}{2}}}{a^{2}\epsilon^{3}|X|^{\frac{3}{2}}},$$
where
$L^{(k)} = E_{q \in Q} [\sum_{S^{(k)}_{c}} \sum_{x\in S^{(k)}_{c}}(q^{(k)T} x^{k} - q^{(k)T} u^{(k)}_{x})^2]$ and
$\beta>0$ is some universal constant.
\end{theorem}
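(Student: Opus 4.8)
The plan is to recognize the event $\mathcal{F}(a,\epsilon)$ as a two-sided deviation of the total quantization error and to control that deviation with a central-limit-type estimate. First I would write the error as $S = q^{T}x - \sum_k q^{(k)T}u_x^{(k)} = \sum_{k=1}^{K} Z_k$, where $Z_k = q^{(k)T}(x^{(k)} - u_x^{(k)})$. On the event $\mathcal{F}(a,\epsilon)$ we have $q^{T}x \geq a$ while the quantized product deviates by more than $\epsilon\, q^{T}x$, i.e.\ $|S| > \epsilon\, q^{T}x \geq \epsilon a$, so $\mathbb{P}(\mathcal{F}(a,\epsilon)) \leq \mathbb{P}(|S| > \epsilon a)$ and it suffices to bound this tail. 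By Lemma~\ref{thm:unbiased} each $Z_k$ is mean-zero (the cell mean of $x^{(k)}-u_x^{(k)}$ vanishes), and under the independence-of-subspaces assumption the $Z_k$ are independent. Thus $S$ is a sum of $K$ independent, mean-zero, bounded summands, which is exactly the setting for a Berry--Esseen argument.

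Second, I would identify the variance scale of $S$. Computing $\sigma_k^{2}=\mathrm{Var}(Z_k)$ over the random database point $x$ and query $q$ and summing over the cells shows that $\sum_k \sigma_k^{2}$ is governed by $\sum_k L^{(k)}/|X|$, since $L^{(k)} = \E_{q}\big[\sum_{x}(q^{(k)T}(x^{(k)}-u_x^{(k)}))^{2}\big]$ is precisely the partition-aggregated second moment of $Z_k$. Writing $\sigma$ for the resulting standard deviation of $S$, the classical Gaussian tail inequality $1-\Phi(z)\le \frac{1}{z\sqrt{2\pi}}e^{-z^{2}/2}$ evaluated at $z=\epsilon a/\sigma$ produces the first (exponential) term of the bound, with $|X|$, $a$ and $\epsilon$ entering through $z$ and the identification of $\sum_k L^{(k)}/|X|$ with the variance scale.

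Third, the discrepancy between the true law of $S/\sigma$ and the Gaussian is exactly what Berry--Esseen (or its non-uniform/Nagaev refinement) quantifies, in terms of the third absolute moments $\rho_k = \E|Z_k|^{3}$. This is where the hypothesis $\Delta \le a^{1/3}/q_{max}$ enters: Cauchy--Schwarz gives $|Z_k| = |q^{(k)T}(x^{(k)}-u_x^{(k)})| \le q_{max}\,\Delta^{(k)} \le q_{max}\Delta \le a^{1/3}$, hence $|Z_k|^{3}\le a^{1/3}Z_k^{2}$ and $\rho_k \le a^{1/3}\sigma_k^{2}$. Feeding this into the Berry--Esseen remainder of the form $\beta\sum_k \rho_k/\sigma^{3}$, and evaluating the tail-aware (non-uniform) version at the point $z=\epsilon a/\sigma$ so as to recover the extra $a$ and $\epsilon$ powers, yields the second term, with the universal Berry--Esseen constant appearing as $\beta$.

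The main obstacle I anticipate is the bookkeeping that turns the generic remainder $\sum_k\rho_k/\sigma^{3}$ into the precise expression $\beta K(\sum_k L^{(k)})^{3/2}/(a^{2}\epsilon^{3}|X|^{3/2})$: one must select the correct (uniform versus non-uniform) form of Berry--Esseen and track the powers of $|X|$, $K$, $a$ and $\epsilon$ consistently with the normalization used in the Gaussian term. The conceptual content---reduce to a tail of a sum of independent mean-zero variables, approximate by a Gaussian, and use the diameter bound $\Delta\le a^{1/3}/q_{max}$ to convert third moments into second moments---is straightforward; matching the constants and exponents is the only delicate part.
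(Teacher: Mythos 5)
Your plan matches the paper's proof essentially step for step: the reduction $\mathbb{P}(\mathcal{F}(a,\epsilon))\le\mathbb{P}(|\sum_{k}\mathcal{Z}^{(k)}|>a\epsilon)$ with $\mathcal{Z}^{(k)}=q^{(k)T}(x^{(k)}-u_x^{(k)})$, the identification $\mathrm{Var}(\mathcal{Z}^{(k)})=L^{(k)}/|X|$, the non-uniform Berry--Esseen bound combined with the Gaussian tail estimate $1-\phi(c)\le\frac{1}{\sqrt{2\pi}c}e^{-c^{2}/2}$ at $c=a\epsilon/\sqrt{\sum_k \mathrm{Var}(\mathcal{Z}^{(k)})}$, and the use of $\Delta\le a^{1/3}/q_{max}$ to control the third moments. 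The one place to adjust your bookkeeping: the paper uses the cruder bound $E[|\mathcal{Z}^{(k)}|^{3}]\le(q_{max}\Delta)^{3}\le a$, so that $\sum_{k}E[|\mathcal{Z}^{(k)}|^{3}]\le Ka$, which is precisely where the factor $K$ and the drop from $a^{3}$ to $a^{2}$ in the stated remainder come from, whereas your refinement $\rho_k\le a^{1/3}\sigma_k^{2}$ would produce a second term of a different shape.
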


%Under independence assumption, we can approximate the upper bound on $\mathcal{F}(a, \epsilon)$ by bounds coming from the tails of certain gaussian distributions and use more refined concentration inequalities. More details are given in the  appendix.

%%%%%%%%%%%%%%%%%%%%%%%%%%%%%%%%%%%%%%%%%%%%%%%%%%%%%%%%%%%%%%%%%%%%%%%

\section{Experimental results}
\label{sec:experiment}

\begin{figure}
\centering
\begin{subfigure}[c]{1 \textwidth}
\vskip -9pt
\includegraphics[trim=0.6in 2.5in 0.9in 2.5in,clip,width=.26 \textwidth]{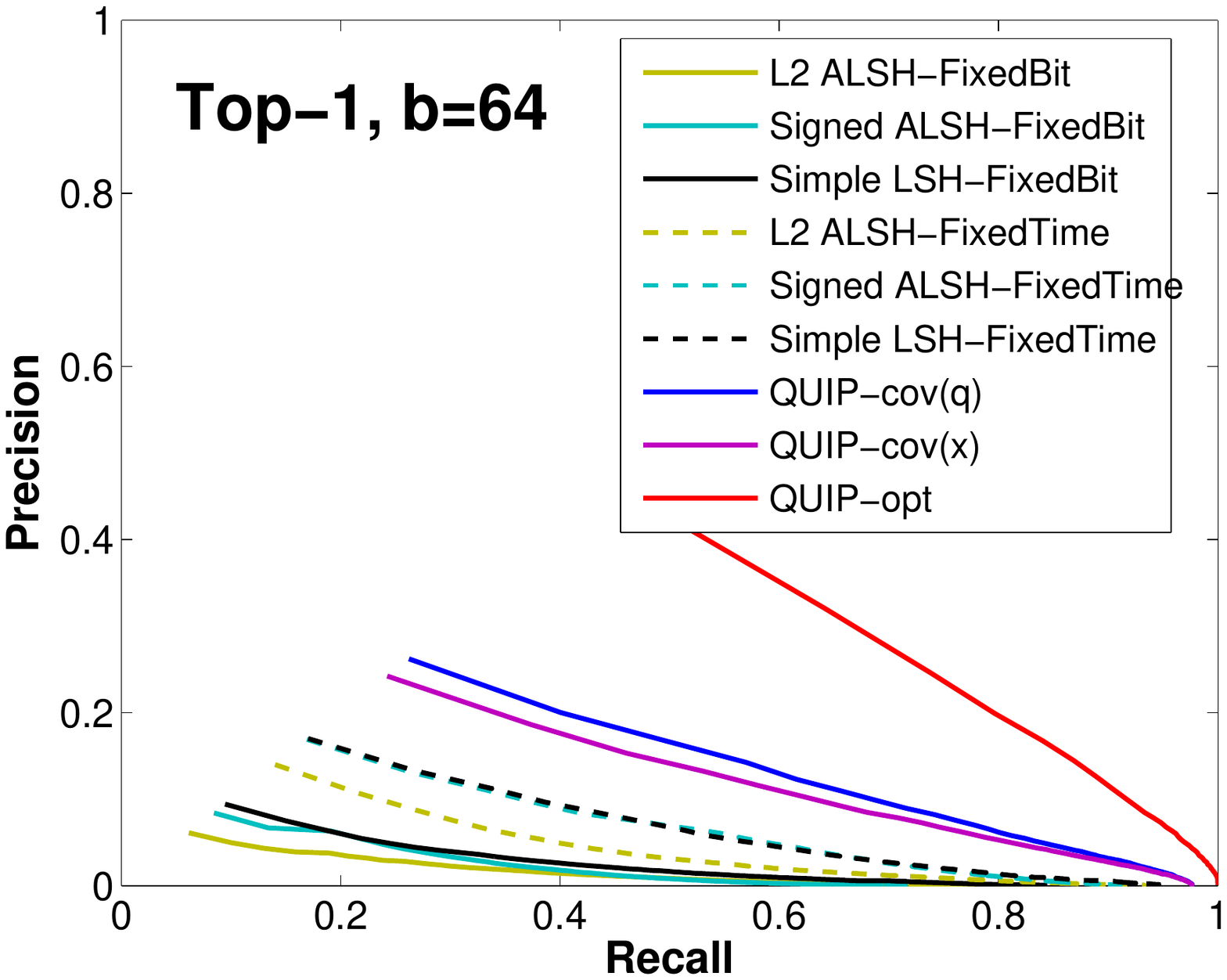} \hskip -3pt
\includegraphics[trim=0.9in 2.5in 0.9in 2.5in,clip,width=.248 \textwidth]{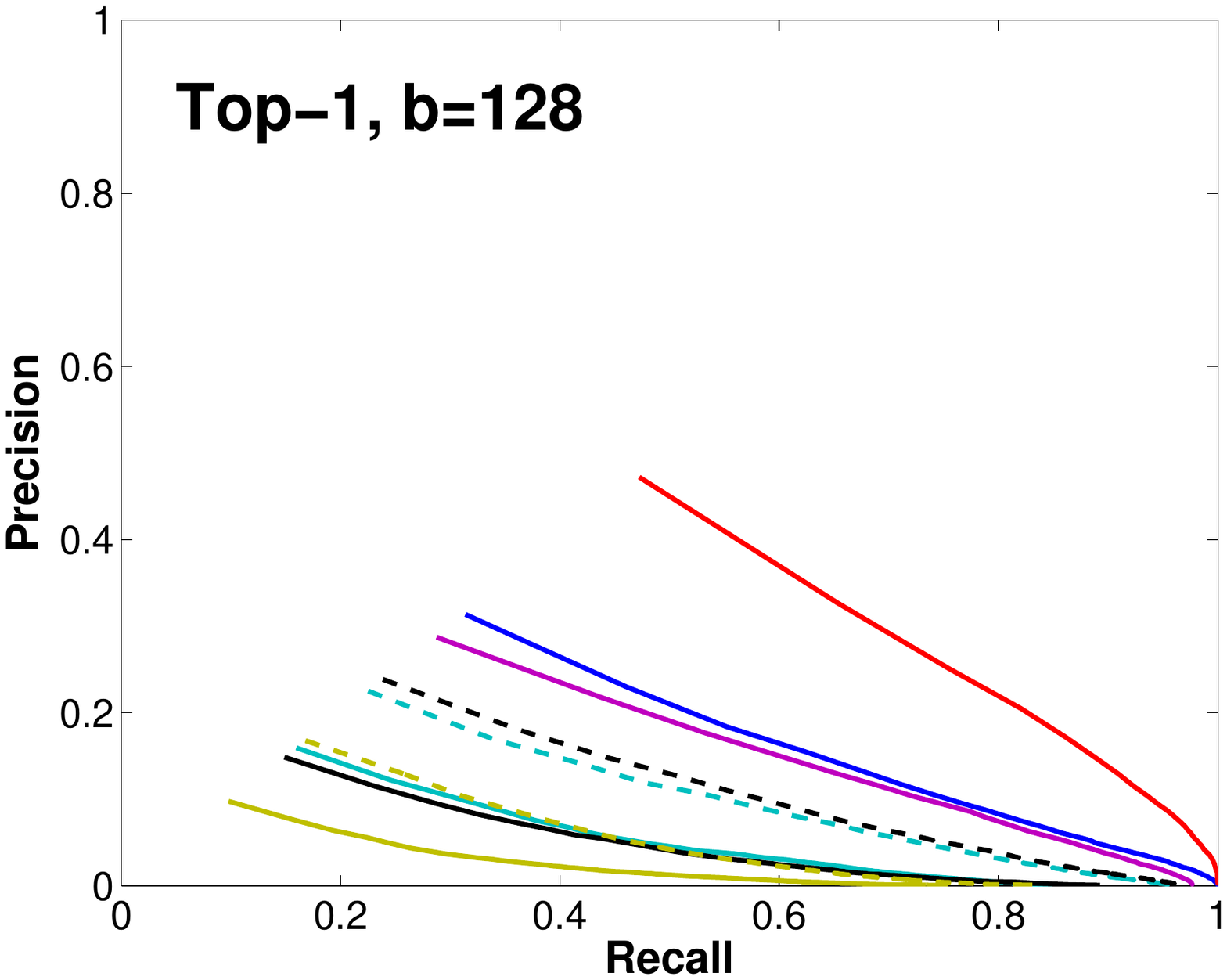} \hskip -3pt
\includegraphics[trim=0.9in 2.5in 0.9in 2.5in,clip,width=.248 \textwidth]{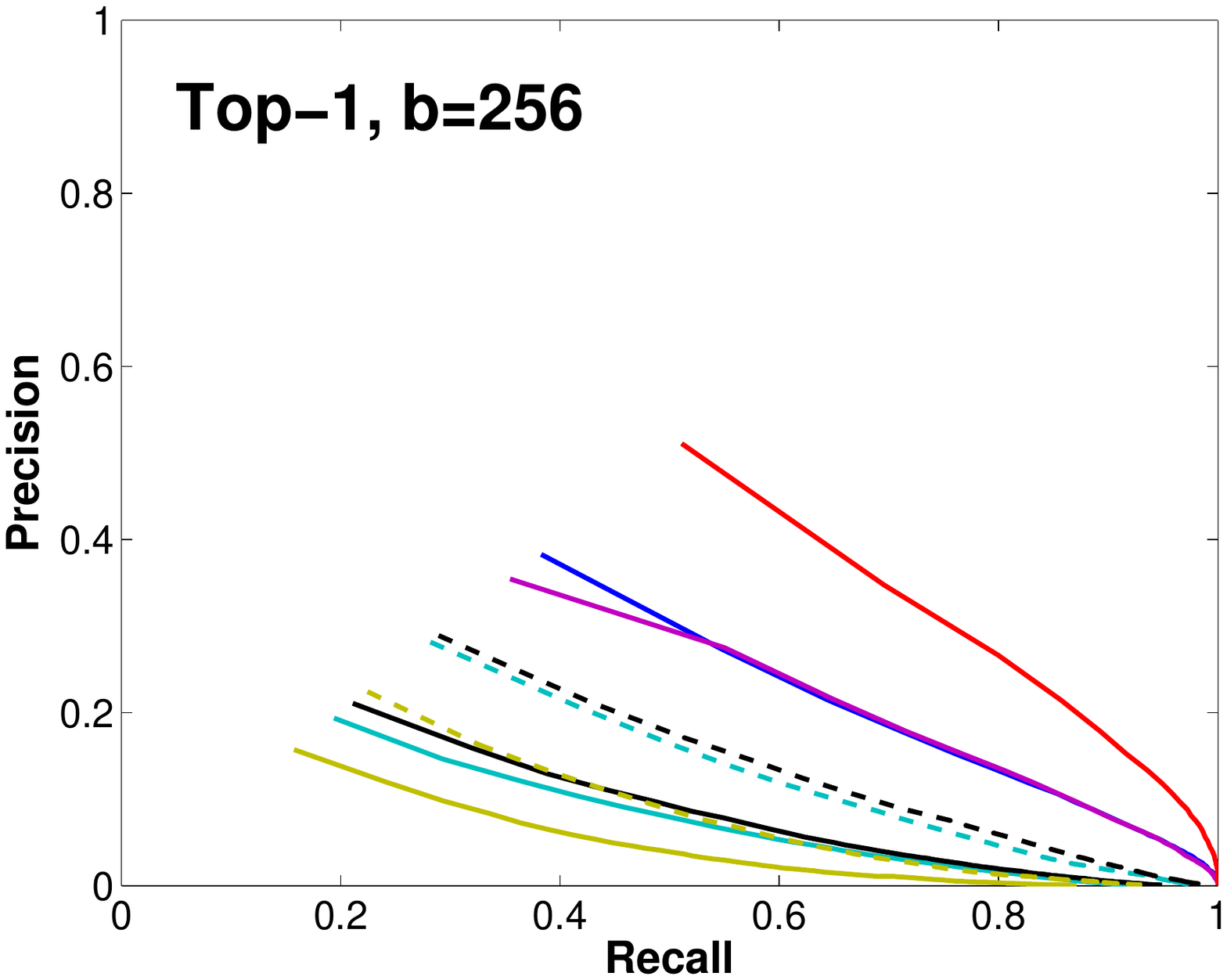} \hskip -3pt
\includegraphics[trim=0.9in 2.5in 0.9in 2.5in,clip,width=.248 \textwidth]{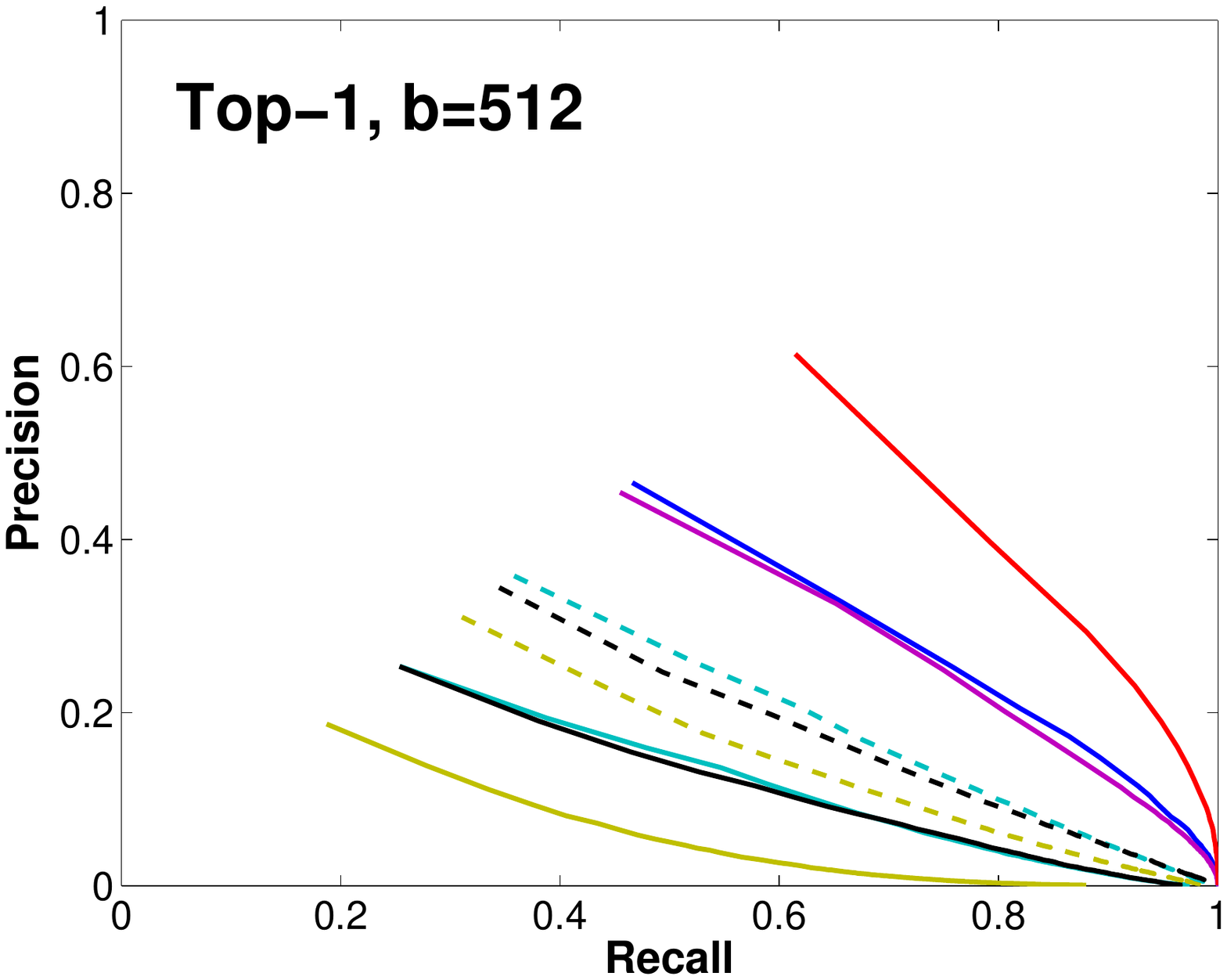}
\vskip -9pt
\includegraphics[trim=0.6in 2.5in 0.9in 2.5in,clip,width=.26 \textwidth]{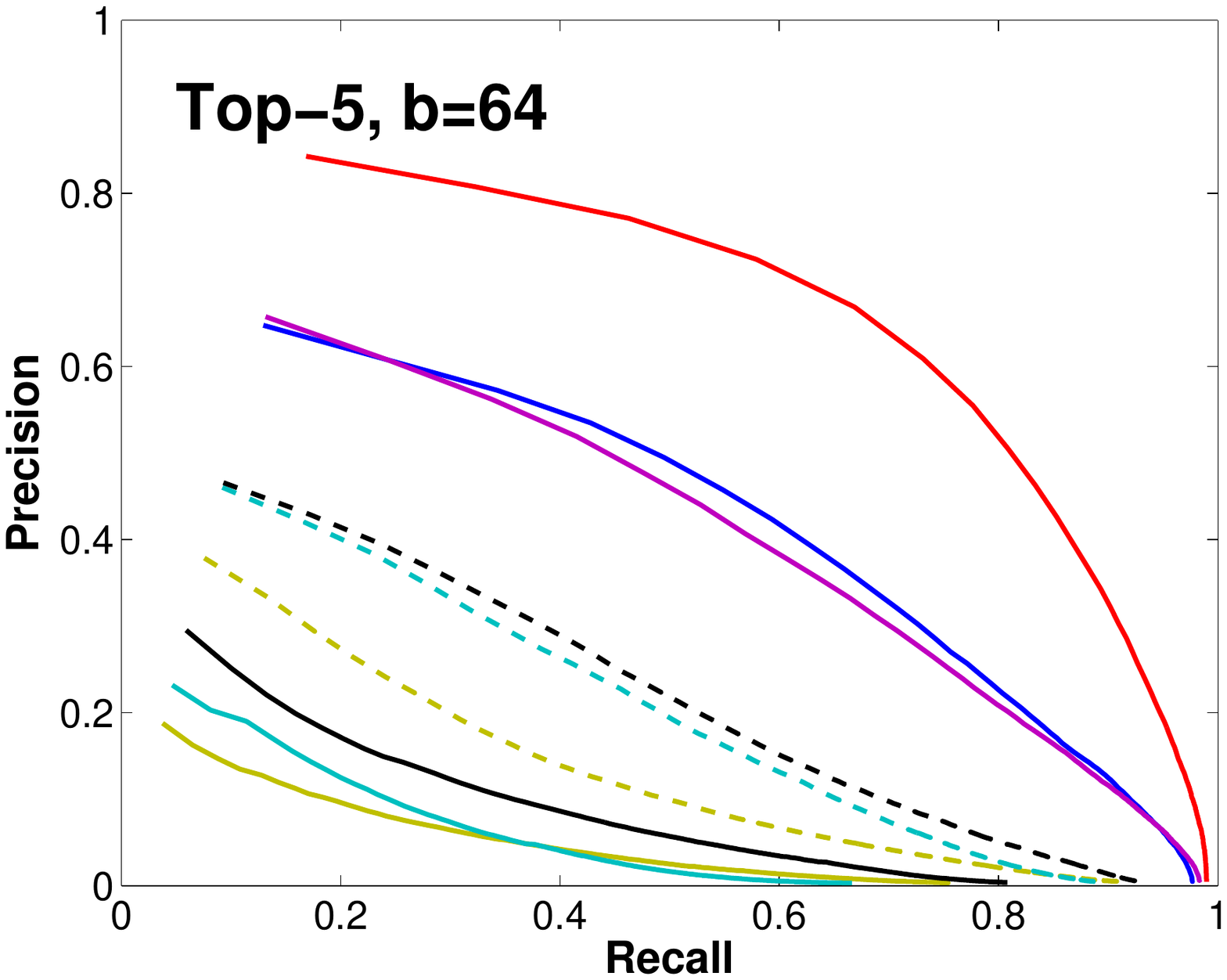} \hskip -3pt
\includegraphics[trim=0.9in 2.5in 0.9in 2.5in,clip,width=.248 \textwidth]{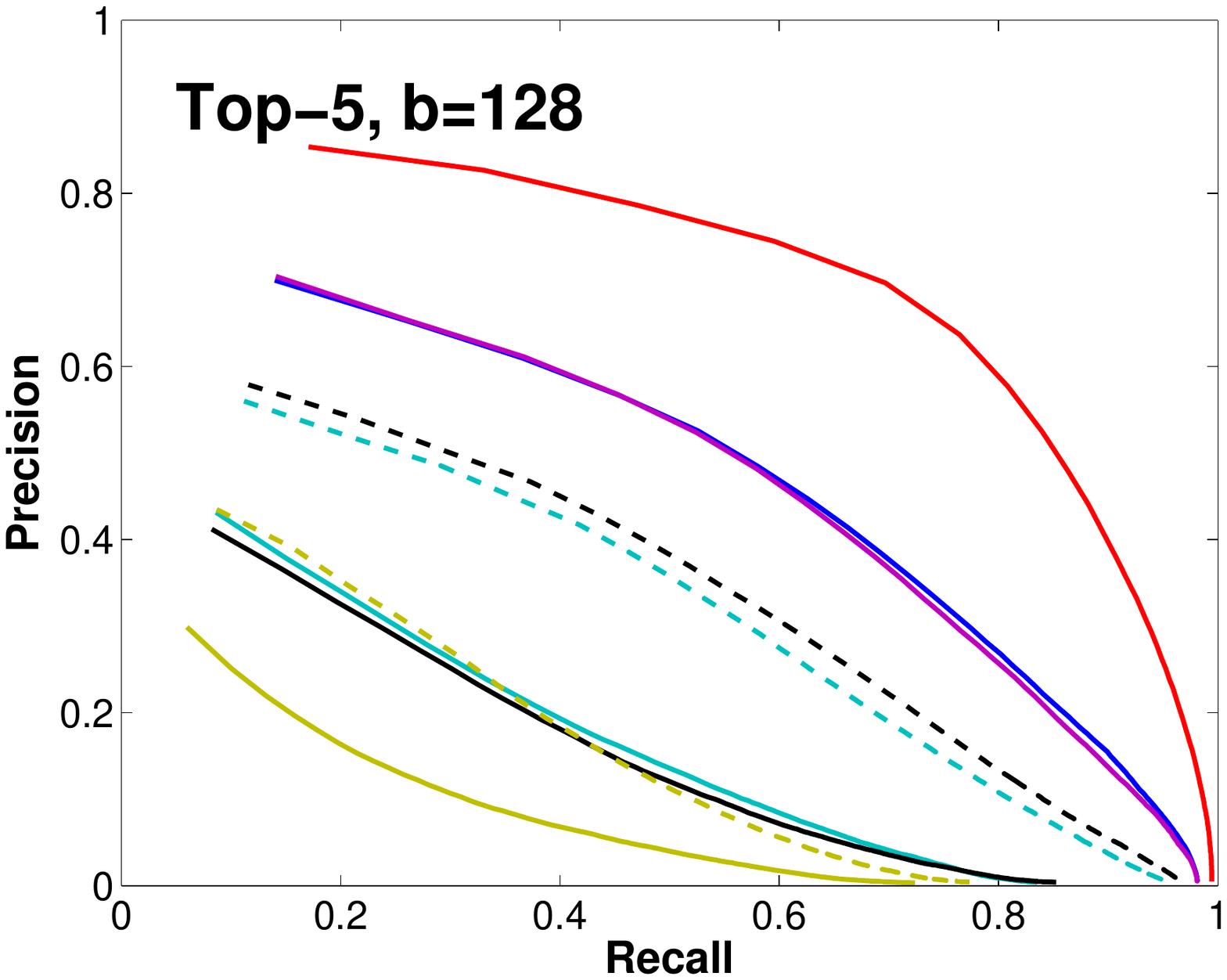} \hskip -3pt
\includegraphics[trim=0.9in 2.5in 0.9in 2.5in,clip,width=.248 \textwidth]{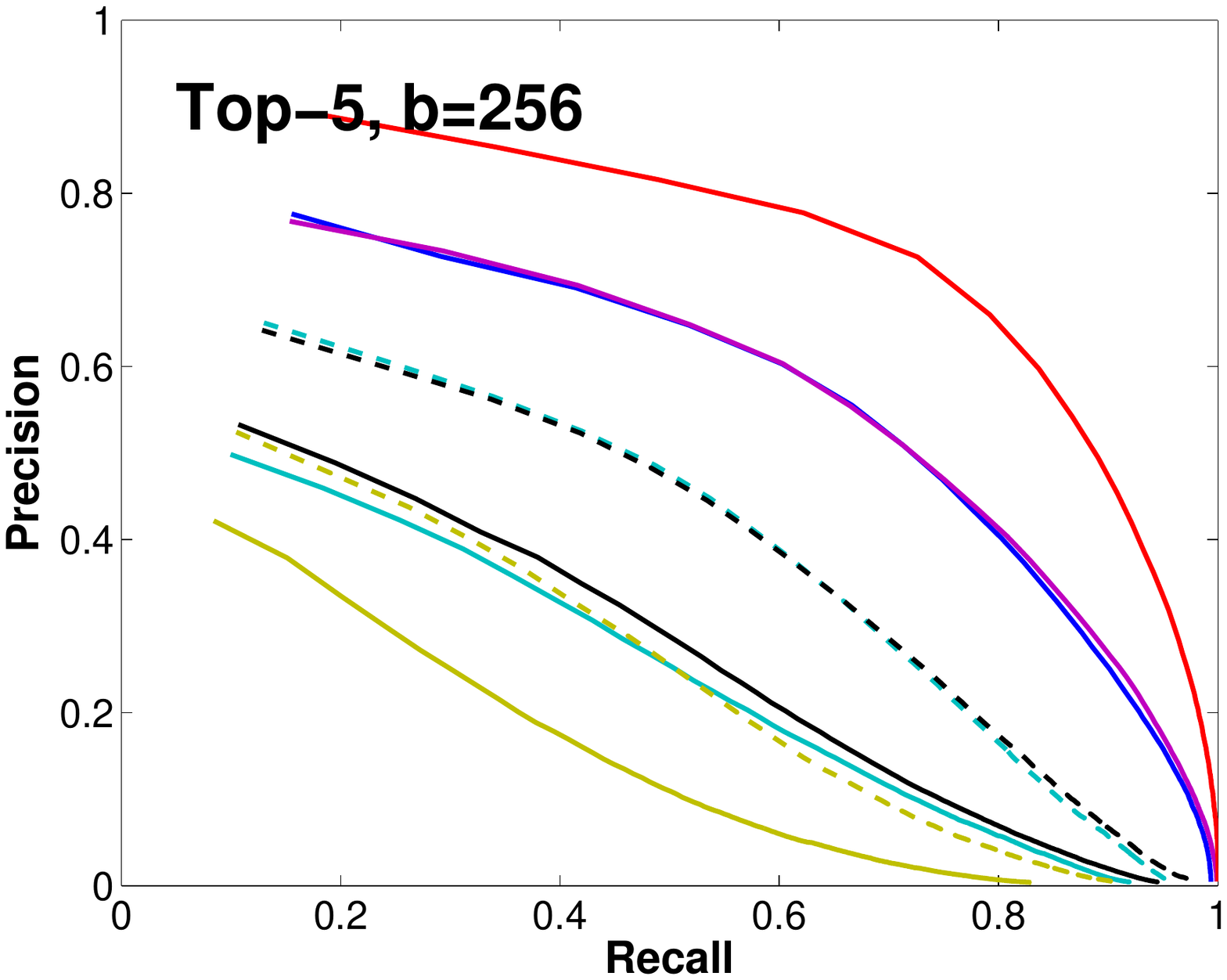} \hskip -3pt
\includegraphics[trim=0.9in 2.5in 0.9in 2.5in,clip,width=.248 \textwidth]{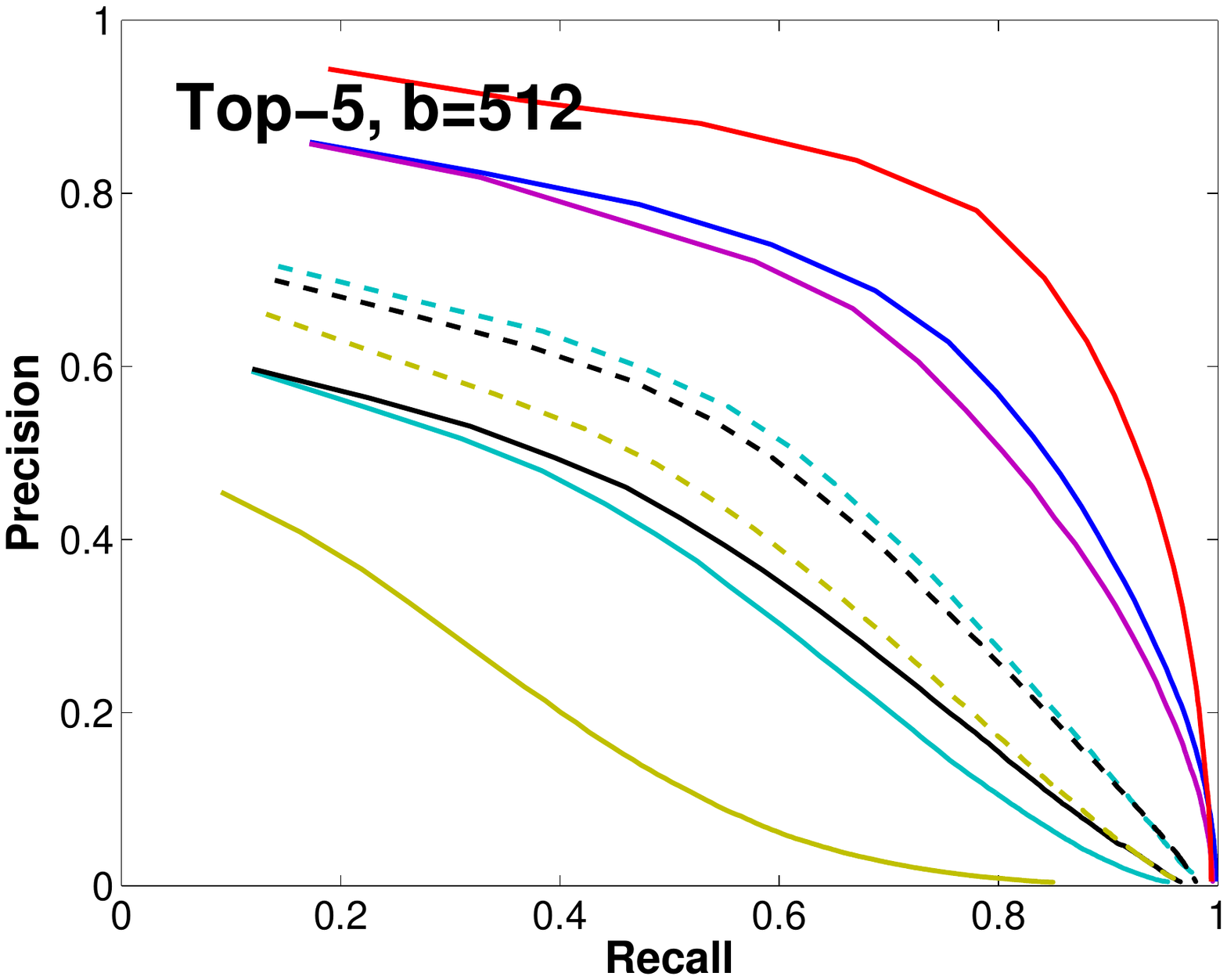}
\vskip -9pt
\includegraphics[trim=0.6in 2.5in 0.9in 2.5in,clip,width=.26 \textwidth]{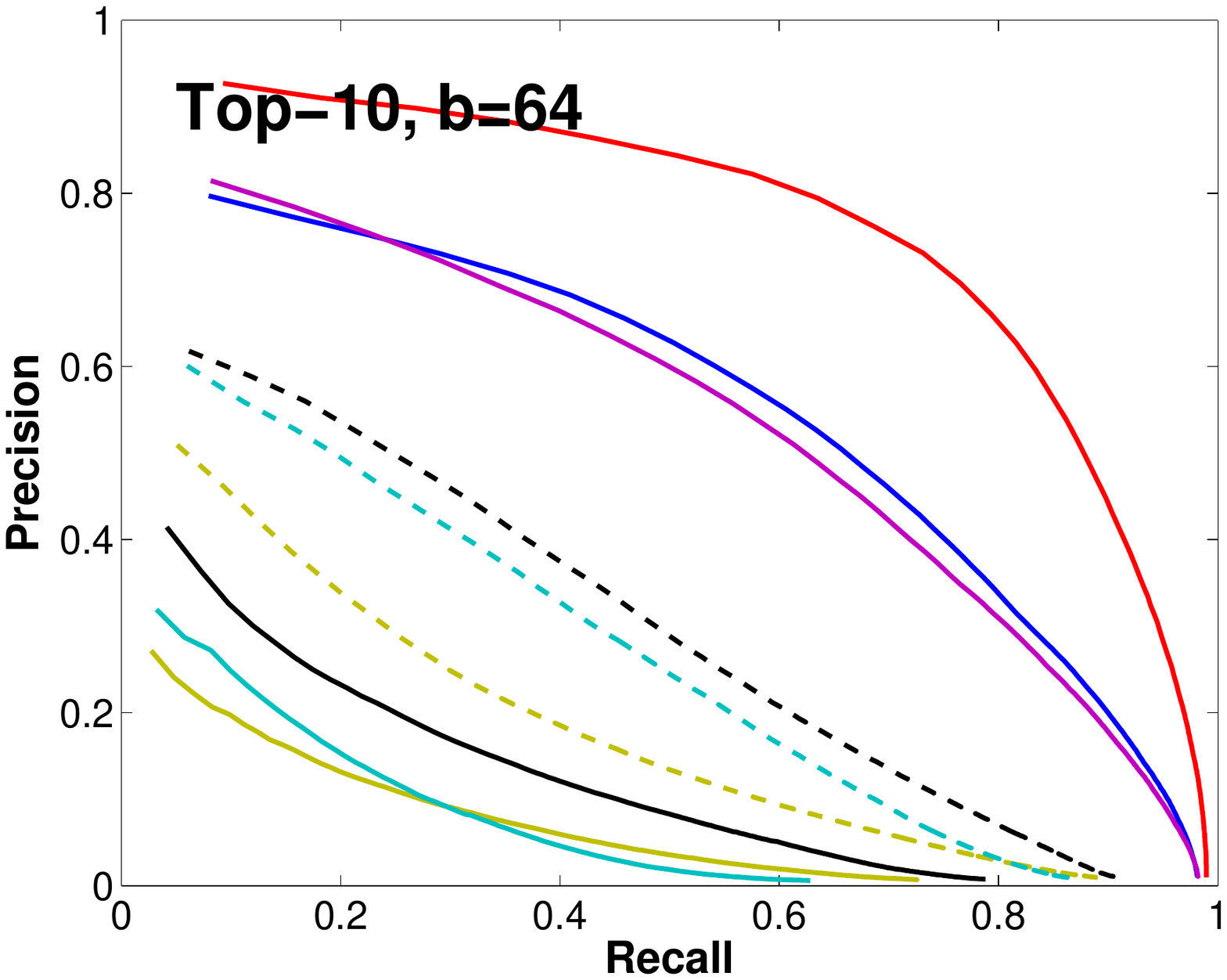} \hskip -3pt
\includegraphics[trim=0.9in 2.5in 0.9in 2.5in,clip,width=.248 \textwidth]{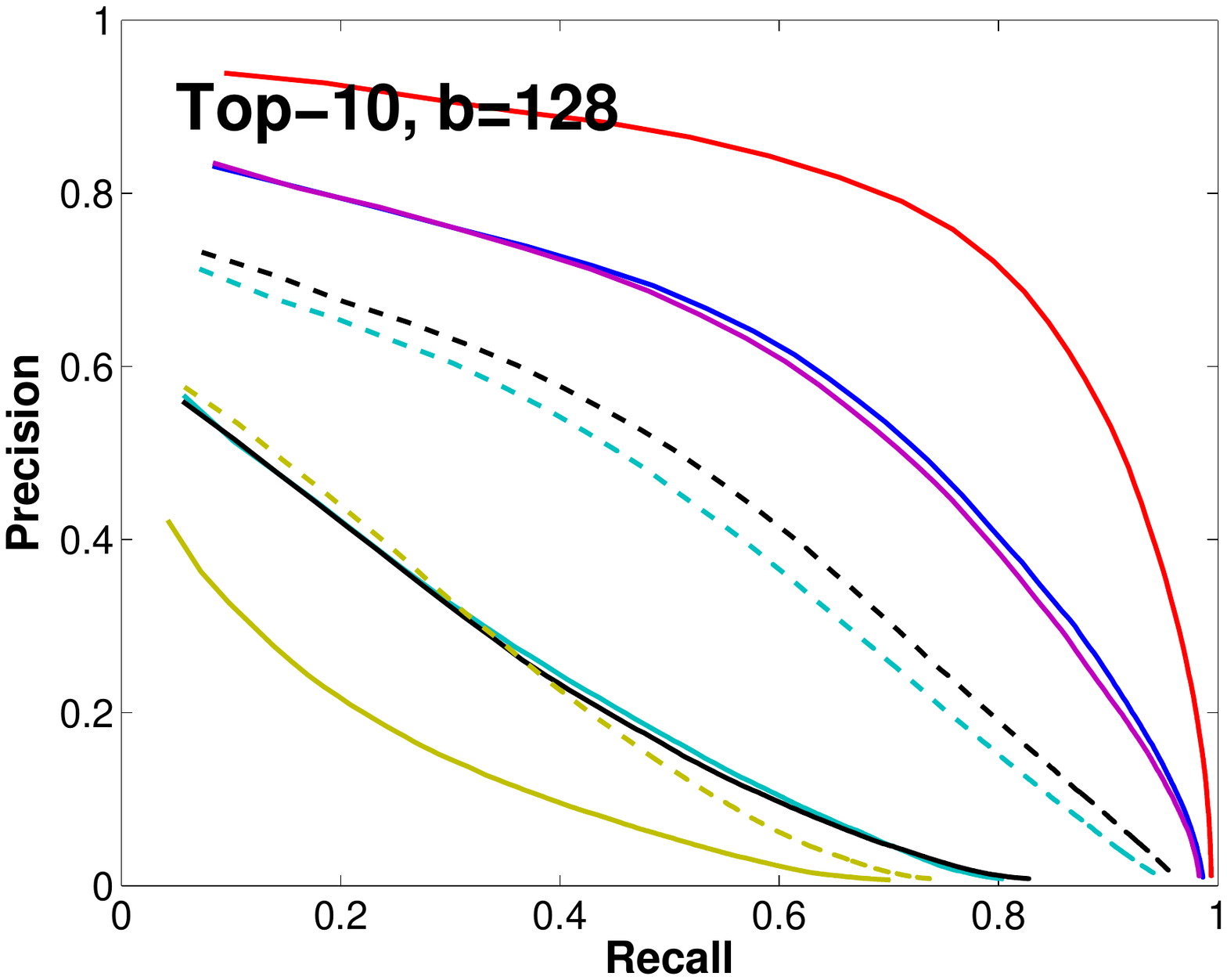} \hskip -3pt
\includegraphics[trim=0.9in 2.5in 0.9in 2.5in,clip,width=.248 \textwidth]{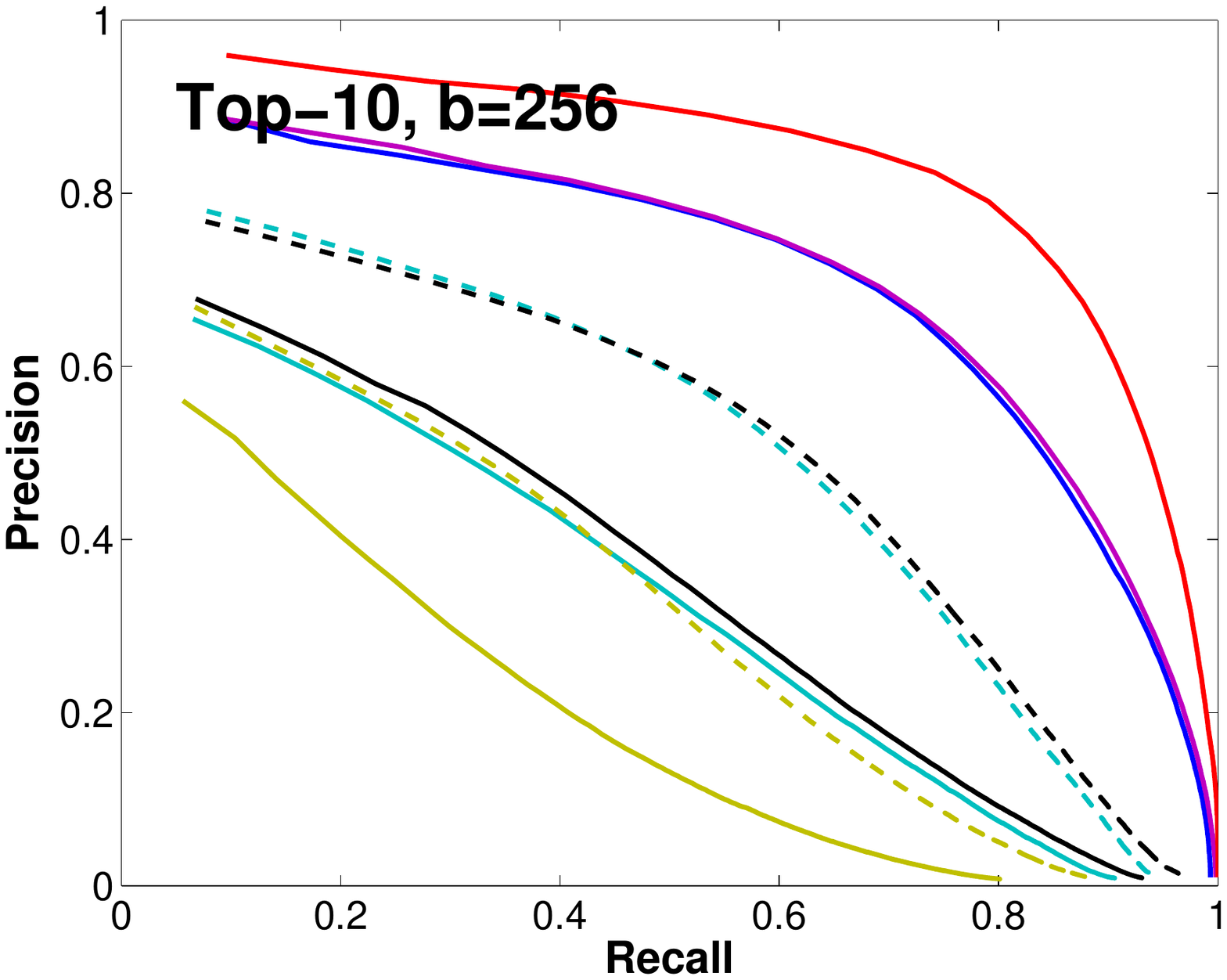} \hskip -3pt
\includegraphics[trim=0.9in 2.5in 0.9in 2.5in,clip,width=.248 \textwidth]{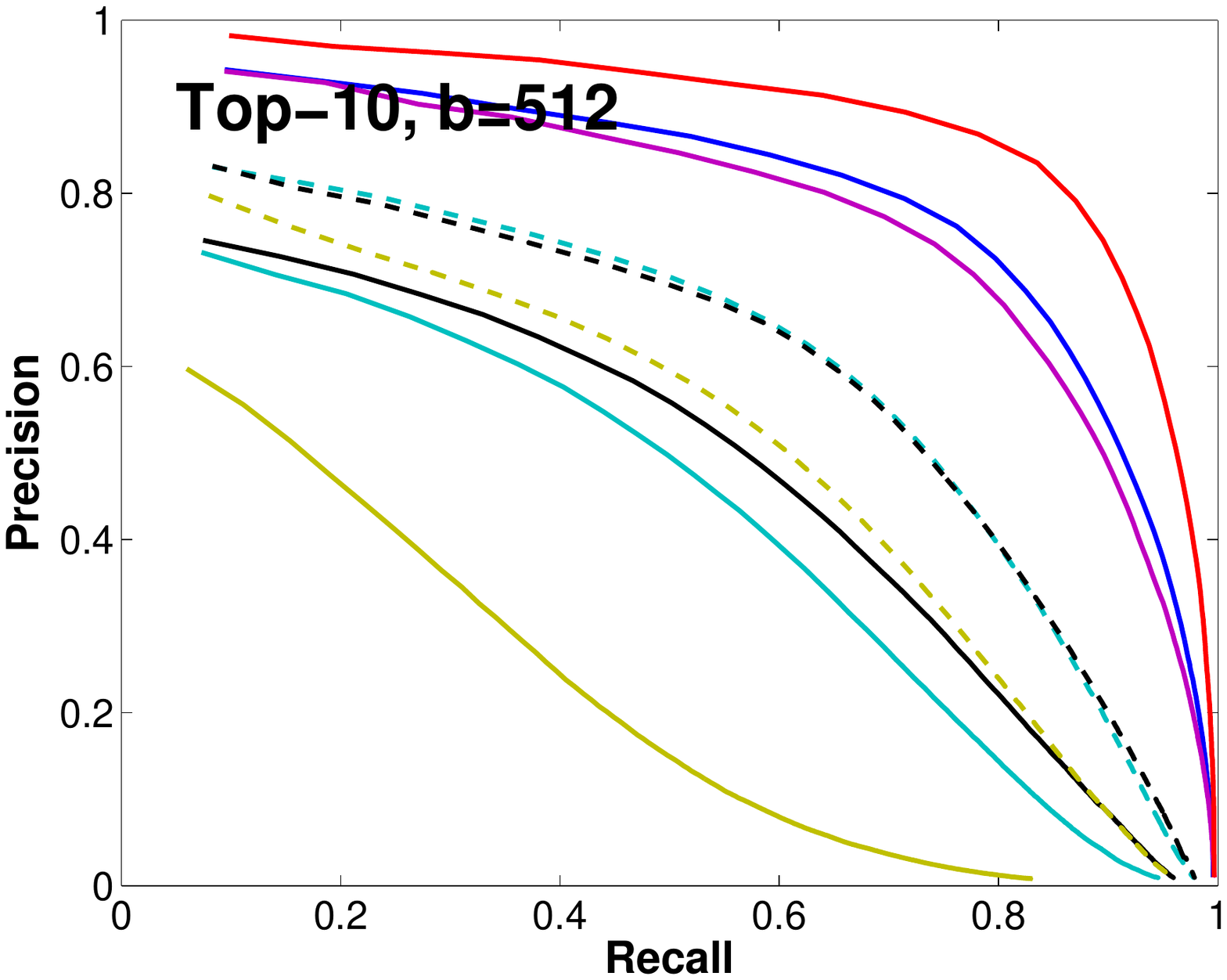}
\vskip -3pt
\subcaption{Movielens dataset}
\end{subfigure}
\begin{subfigure}[c]{1 \textwidth}
\includegraphics[trim=0.6in 2.5in 0.9in 2.5in,clip,width=.26 \textwidth]{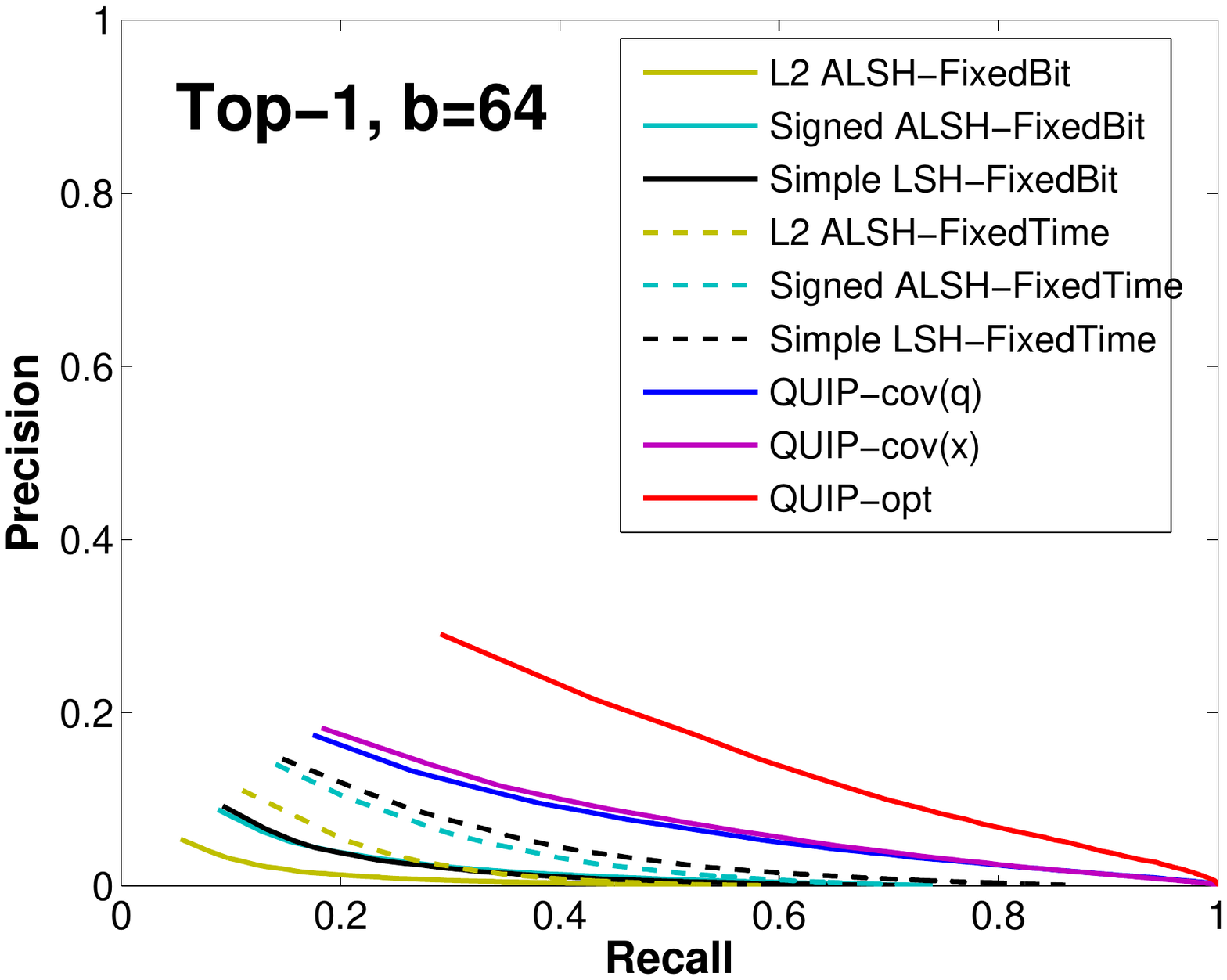} \hskip -3pt
\includegraphics[trim=0.9in 2.5in 0.9in 2.5in,clip,width=.248 \textwidth]{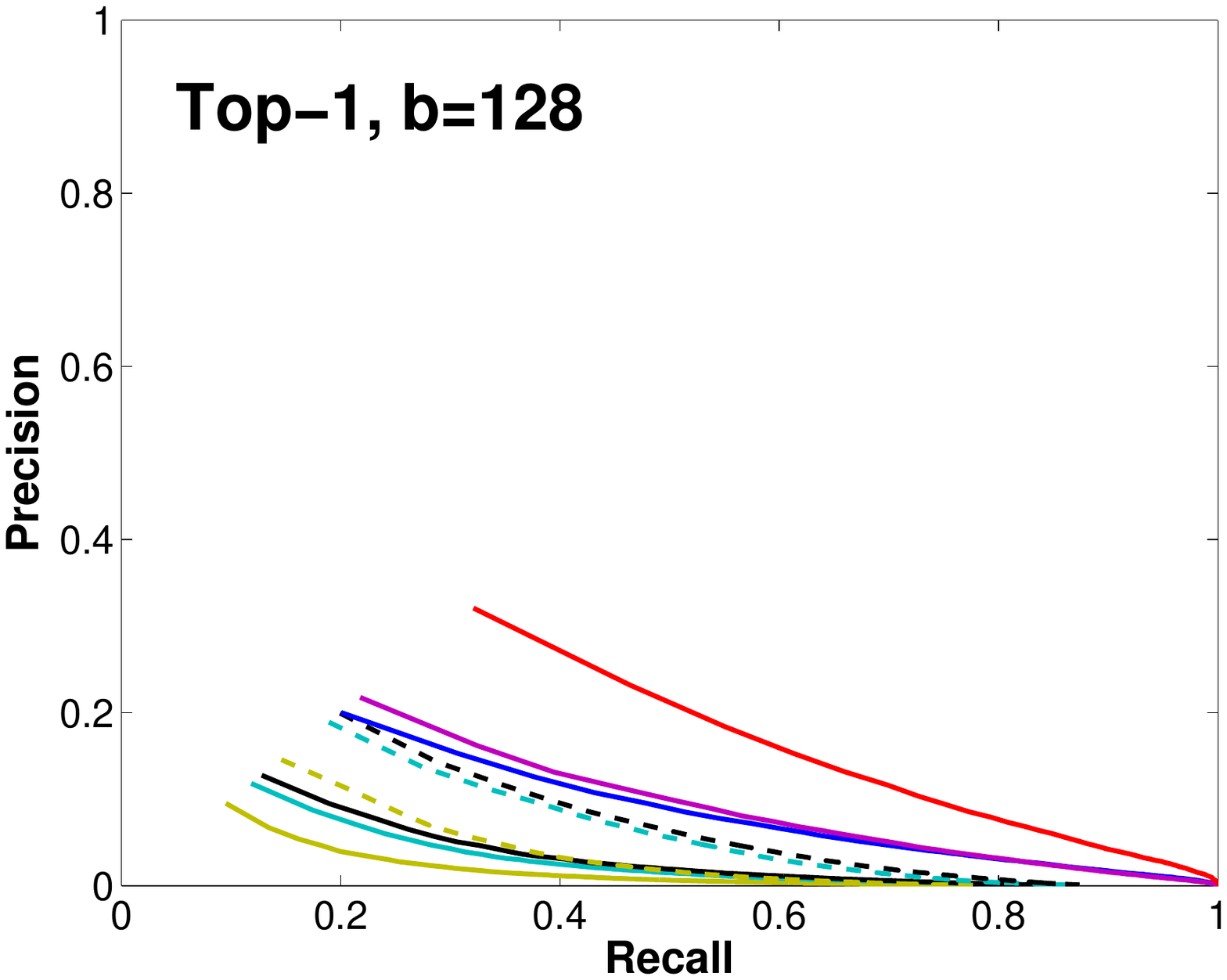} \hskip -3pt
\includegraphics[trim=0.9in 2.5in 0.9in 2.5in,clip,width=.248 \textwidth]{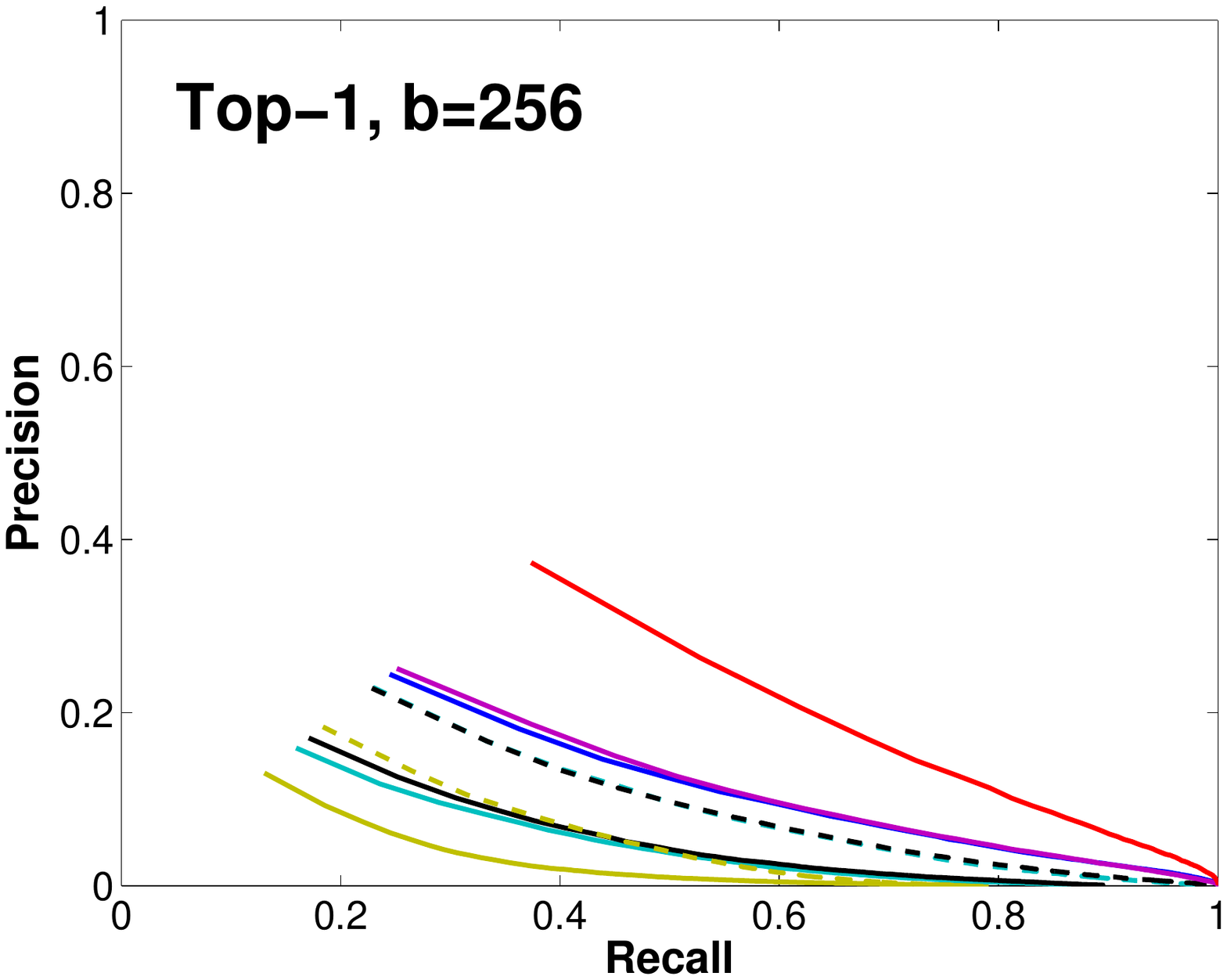} \hskip -3pt
\includegraphics[trim=0.9in 2.5in 0.9in 2.5in,clip,width=.248 \textwidth]{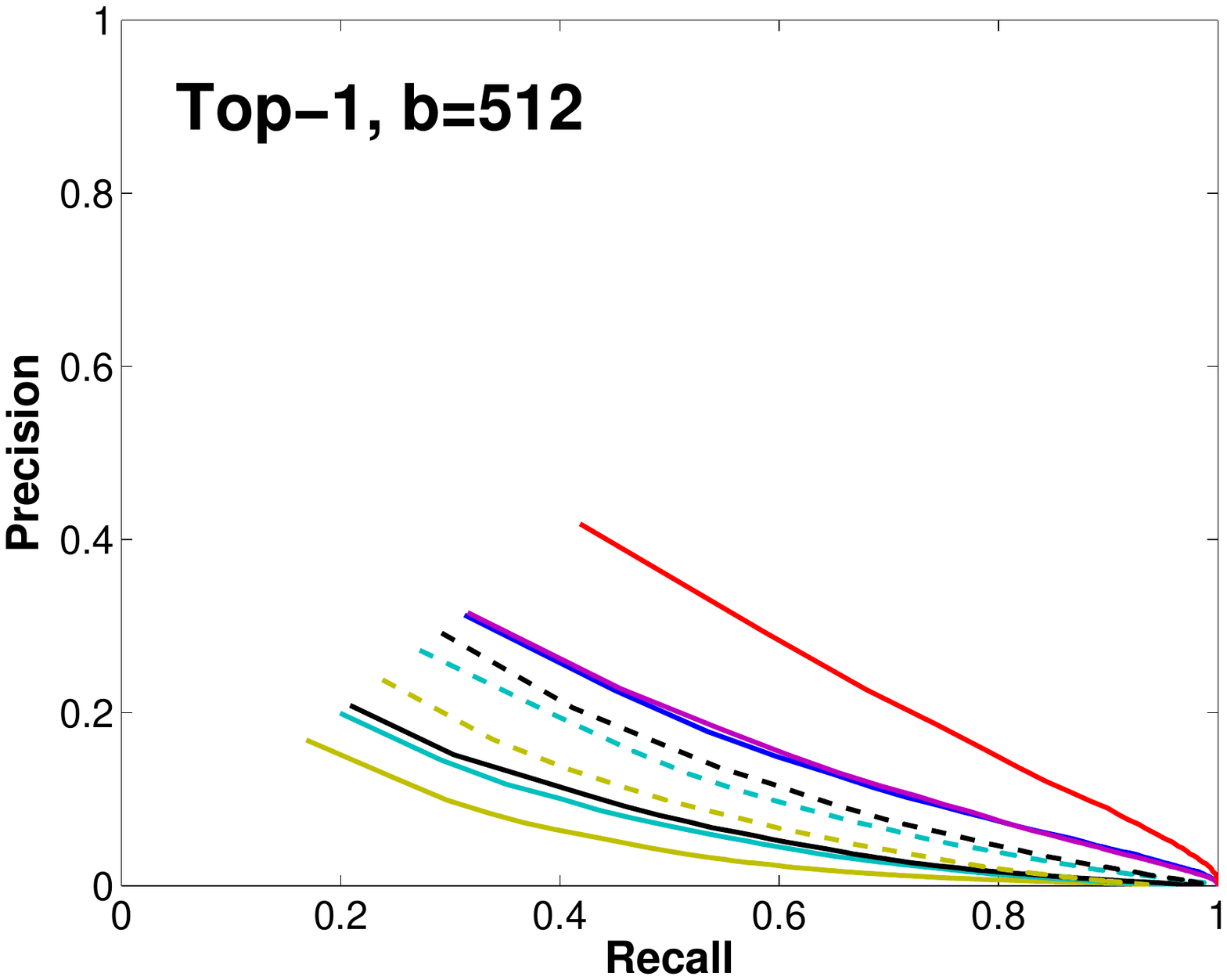}
\vskip -9pt
\includegraphics[trim=0.6in 2.5in 0.9in 2.5in,clip,width=.26 \textwidth]{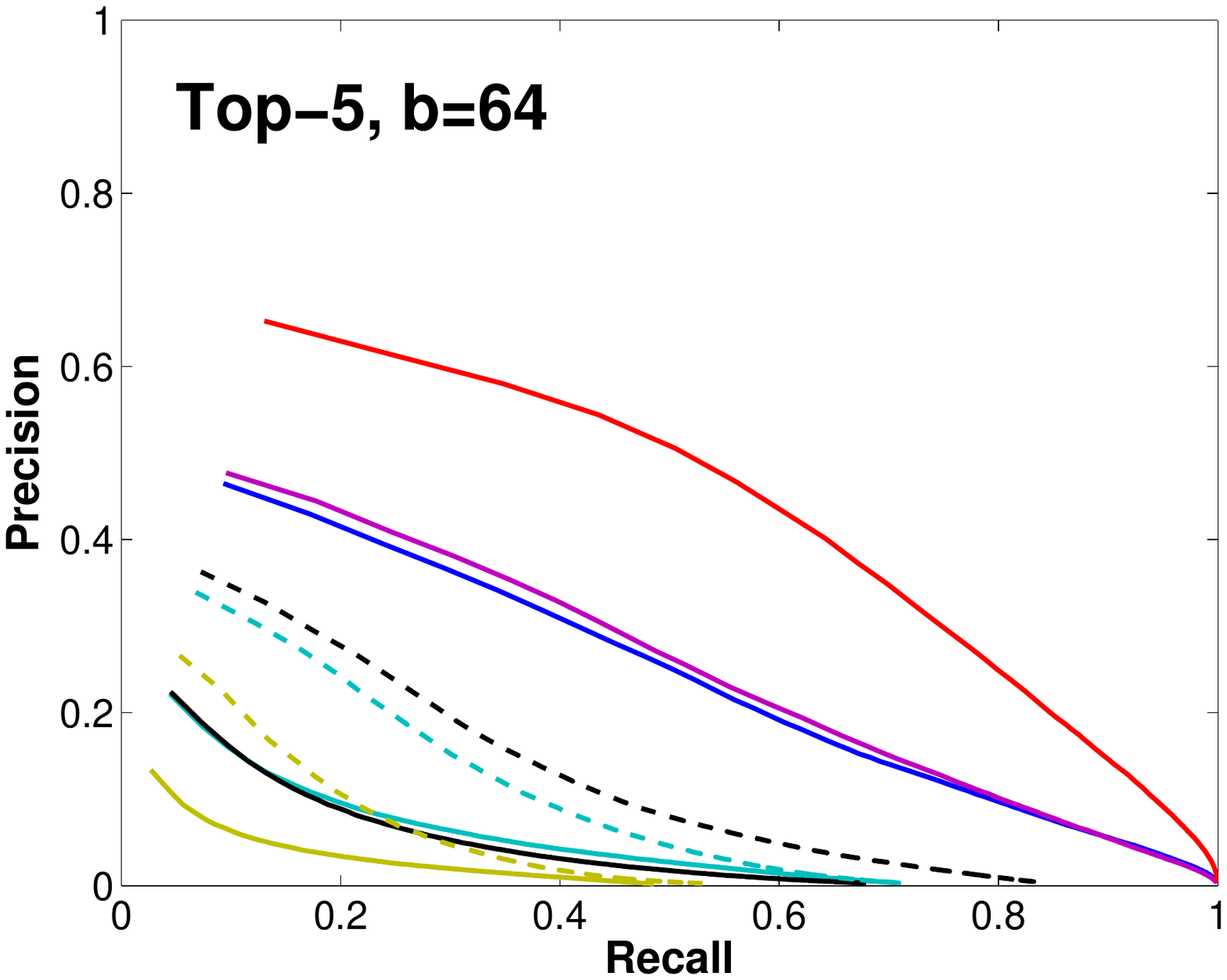} \hskip -3pt
\includegraphics[trim=0.9in 2.5in 0.9in 2.5in,clip,width=.248 \textwidth]{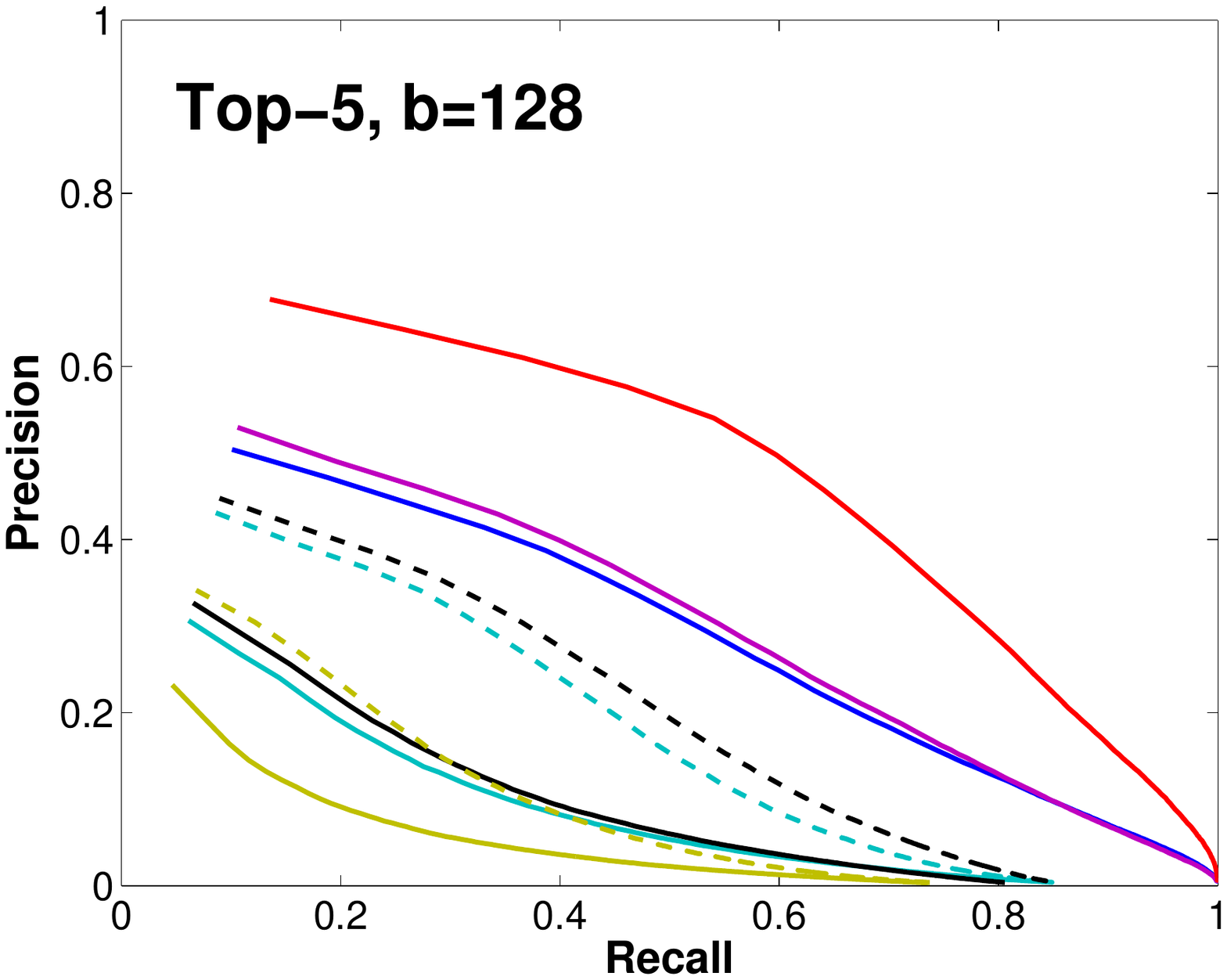} \hskip -3pt
\includegraphics[trim=0.9in 2.5in 0.9in 2.5in,clip,width=.248 \textwidth]{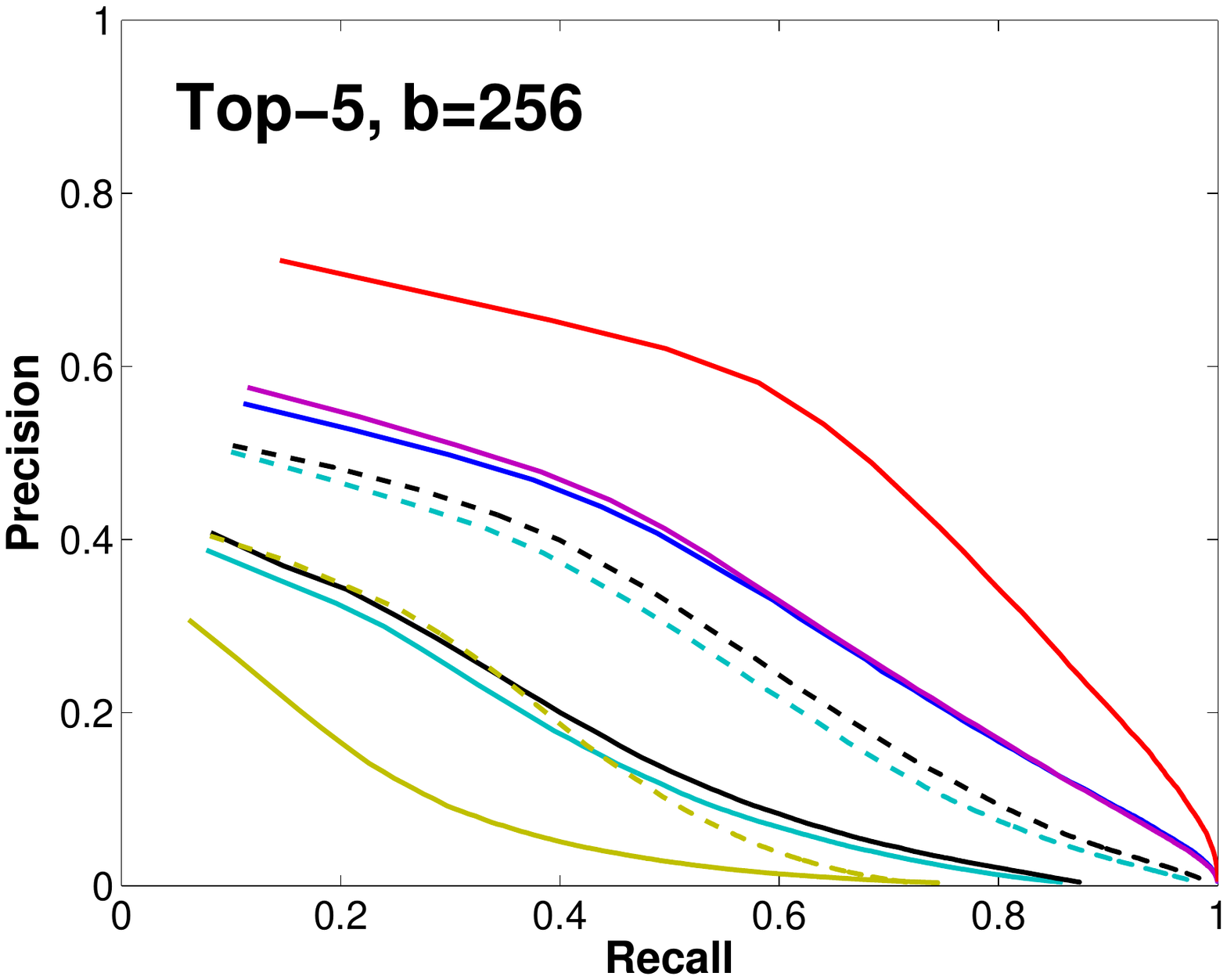} \hskip -3pt
\includegraphics[trim=0.9in 2.5in 0.9in 2.5in,clip,width=.248 \textwidth]{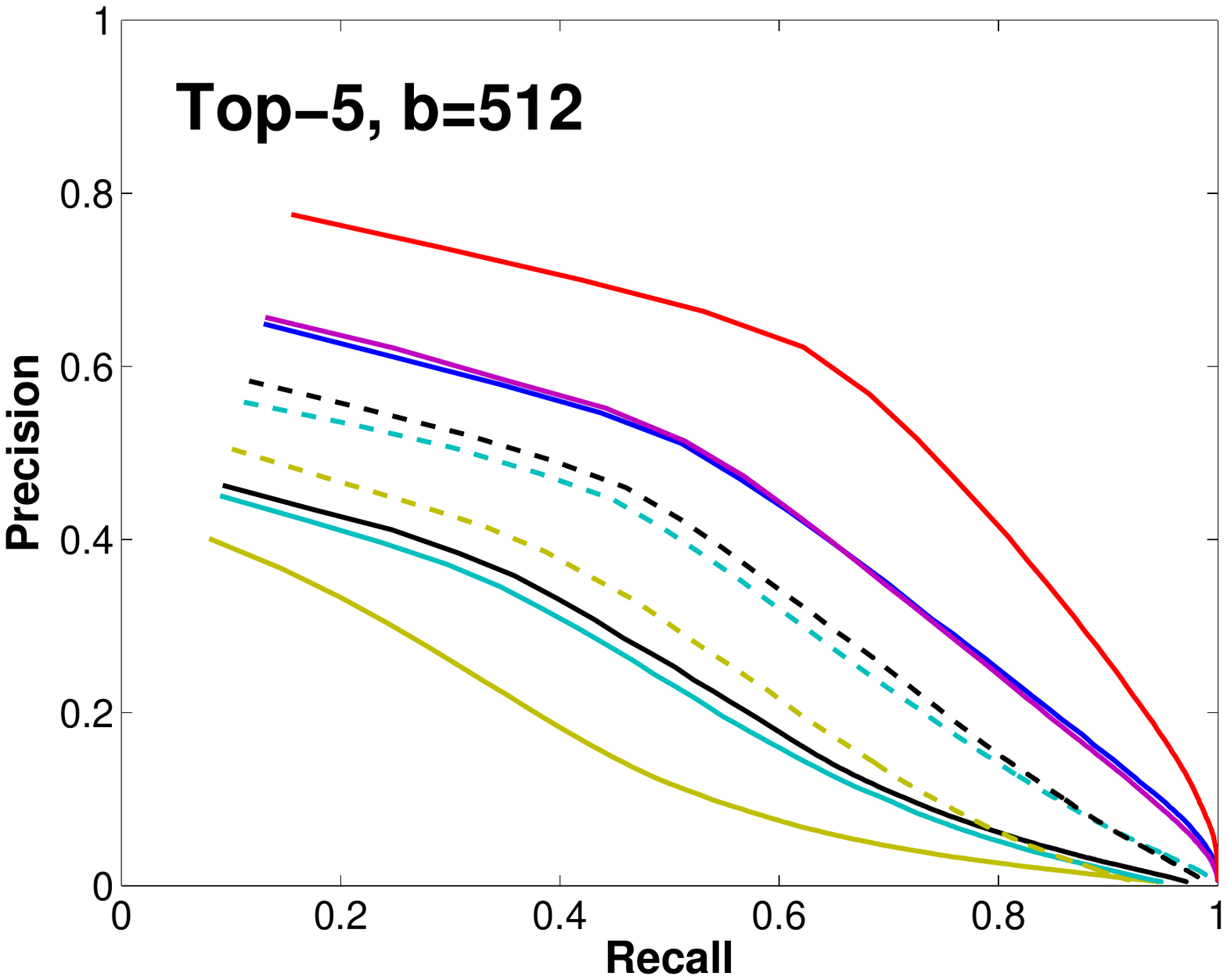}
\vskip -9pt
\includegraphics[trim=0.6in 2.5in 0.9in 2.5in,clip,width=.26 \textwidth]{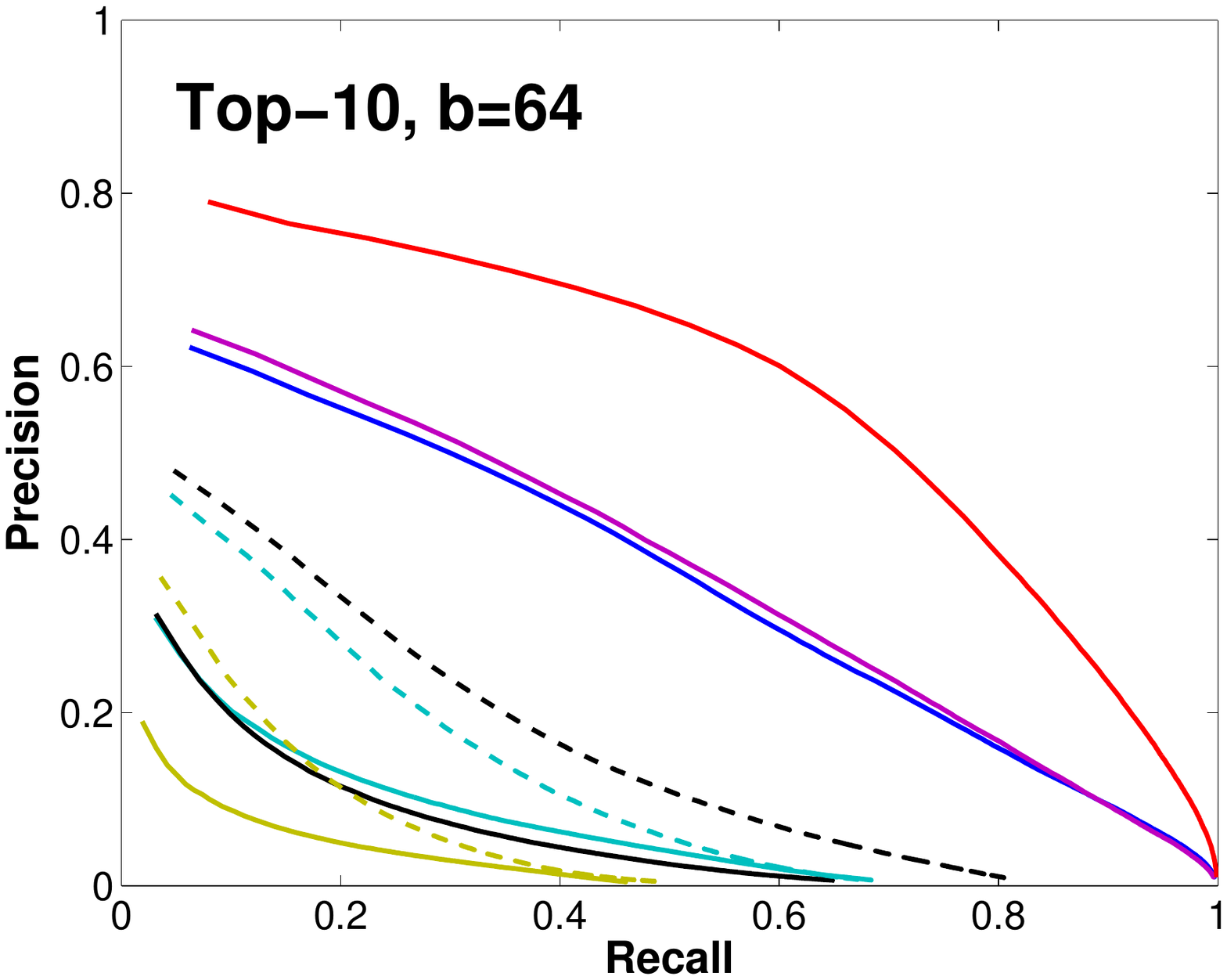} \hskip -3pt
\includegraphics[trim=0.9in 2.5in 0.9in 2.5in,clip,width=.248 \textwidth]{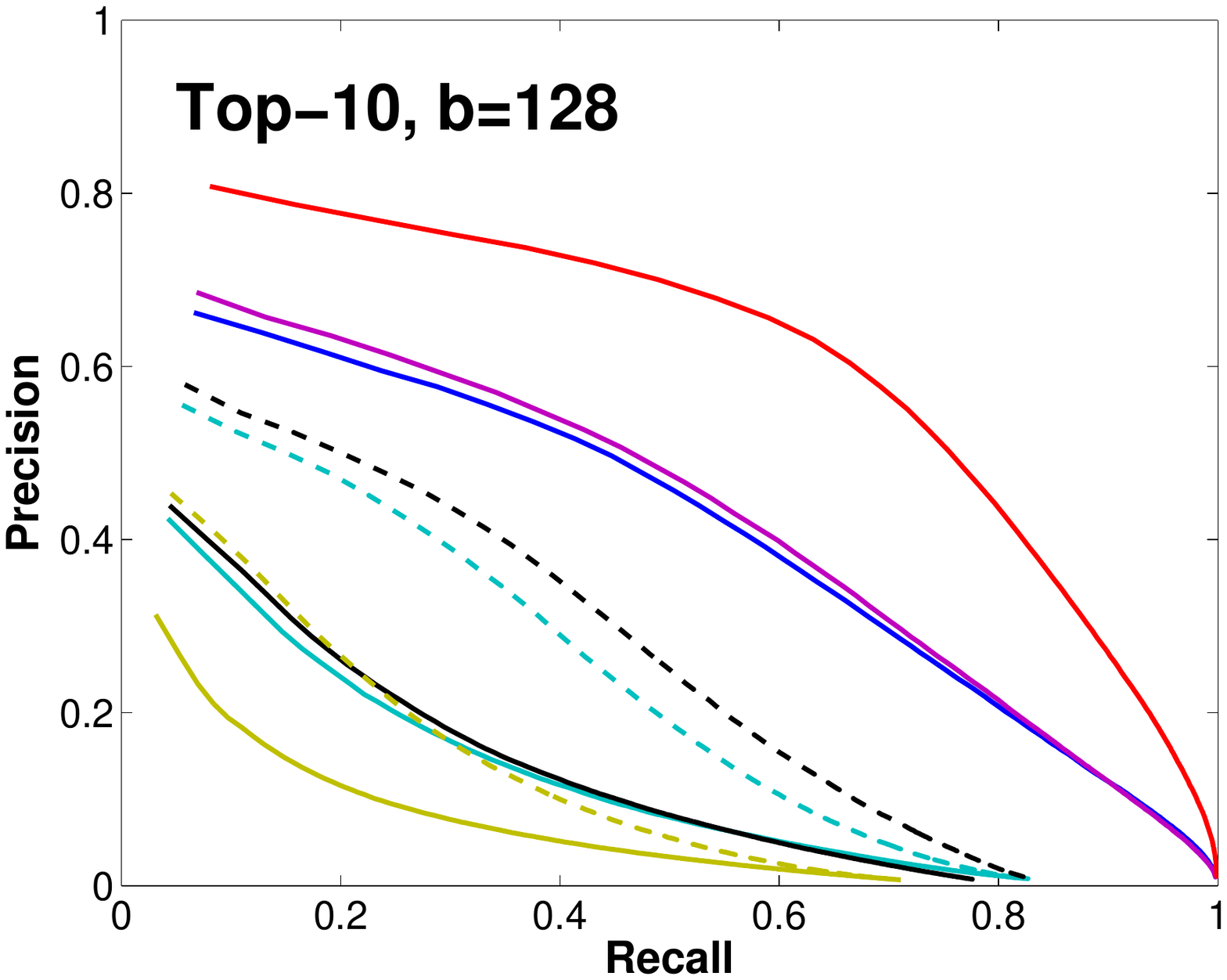} \hskip -3pt
\includegraphics[trim=0.9in 2.5in 0.9in 2.5in,clip,width=.248 \textwidth]{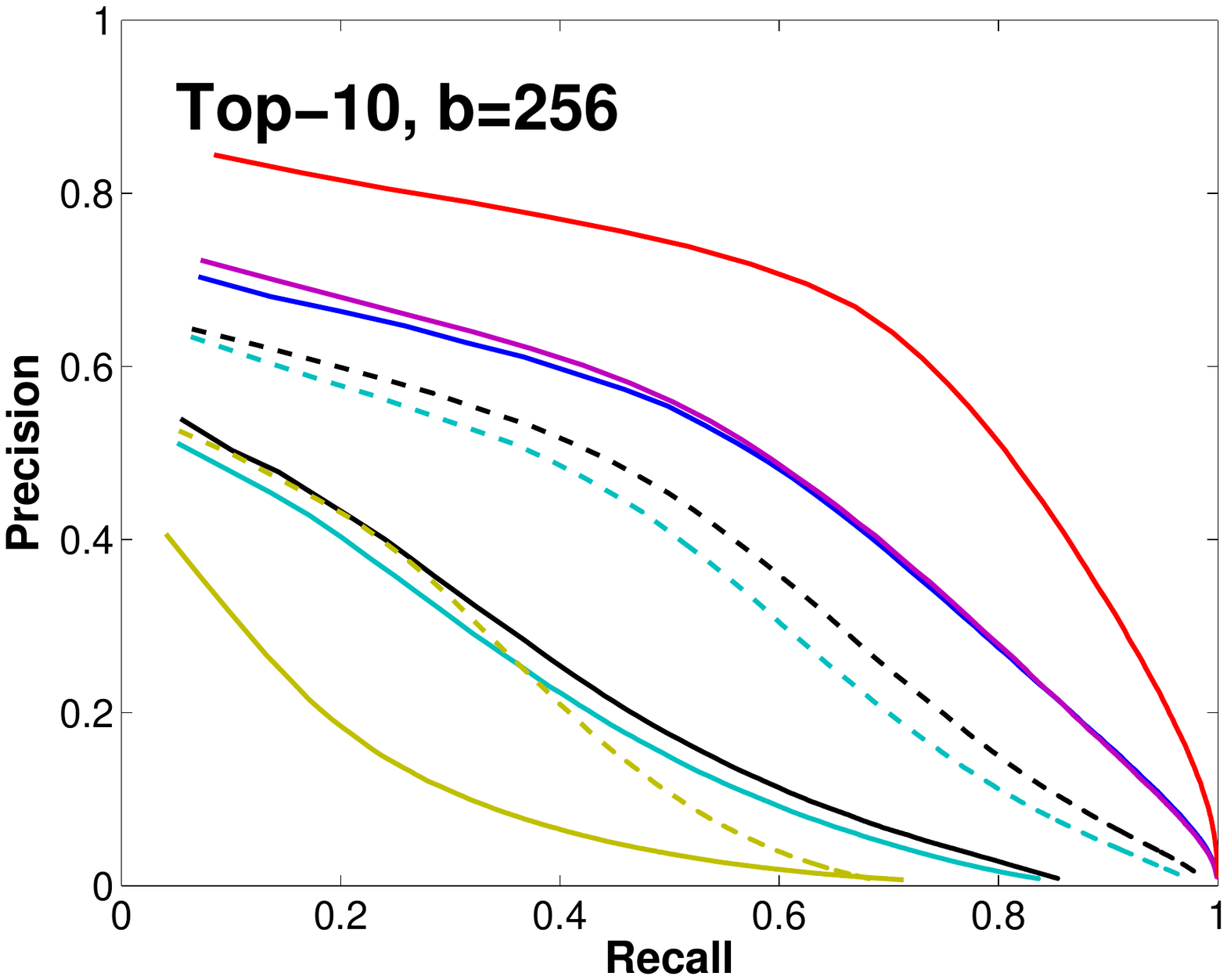} \hskip -3pt
\includegraphics[trim=0.9in 2.5in 0.9in 2.5in,clip,width=.248 \textwidth]{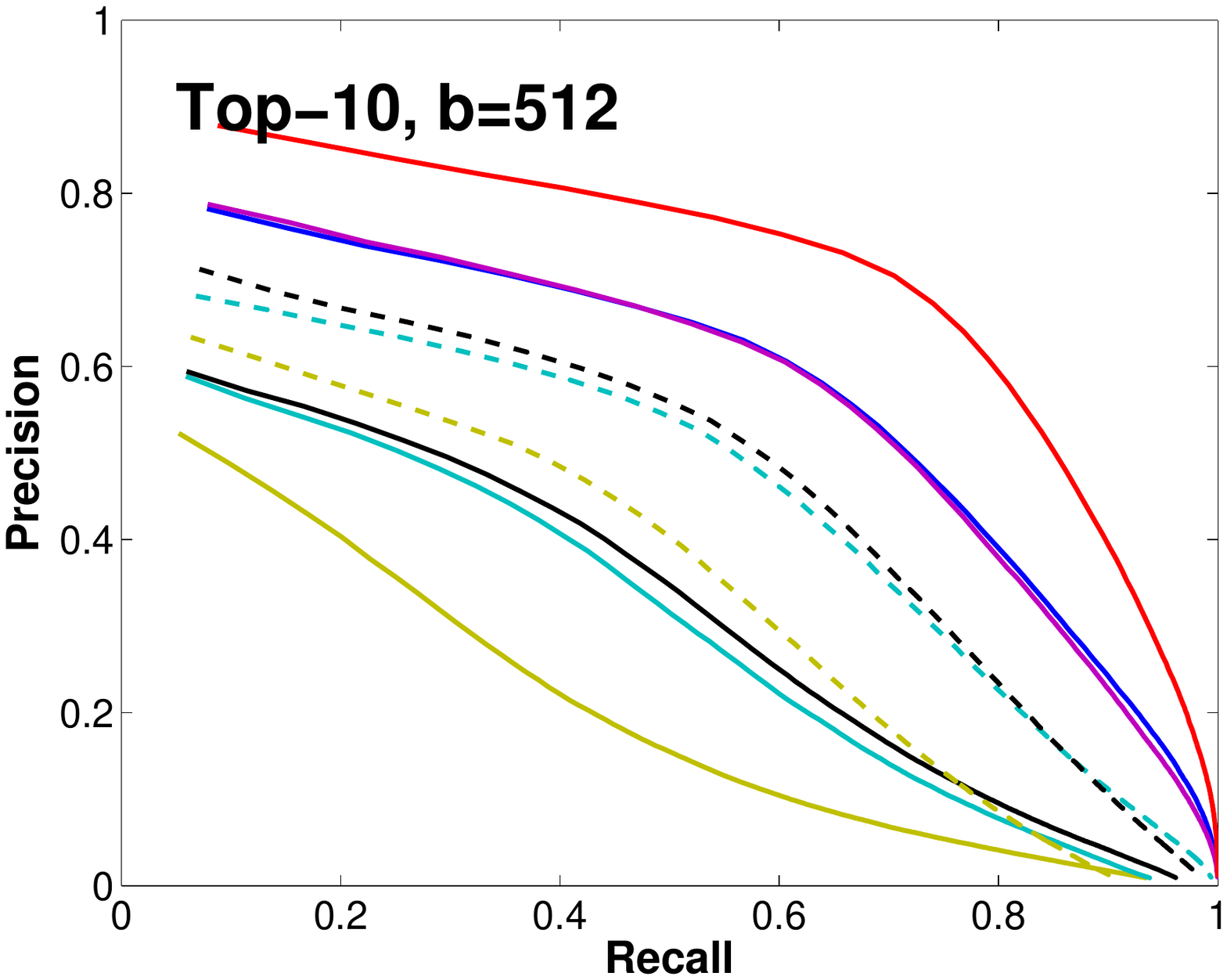}
\vskip -3pt
\subcaption{Netflix dataset}
\end{subfigure}
\begin{subfigure}[c]{1 \textwidth}
\end{subfigure}
\caption{\label{fig:fixed} 
Precision Recall curves (higher is better) for different methods on Movielens and Netflix datasets, retrieving Top-1, 5 and 10 items. \textbf{Baselines:} \emph{Signed ALSH}~\cite{shrivastava014e}, \emph{L2 ALSH}~\cite{shrivastava2014asymmetric} and \emph{Simple LSH}~\cite{neyshabur2014simpler}. \textbf{Proposed Methods:} \emph{QUIP-cov(x)}, \emph{QUIP-cov(q)}, \emph{QUIP-opt}. Curves for fixed bit experiments are plotted in solid line for both the baselines and proposed methods, where the number of bits used are $\mathbf{b = 64, 128, 256, 512}$ respectively, from left to right. Curves for fixed time experiment are plotted in dashed lines. The fixed time plots are the same as the fixed bit plots for the proposed methods. For the baseline methods, the number of bits used in fixed time experiments are $\mathbf{b = 192, 384, 768, 1536}$ respectively, so that their running time is comparable with that of the proposed methods.}
\end{figure}

We conducted experiments with 4 datasets which are summarized below:
\begin{description}[leftmargin=10pt,itemsep=-0.3ex, topsep=0pt]
\item[Movielens] This dataset consists of user ratings collected by the MovieLens site from web users. We use the same SVD setup as described in the ALSH paper~\cite{shrivastava2014asymmetric} and extract 150 latent dimensions from SVD results. This dataset contains 10,681 database vectors and 71,567 query vectors.

\item[Netflix] The Netflix dataset comes from the Netflix Prize challenge~\cite{Bennett07thenetflix}. It contains 100,480,507 ratings that users gave to Netflix movies. We process it in the same way as suggested by~\cite{shrivastava2014asymmetric}. That leads to 300  dimensional data. There are 17,770 database vectors and 480,189 query vectors.

\item[ImageNet] This dataset comes from the state-of-the-art GoogLeNet~\cite{szegedy2014cvpr} image classifier trained on ImageNet\footnote{The original paper ensembled 7 models and used 144 different crops. In our experiment, we focus on one global crop using one model.}. The goal is to speed up the maximum dot-product search in the last i.e., classification layer. Thus, the weight vectors for different categories form the database while the query vectors are the last hidden layer embeddings from the ImageNet validation set. The data has 1025 dimensions (1024 weights and 1 bias term). There are 1,000 database and 49,999 query vectors.

\item[VideoRec] This dataset consists of embeddings of user interests~\cite{davidson2010recsys}, trained via a deep neural network to predict a set of relevant videos for a user. The number of videos in the repository is 500,000. The network is trained with a multi-label logistic loss. As for the \emph{ImageNet} dataset, the last hidden layer embedding of the network is used as query vector, and the classification layer weights are used as database vectors. The goal is to speed up the maximum dot product search between a query and 500,000 database vectors. Each database vector has 501 dimensions (500 weights and 1 bias term). The query set contains 1,000 vectors.
\end{description}
Following~\cite{shrivastava2014asymmetric}, we focus on retrieving Top-1, 5 and 10 highest inner product neighbors for Movielens and Netflix experiments. For ImageNet dataset, we retrieve top-5 categories as common in the literature. For the VideoRec dataset, we retrieve Top-50 videos for recommendation to a user. We experiment with three variants our technique: (1) \emph{\textbf{QUIP-cov(x)}}: uses only database vectors at training, and replaces $\Sigma_{\mathbf{Q}}$ by $\Sigma_{X}$ in the \emph{k-Means} like codebook learning in Sec.~\ref{sec:kmeans}, (2) \emph{\textbf{QUIP-cov(q)}}: uses $\Sigma_{\mathbf{Q}}$ estimated from a held-out exemplar query set for \emph{k-Means} like codebook learning, and (3) \emph{\textbf{QUIP-opt}}: uses full optimization based quantization (Sec.~\ref{sec:opt}). We compare the performance (precision-recall curves) with 3 state-of-the-art methods: (1) \emph{\textbf{Signed ALSH}}~\cite{shrivastava2014asymmetric}, (2) \emph{\textbf{L2 ALSH}}~\cite{shrivastava2014asymmetric}\footnote{The recommended parameters $m=3, U_0=0.85, r=2.5$ were used in the implementation.}; and (3) \emph{\textbf{Simple LSH}}~\cite{neyshabur2014simpler}. We also compare against the PCA-tree version adapted to inner product search as proposed in~\cite{bachrach2014speeding}, which has shown better results than IP-tree~\cite{ram2012kdd}. The proposed quantization based methods perform much better than PCA-tree as shown in the appendix. 
% Additional, we include a fourth baseline 4) Simple AH, based on transformation in Yoram et al.~\cite{bachrach2014speeding}, which also translates MIPS to L$_2$NN, and use product quantization directly in the transformed space. 

We conduct two sets of experiments: (i) \emph{fixed bit} - the number of bits used by all the techniques is kept the same, (ii) \emph{fixed time} - the time taken by all the techniques is fixed to be the same. In the fixed bit experiments, we fix the number of bits to be $b = 64, 128, 256, 512$. 
%For \emph{Signed ALSH}, \emph{L2 ALSH}\footnote{For \emph{L2 ALSH}, we used the same number of integer-valued hashes as the number of bits for other techniques, giving it slight advantage over the other methods.} and \emph{Simple LSH}, we use the same number of hash functions as the number of bits. 
%For \emph{QUIP-cov(x)}, \emph{QUIP-cov(q)} and \emph{QUIP-opt},
For all the \emph{QUIP} variants, the codebook size for each subspace, C, was fixed to be $256$, leading to a 8-bit representation of a database vector in each subspace. The number of subspaces (i.e., blocks) was varied to be $k=8,16,32,64$ leading to $64, 128, 256, 512$ bit representation, respectively. For the fixed time experiments, we first note that the proposed \emph{QUIP} variants use table lookup based distance computation while the LSH based techniques use POPCNT-based Hamming distance computation. Depending on the number of bits used, we found POPCNT to be 2 to 3 times faster than table lookup. Thus, in the fixed-time experiments, we increase the number of bits for LSH-based techniques by 3 times to ensure that the time taken by all the methods is the same.

\begin{figure}
\vskip -12pt
\centering
\begin{subfigure}[c]{1 \textwidth}
\centering
\includegraphics[trim=0.6in 2.5in 0.9in 2.5in,clip,width=.26 \textwidth]{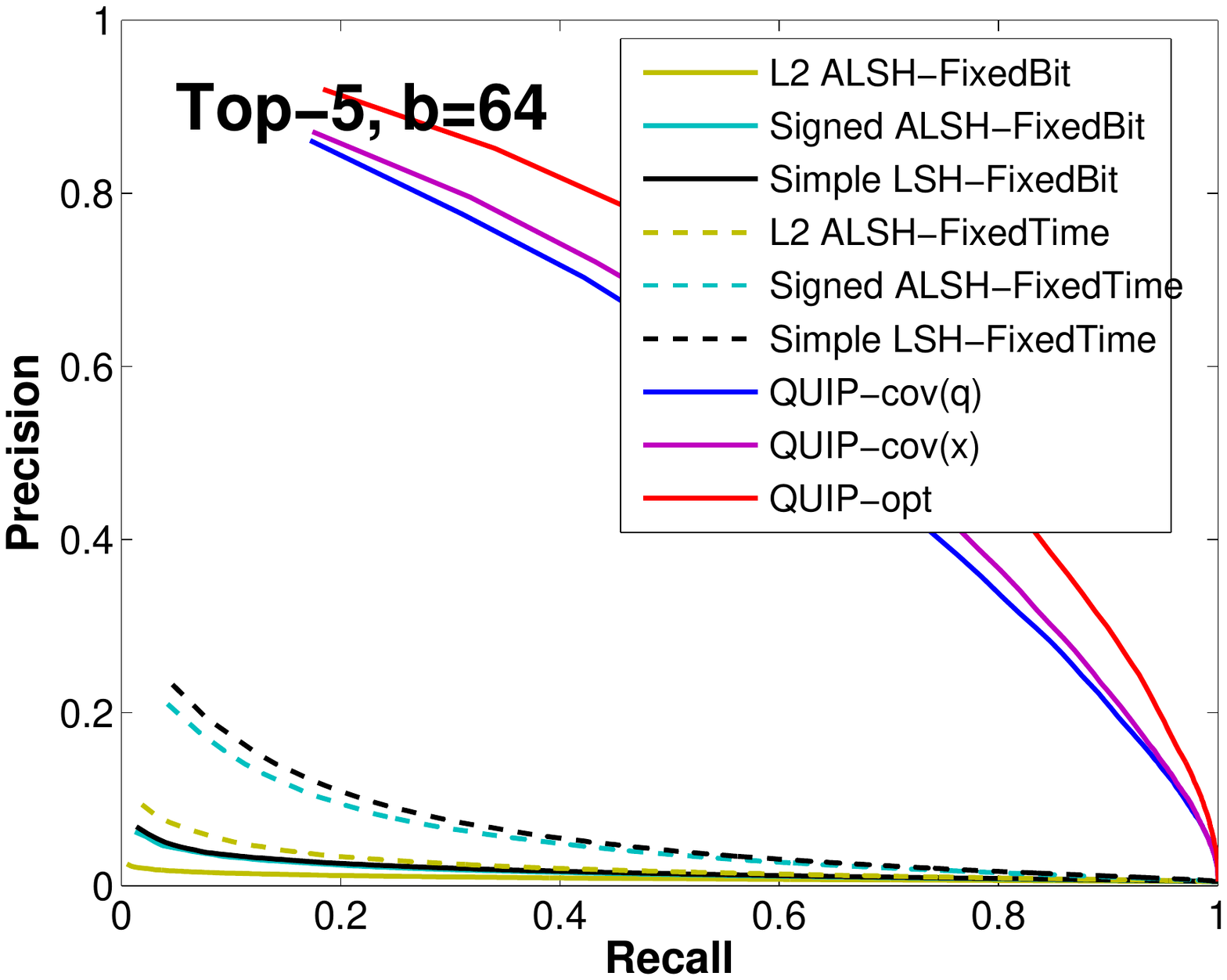} \hskip -3pt
\includegraphics[trim=0.9in 2.5in 0.9in 2.5in,clip,width=.248 \textwidth]{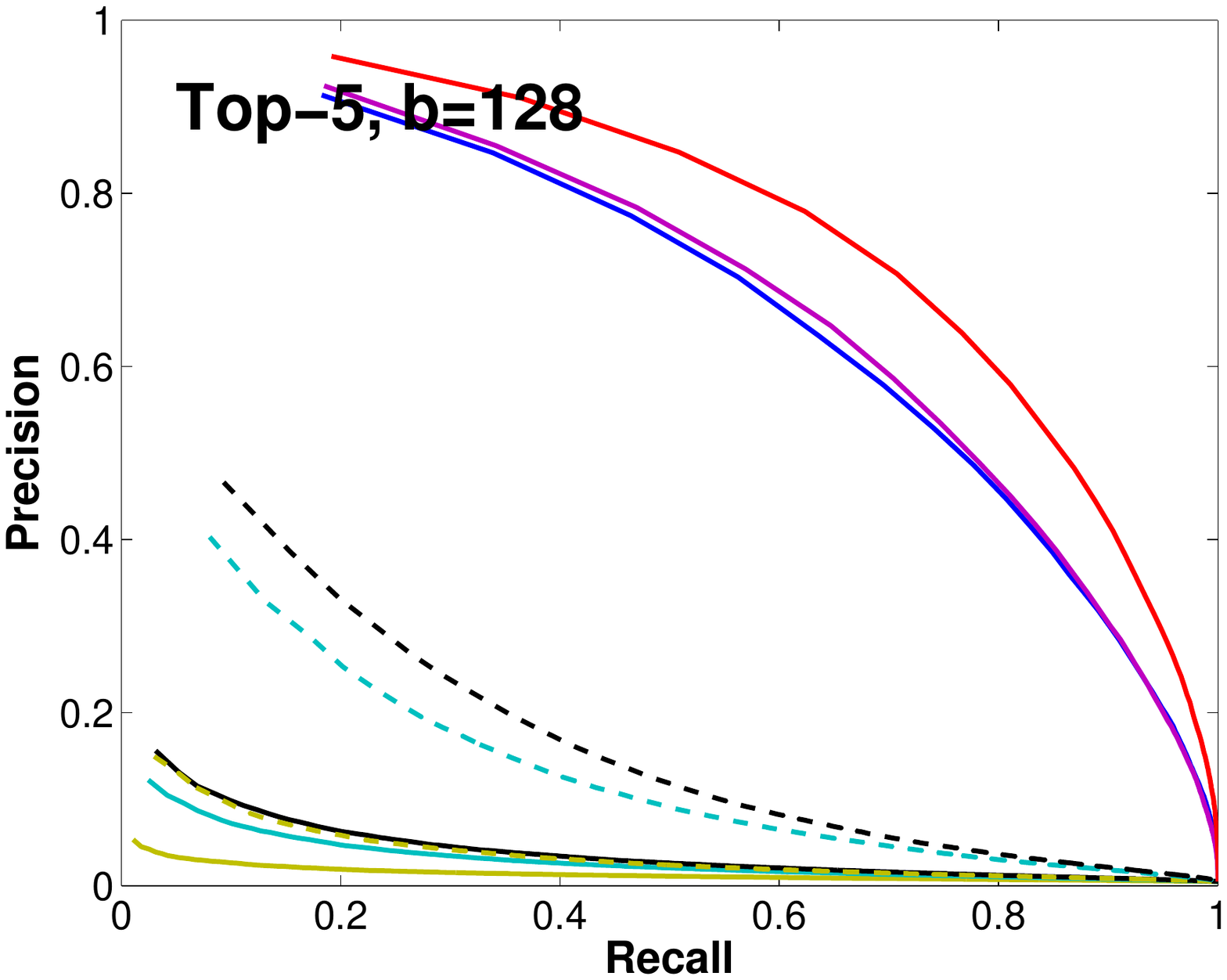} \hskip -3pt
\includegraphics[trim=0.9in 2.5in 0.9in 2.5in,clip,width=.248 \textwidth]{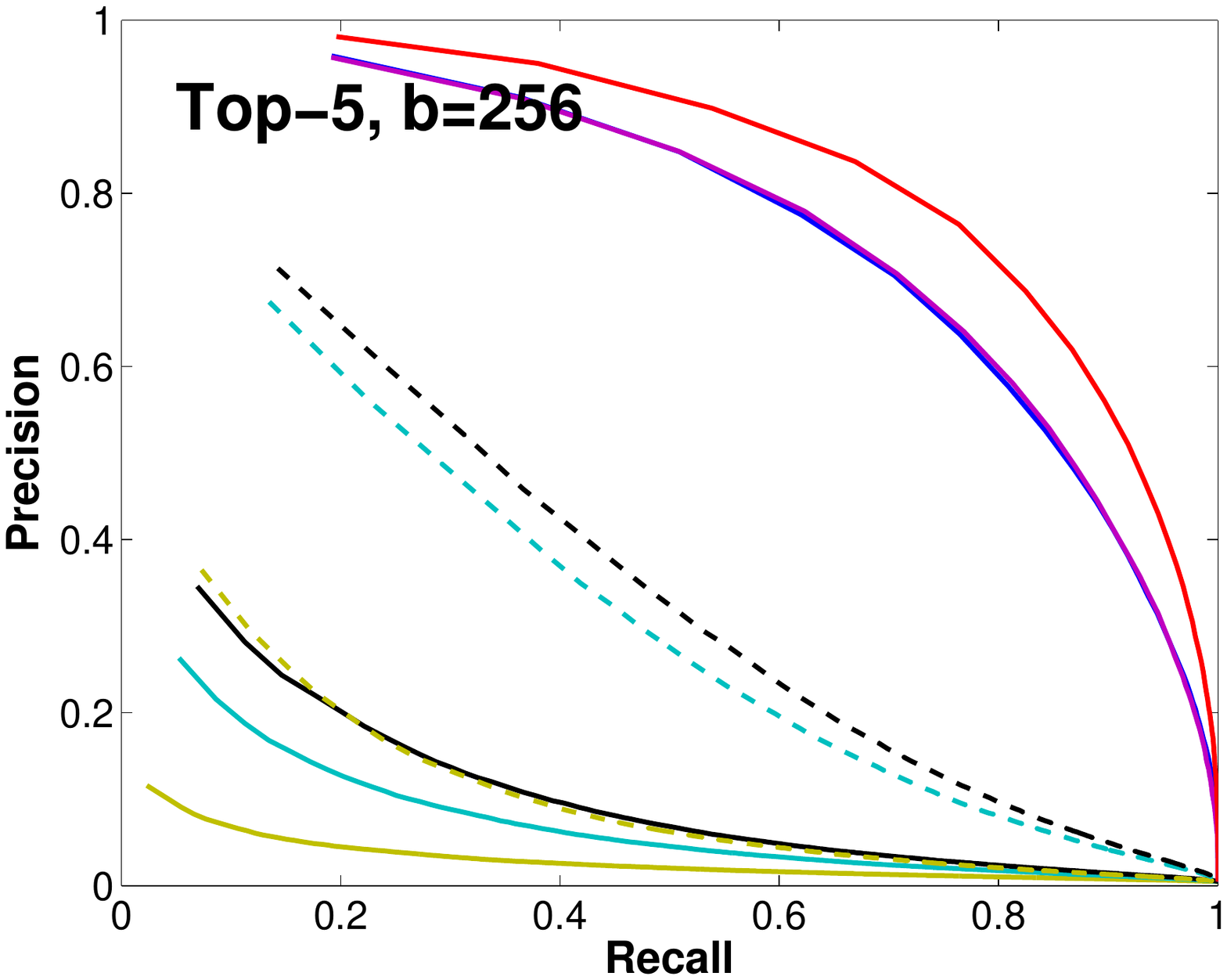} \hskip -3pt
\includegraphics[trim=0.9in 2.5in 0.9in 2.5in,clip,width=.248 \textwidth]{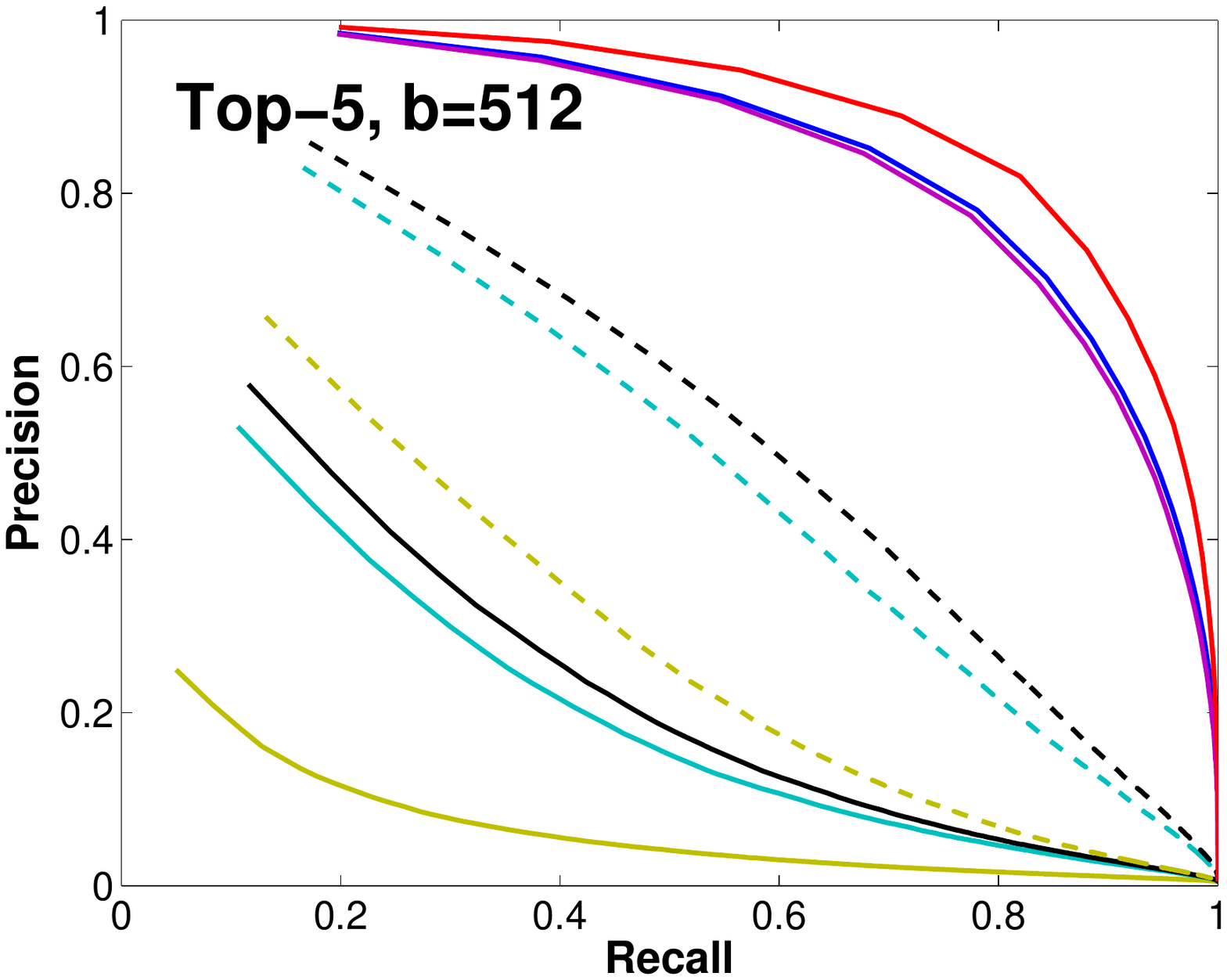}
\vskip -3pt
\subcaption{ImageNet dataset, retrieval of Top 5 items.}
\end{subfigure}
\begin{subfigure}[c]{1 \textwidth}
\centering
\includegraphics[trim=0.6in 2.5in 0.9in 2.5in,clip,width=.26 \textwidth]{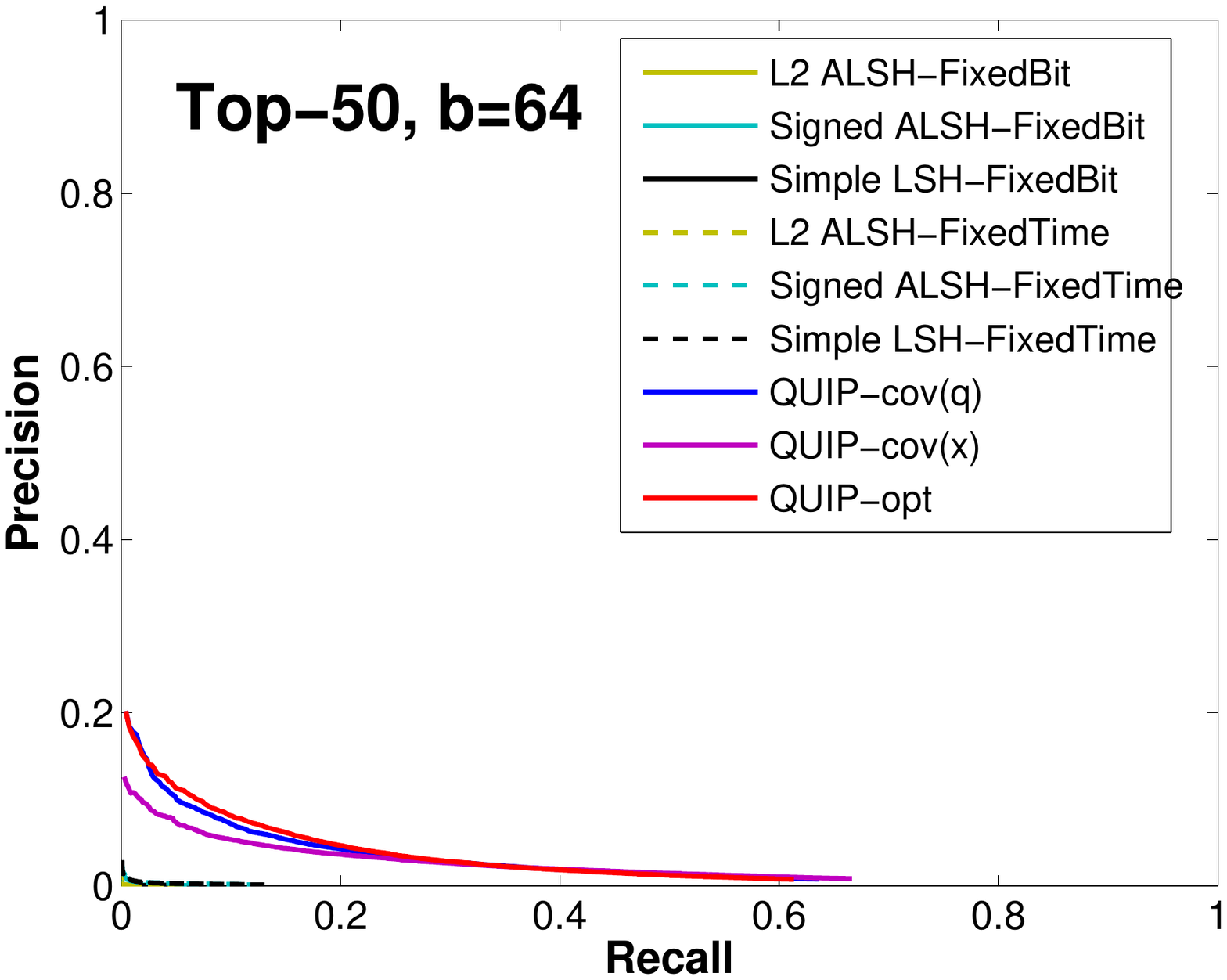} \hskip -3pt
\includegraphics[trim=0.9in 2.5in 0.9in 2.5in,clip,width=.248 \textwidth]{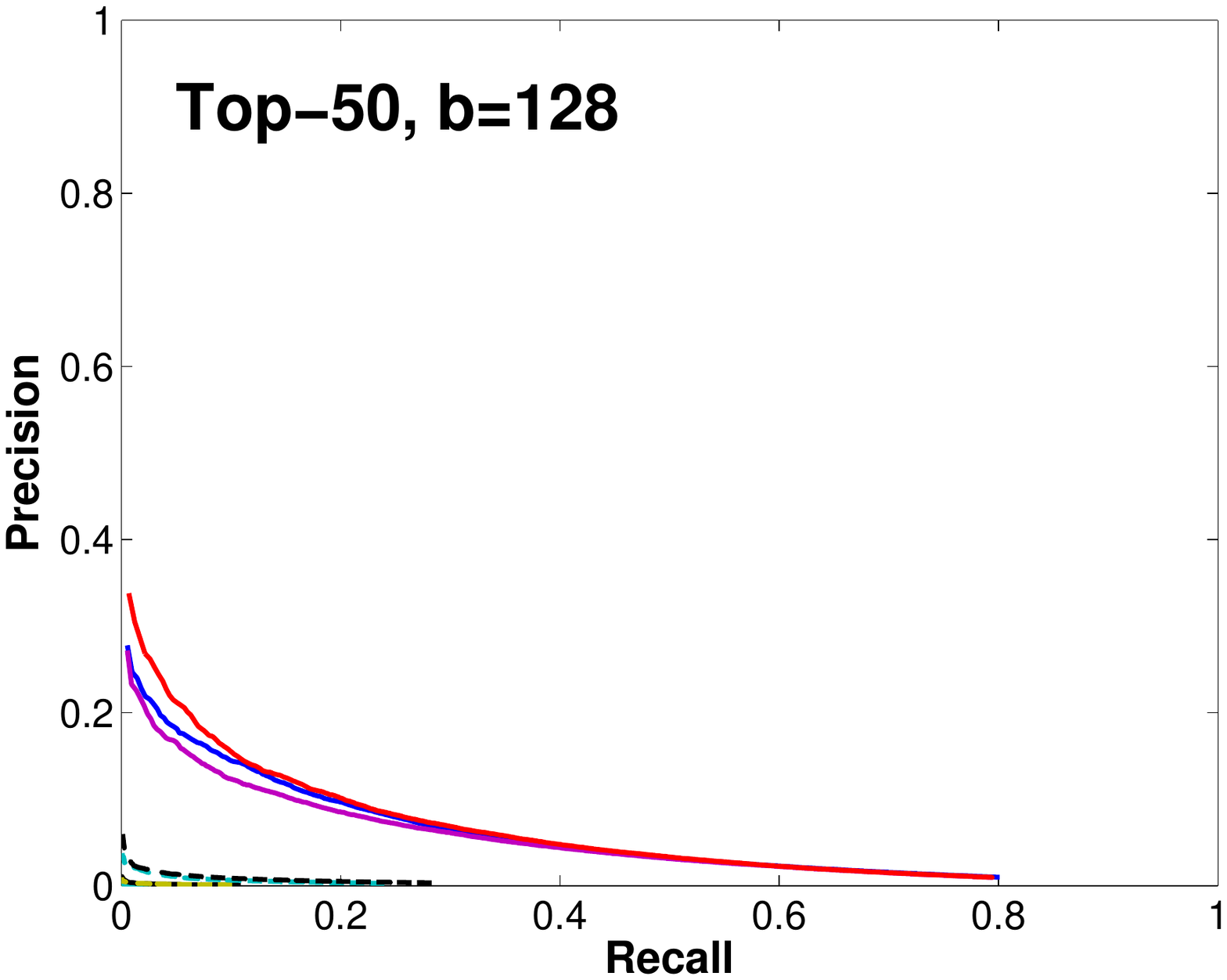} \hskip -3pt
\includegraphics[trim=0.9in 2.5in 0.9in 2.5in,clip,width=.248 \textwidth]{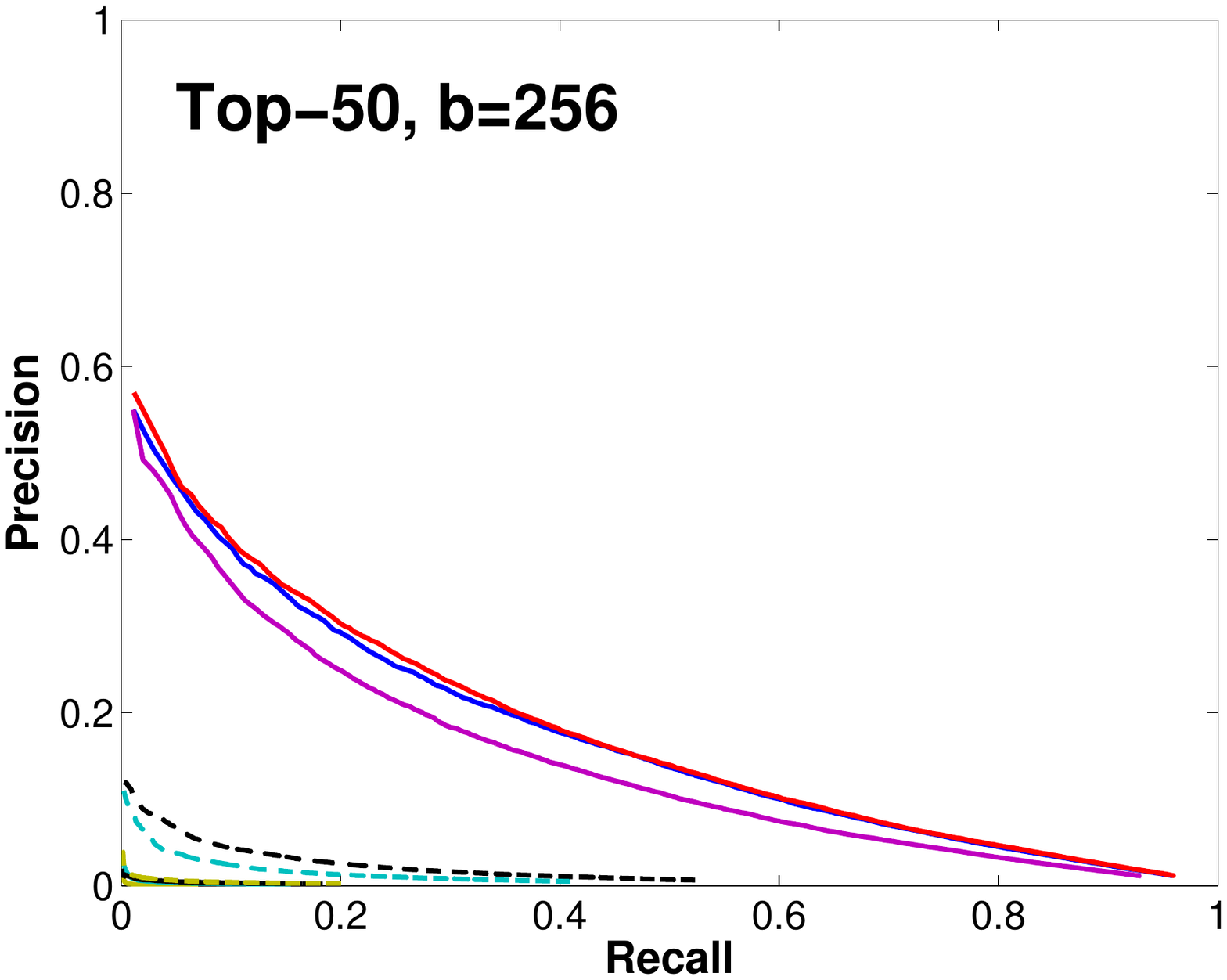} \hskip -3pt
\includegraphics[trim=0.9in 2.5in 0.9in 2.5in,clip,width=.248 \textwidth]{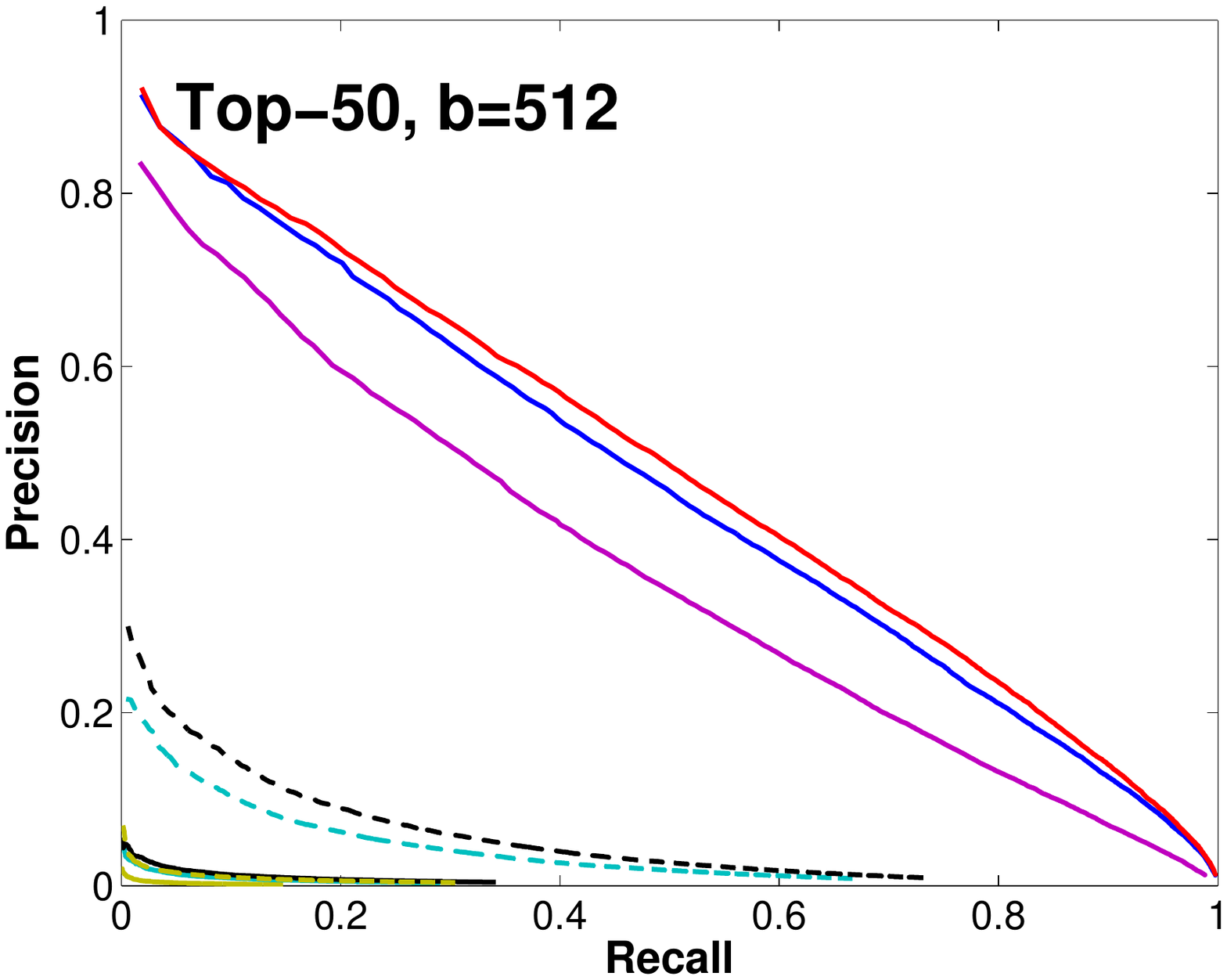}
\vskip -3pt
\subcaption{VideoRec dataset, retrieval of Top 50 items.}
\end{subfigure}
\vskip -6pt
\begin{subfigure}[c]{1 \textwidth}
\end{subfigure}
\caption{\label{fig:fixed_add}
Precision Recall curves for \emph{ImageNet} and \emph{VideoRec}. See appendix for more results.}
\vskip -12pt
\end{figure}

%Fig.\ref{fig:fixed} caption for more details.

Figure~\ref{fig:fixed} shows the precision recall curves for \emph{Movielens} and \emph{Netflix}, and Figure~\ref{fig:fixed_add} shows the same for the \emph{ImageNet} and \emph{VideoRec} datasets. All the quantization based approaches outperform LSH based methods significantly when all the techniques use the same number of bits. Even in the fixed time experiments, the quantization based approaches remain superior to the LSH-based approaches (shown with dashed curves), even though the former uses 3 times less bits than latter, leading to significant reduction in memory footprint.
%In fixed time scenario, even though LSH based baselines have $3$ times the number of bits, and therefore triple the memory and storage cost, proposed quantization still have significantly better accuracy, on all four datasets. 
%The reason is perhaps, as we stated before, only $x$ is quantized in quantization methods but not $q$, therefore the quantization errors is smaller there. 
%(\emph{QUIP-opt}), which uses the full constrained optimization, gives the best performance.
Among the quantization methods, \emph{QUIP-cov(q)} typically performs better than \emph{QUIP-cov(x)}, but the gap in performance is not that large. In theory, the non-centered covariance matrix of the queries ($\Sigma_{Q}$) can be quite different than that of the database ($\Sigma_{X}$), leading to drastically different results. However, the comparable performance implies that it is often safe to use $\Sigma_{X}$ when learning a codebook. On the other hand, when a small set of example queries is available, \emph{QUIP-opt} outperforms both \emph{QUIP-cov(x)} and \emph{QUIP-cov(q)} on all four datasets. This is because it learns the codebook with constraints that steer learning towards  retrieving the maximum dot product neighbors in addition to minimizing the quantization error. The overall training for \emph{QUIP-opt} was quite fast, requiring 3 to 30 minutes using a single-thread implementation, depending on the dataset size.  
%We note that for \emph{ImageNet} and \emph{VideoRec} dataset, because the data comes from the last layer of Deep Neural Network, there is very little correlation between dimensions, and therefore learning a codebook that outperforms \emph{k-Means} is non-trivial.

%\begin{figure}
%\small
%\centering
%\begin{tabular}{|c||c|c|c|c|c|c|}
%\hline
%Dataset & \# Dim & \#Database & \#Query\\
%\hline
%Movielens & 150 &10,681 & 71,567\\
%Netflix & 300 & 17,770  & 480,189\\
%Inception & 1025 & 1,000 & 49,999\\
%VideoData & 501 & 500,000 & 1,000  \\
%\hline
%\end{tabular}
%\caption{Statistics of four different datasets we used in experiment.}
%\label{fig:stat}
%\end{figure}

\vspace{-2mm}

\section{Tree-Quantization Hybrids for Large Scale Search}
The quantization based inner product search techniques described above provide a significant speedup over the brute force search while retaining high accuracy. However, the search complexity is still linear in the number of database points similar to that for the binary embedding methods that do exhaustive scan using Hamming distance ~\cite{shrivastava014e}. When the database size is very large, such a linear scan even with fast computation may not be able to provide the required search efficiency. In this section, we describe a simple procedure to further enhance the speed of QUIPS based on data partitioning. The basic idea of tree-quantization hybrids is to combine tree-based recursive data partitioning with QUIPS applied to each partition. At the training time, one first learns a locality-preserving tree such as hierarchical k-means tree, followed by applying QUIPS to each partition. In practice only a shallow tree is learned such that each leaf contains a few thousand points. Of course, a special case of tree-based partitioners is a flat partitioner such as k-means. At the query time, a query is assigned to more than one partition to deal with the errors caused by hard partitioning of the data. This soft assignment of query to multiple partitions is crucial for achieving good accuracy for high-dimensional data.
 
%, which becomes  learns the quantization parameters for QUIPS in each partition. Although one can learn separate  where each  first divide the data recursively using a locality preserving method, combine the power of trees that do recursive partitioning of the data   This is a significant reduction in practice especially when the dimensionality $d$ is large compare to the number of subspaces $K$. We can further reduce the time complexity by performing search that combines quantization techniques with tree based partitioning. To construct the search tree, we simply partition the database vectors using \emph{k-Means} clustering where the root node has $H$ children. At query time, we start at the root node and sort the inner products between the query and the partition centers of children nodes, and we visit $H'$ number of children with the largest inner product. This further reduces the searching complexity to $O(H d+\frac{n H' K}{H})$, which is significant speed gain when $H' \ll H$ but $H \ll n$. Theoretically, it is possible repeat this partitioning procedure iteratively, similar to that of hierarchal \emph{k-Means}. In practice, we find that does not translate to meaningful speed up due to cache effect in modern CPUs.

In the \emph{VideoRec} dataset, where $n=500,000$, the quantization approaches (including \emph{QUIP-cov(x), QUIP-cov(q), QUIP-opt}) reduce the search time by a factor of $7.17$, compared to that of brute force search. The tree-quantization hybrid approaches (\emph{Tree-QUIP-cov(x), Tree-QUIP-cov(q), Tree-QUIP-opt}) use 2000 partitions, and each query is assigned to the nearest 100 partitions based on its dot-product with the partition centers. These Tree-QUIP hybrids lead to a further speed up of $5.97$x over  QUIPS, leading to an overall end-to-end speed up of $42.81$x over brute force search. To illustrate the effectiveness of the hybrid approach, we plot the precision recall curve in \emph{Fixed-bit} and \emph{Fixed-time} experiment on \emph{VideoRec} in Figure~\ref{fig:fixed_tree}. From the Fixed-bit experiments, Tree-Quantization methods have almost the same accuracy as their non-hybrid counterparts (note that the curves almost overlap in Fig. \ref{fig:fixed_tree}(a) for these two versions), while resulting in about 6x speed up. From the fixed-time experiments, it is clear that with the same time budget the hybrid approaches return much better results because they do not scan all the datapoints when searching. 

\begin{figure}
\vskip -12pt
\centering
\begin{subfigure}[c]{0.48 \textwidth}
\includegraphics[trim=1.3in 3.3in 1.3in 3.3in,clip,width=1 \textwidth]{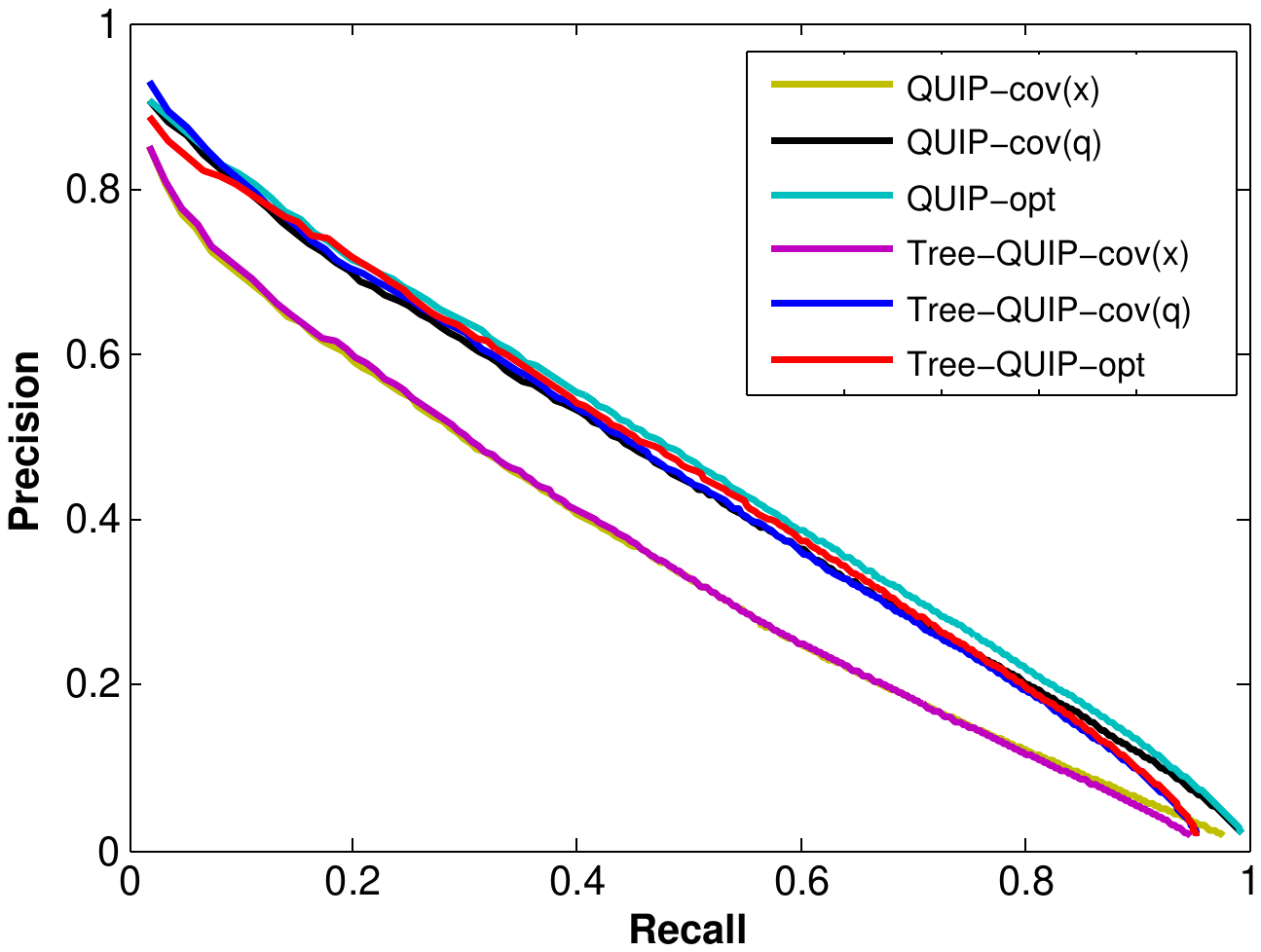} 
\subcaption{Fixed-bit experiment.}
\end{subfigure}
\begin{subfigure}[c]{0.48 \textwidth}
\includegraphics[trim=1.3in 3.3in 1.3in 3.3in,clip,width=1 \textwidth]{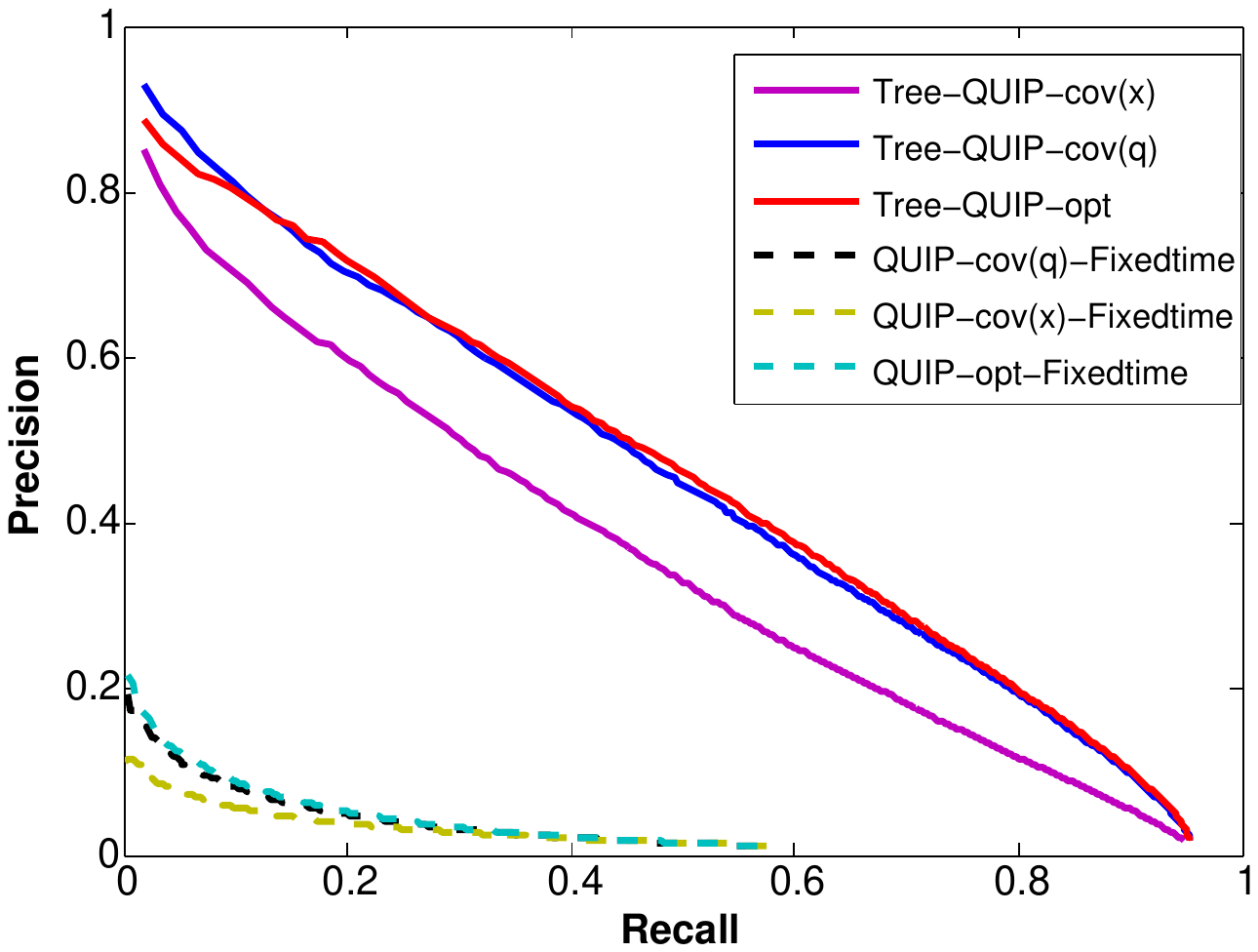} 
\subcaption{Fixed-time experiment.}
\end{subfigure}
\hskip -3pt
\vskip -3pt
\caption{\label{fig:fixed_tree}
Precision recall curves on \emph{VideoRec} dataset, retrieving Top-50 items, comparing quantization based methods and tree-quantization hybrid methods. In (a), we conduct fixed bit comparison where both the non-hybrid methods and hybrid methods use the same 512 bits. The non-hybrid methods are considerable slower in this case (5.97x). In (b), we conduct fixed time experiment, where the time of retrieval is fixed to be the same as taken by the hybrid methods (2.256ms). The non-hybrid approaches give much lower accuracy in this case. }
\vskip -12pt
\end{figure}

\section{Conclusion}
\label{sec:con}
\vspace{-3mm}
We have described a quantization based approach for fast approximate inner product search, which relies on robust learning of codebooks in multiple subspaces. One of the proposed variants leads to a very simple kmeans-like learning procedure and yet outperforms the existing state-of-the-art by a significant margin. We have also introduced novel constraints in the quantization error minimization framework that lead to even better codebooks, tuned to the problem of highest dot-product search. Extensive experiments on retrieval and classification tasks show the advantage of the proposed method over the existing techniques. In the future, we would like to analyze the theoretical guarantees associated with the constrained optimization procedure. In addition, in the tree-quantization hybrid approach, the tree partitioning and the quantization codebooks are trained separately. As a future work, we will consider training them jointly.

\section{Appendix}
\subsection{Additional Experimental Results}
The results on \emph{ImageNet} and \emph{VideoRec} datasets for different number of top neighbors and different number of bits are shown in Figure~\ref{fig:top}. In addition, we compare the performance of our approach against \emph{PCA-Tree}. The recall curves with respect to different number of  returned neighbors are shown in Figure~\ref{fig:recall}. 

\begin{figure}
\vskip -6pt
\centering
\begin{subfigure}[c]{1 \textwidth}
\centering
\vskip -9pt
\includegraphics[trim=0.6in 2.5in 0.9in 2.5in,clip,width=.26 \textwidth]{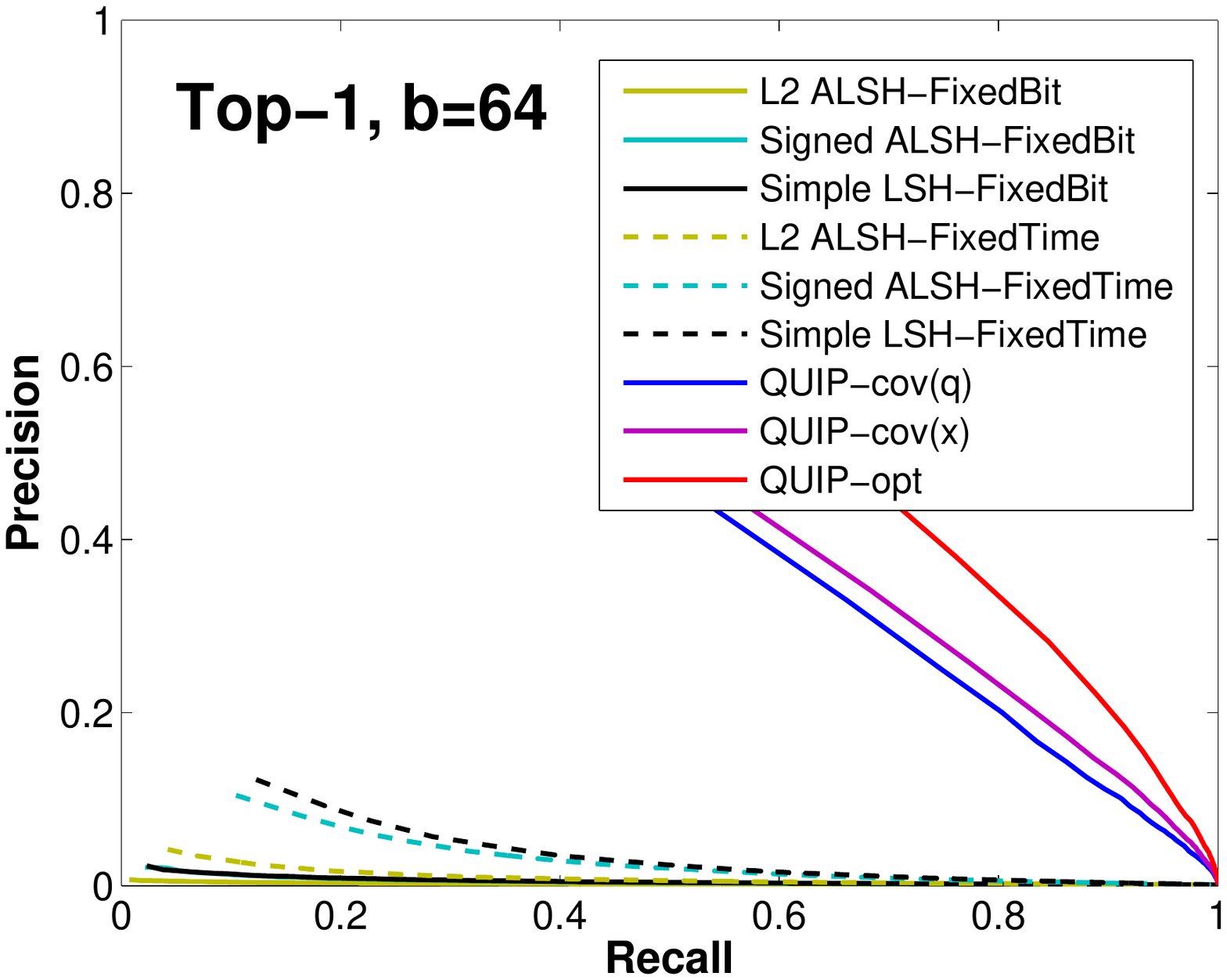} \hskip -3pt
\includegraphics[trim=0.9in 2.5in 0.9in 2.5in,clip,width=.248 \textwidth]{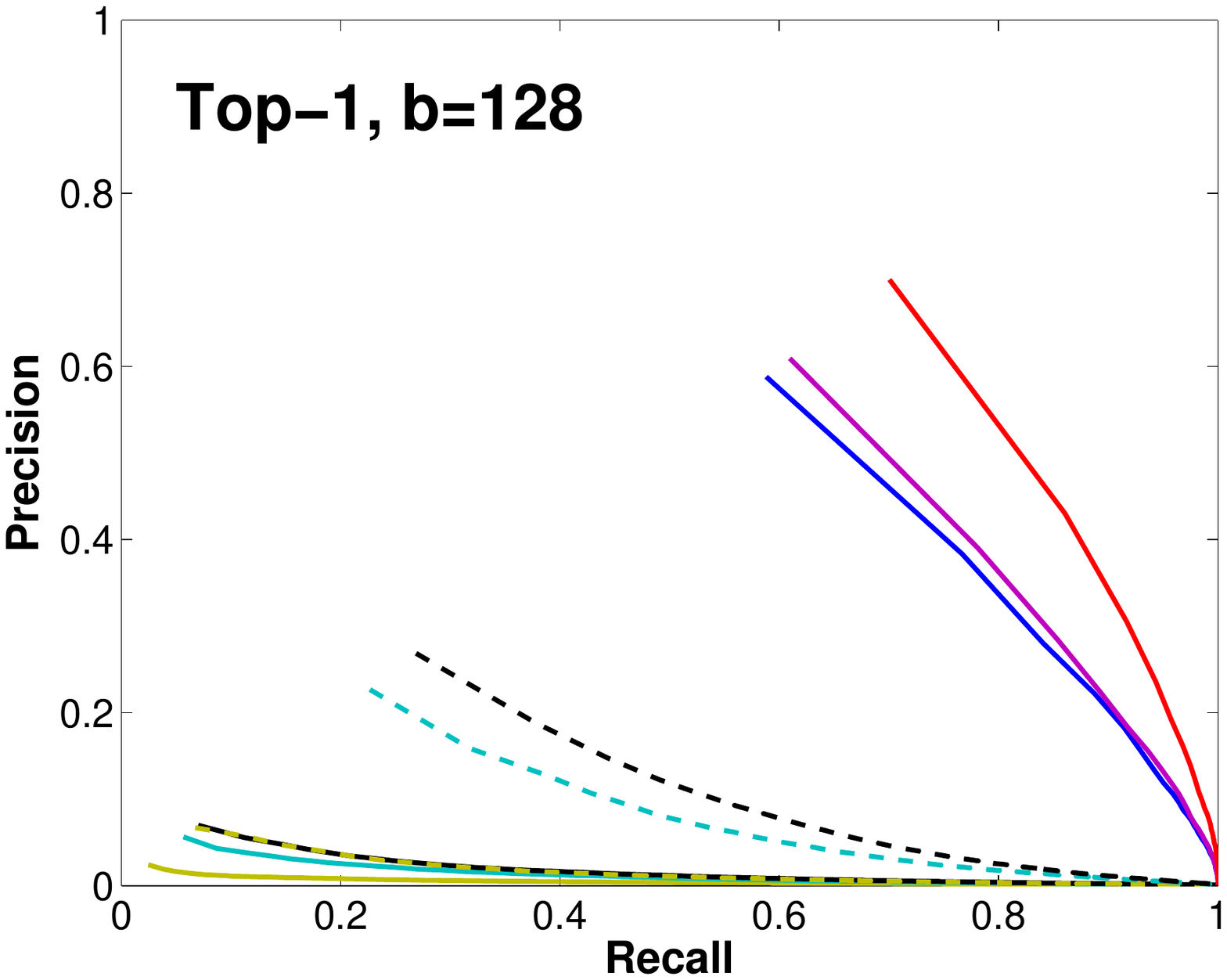} \hskip -3pt
\includegraphics[trim=0.9in 2.5in 0.9in 2.5in,clip,width=.248 \textwidth]{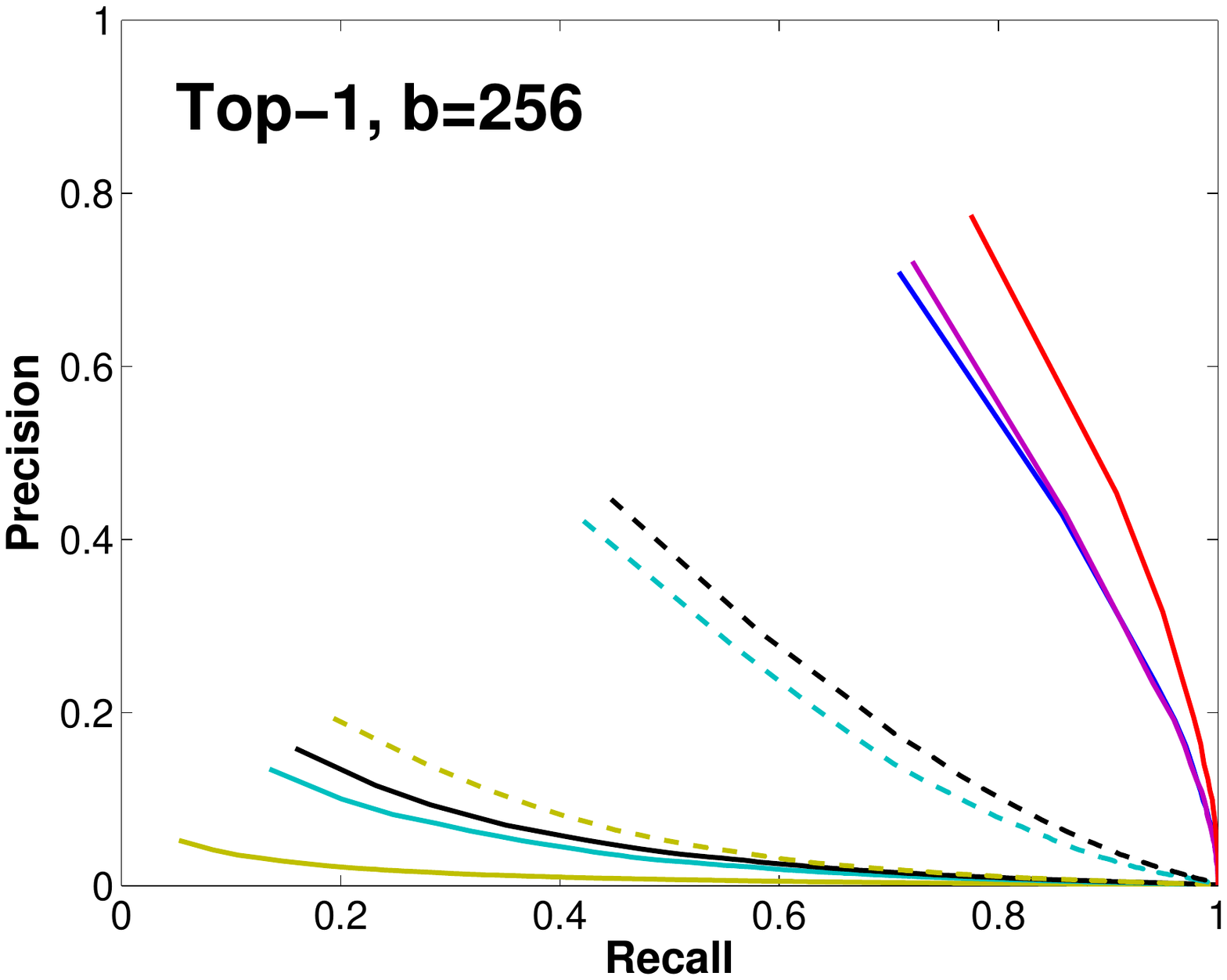} \hskip -3pt
\includegraphics[trim=0.9in 2.5in 0.9in 2.5in,clip,width=.248 \textwidth]{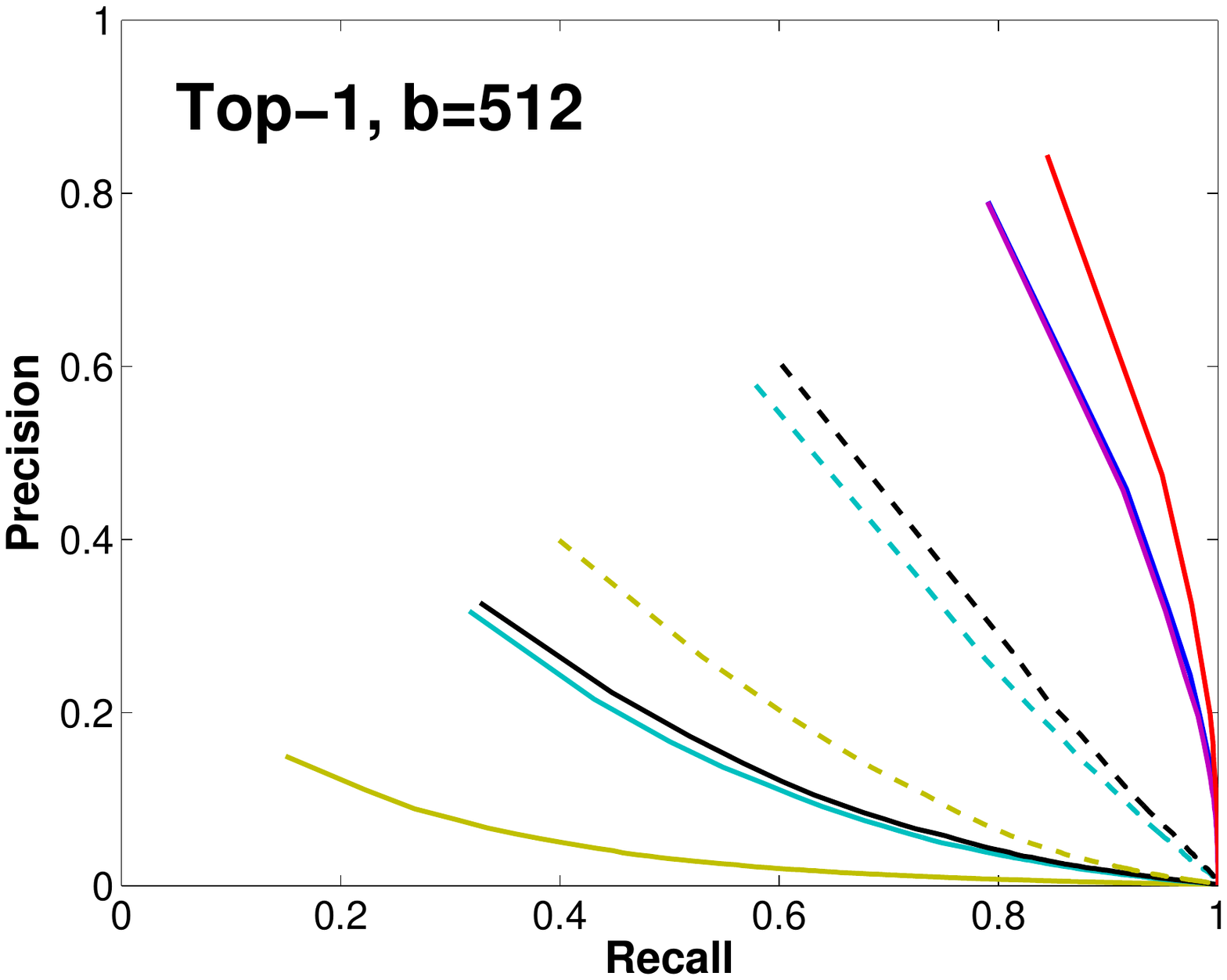}
\vskip -9pt
\includegraphics[trim=0.6in 2.5in 0.9in 2.5in,clip,width=.26 \textwidth]{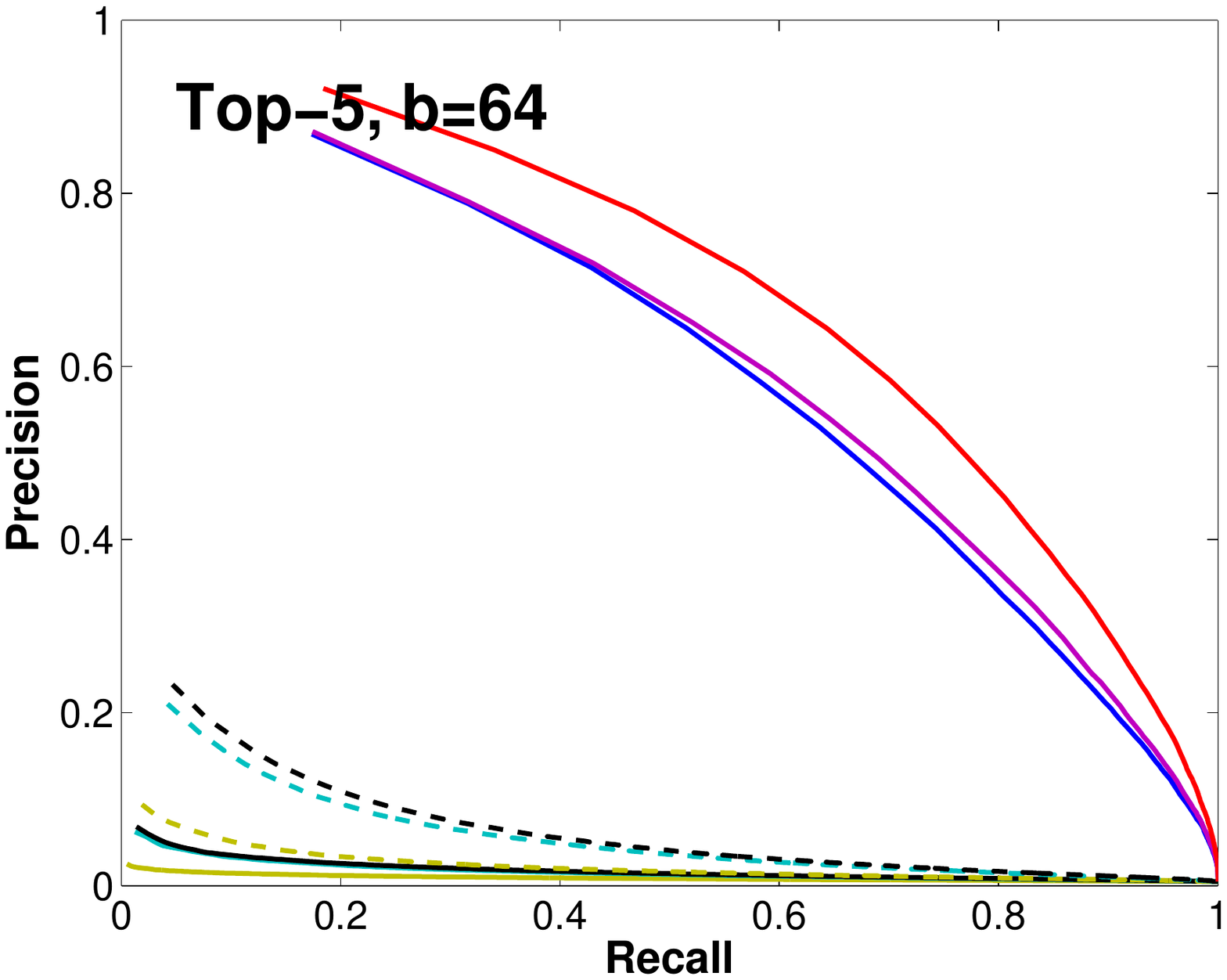} \hskip -3pt
\includegraphics[trim=0.9in 2.5in 0.9in 2.5in,clip,width=.248 \textwidth]{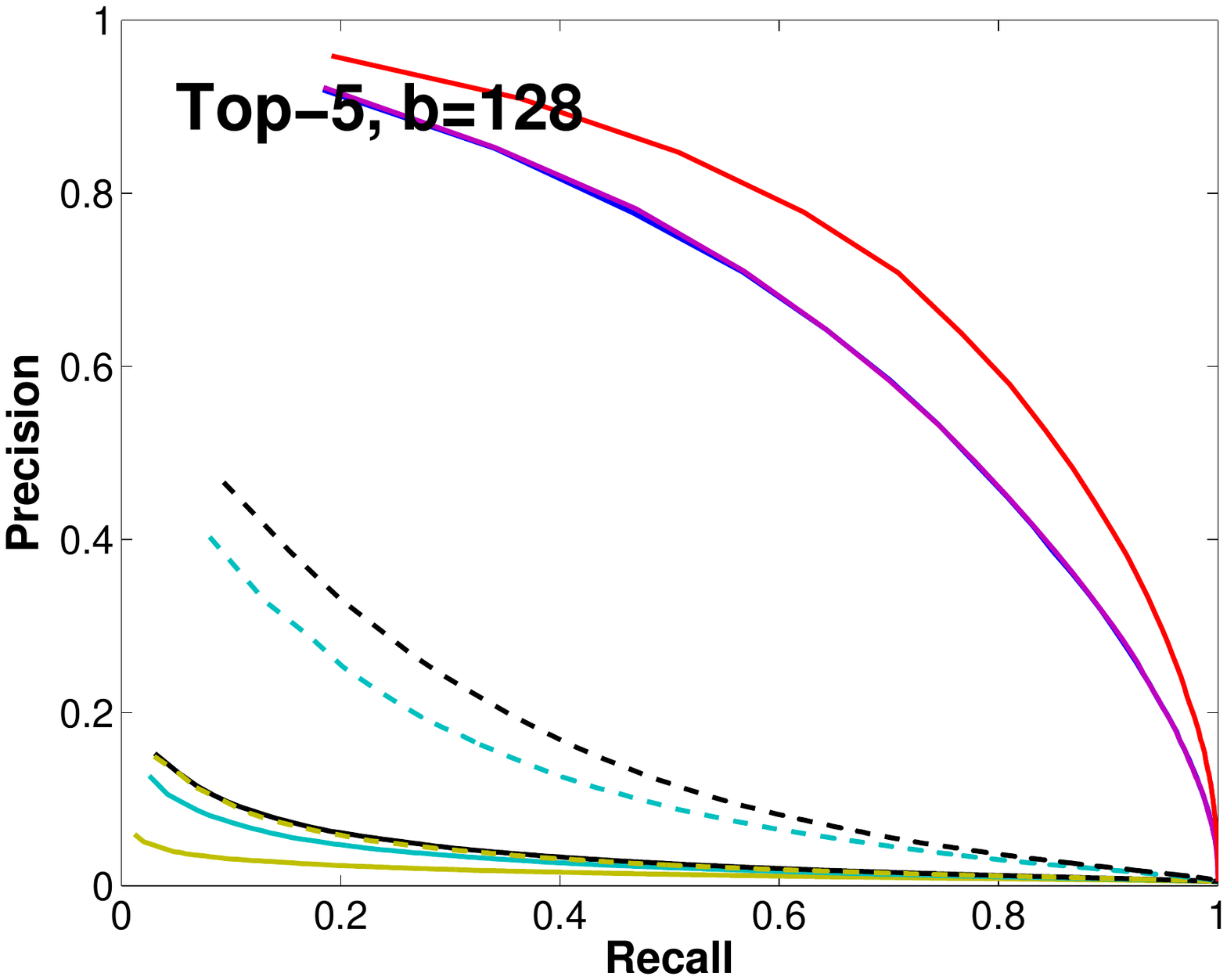} \hskip -3pt
\includegraphics[trim=0.9in 2.5in 0.9in 2.5in,clip,width=.248 \textwidth]{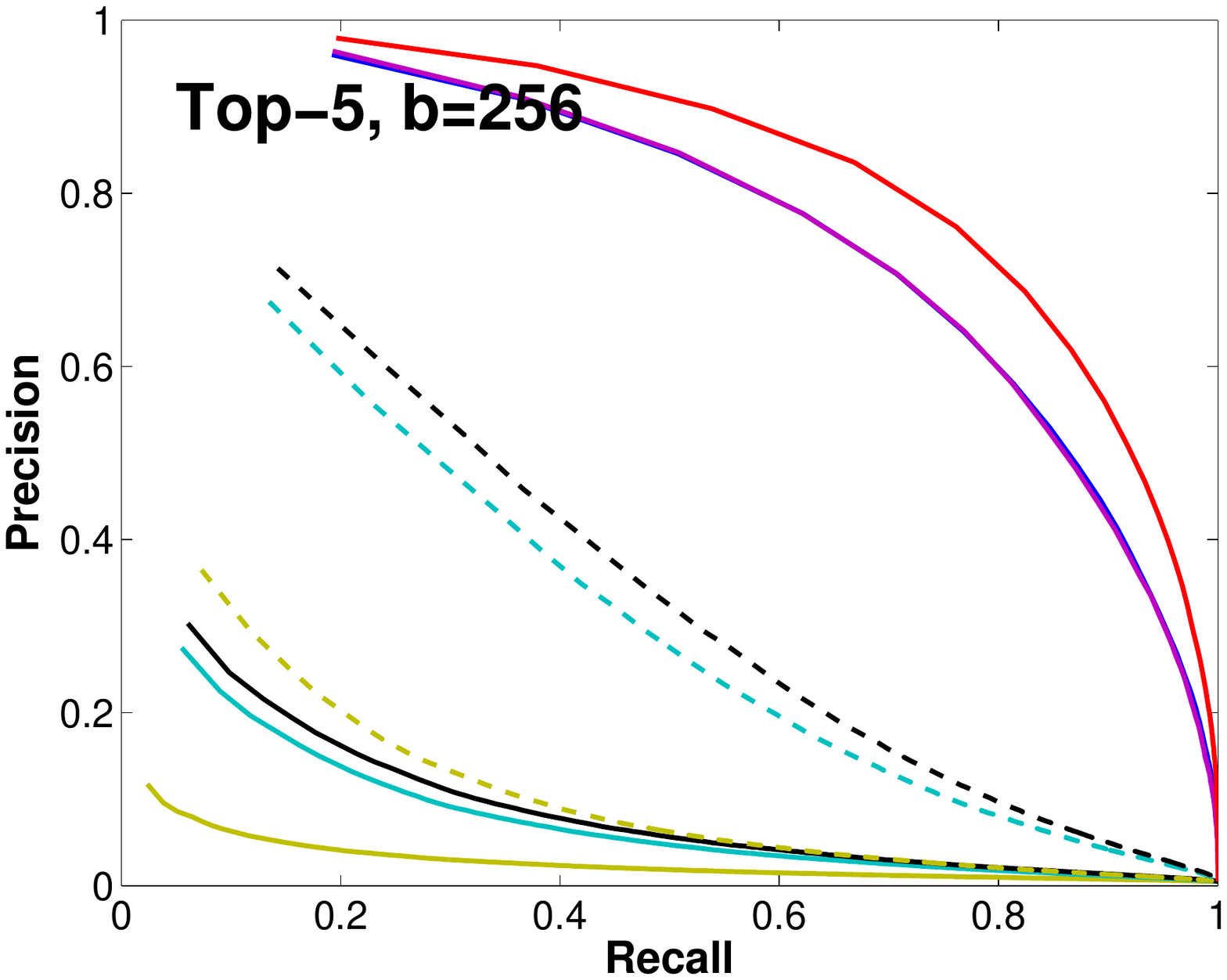} \hskip -3pt
\includegraphics[trim=0.9in 2.5in 0.9in 2.5in,clip,width=.248 \textwidth]{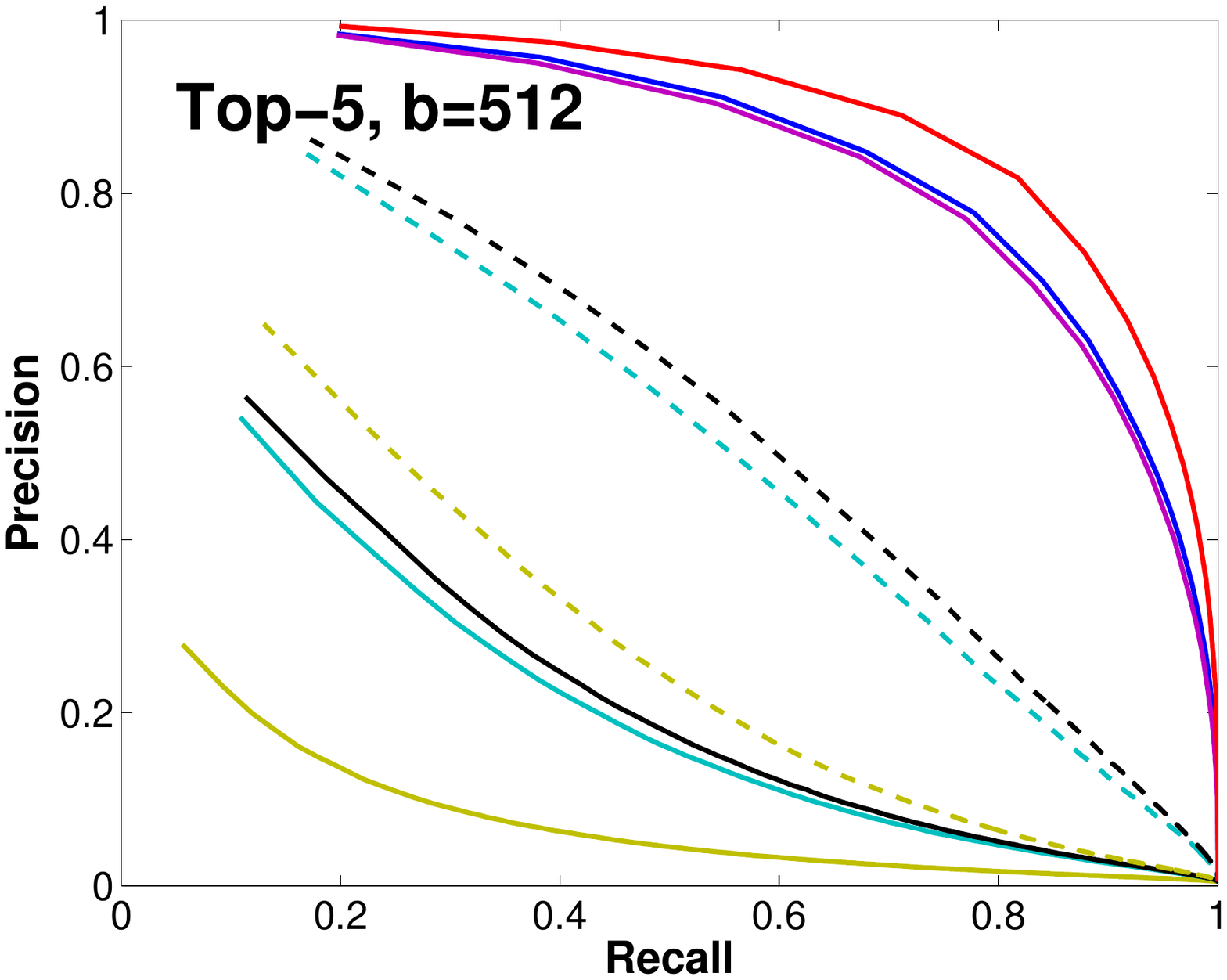}
\vskip -9pt
\includegraphics[trim=0.6in 2.5in 0.9in 2.5in,clip,width=.26 \textwidth]{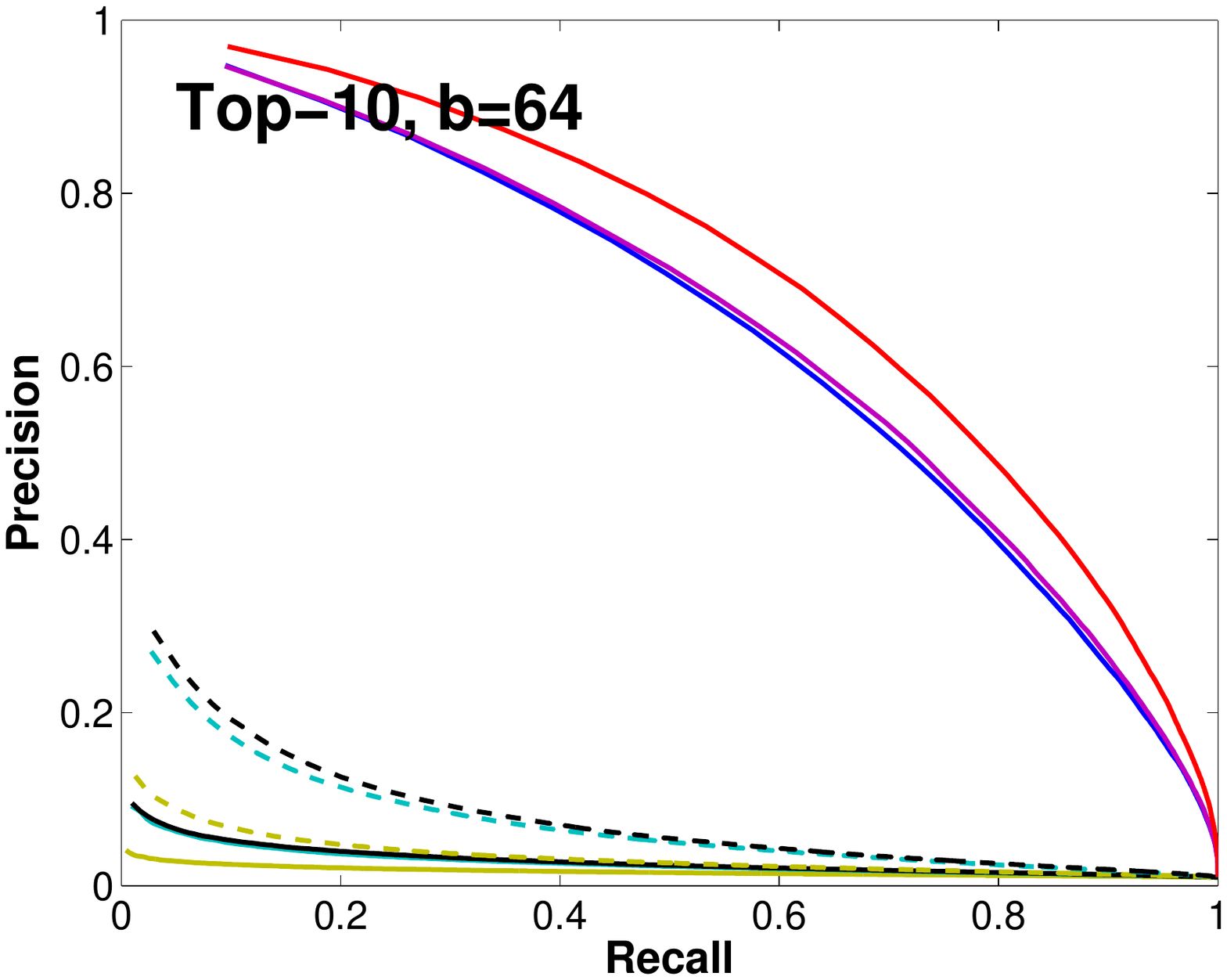} \hskip -3pt
\includegraphics[trim=0.9in 2.5in 0.9in 2.5in,clip,width=.248 \textwidth]{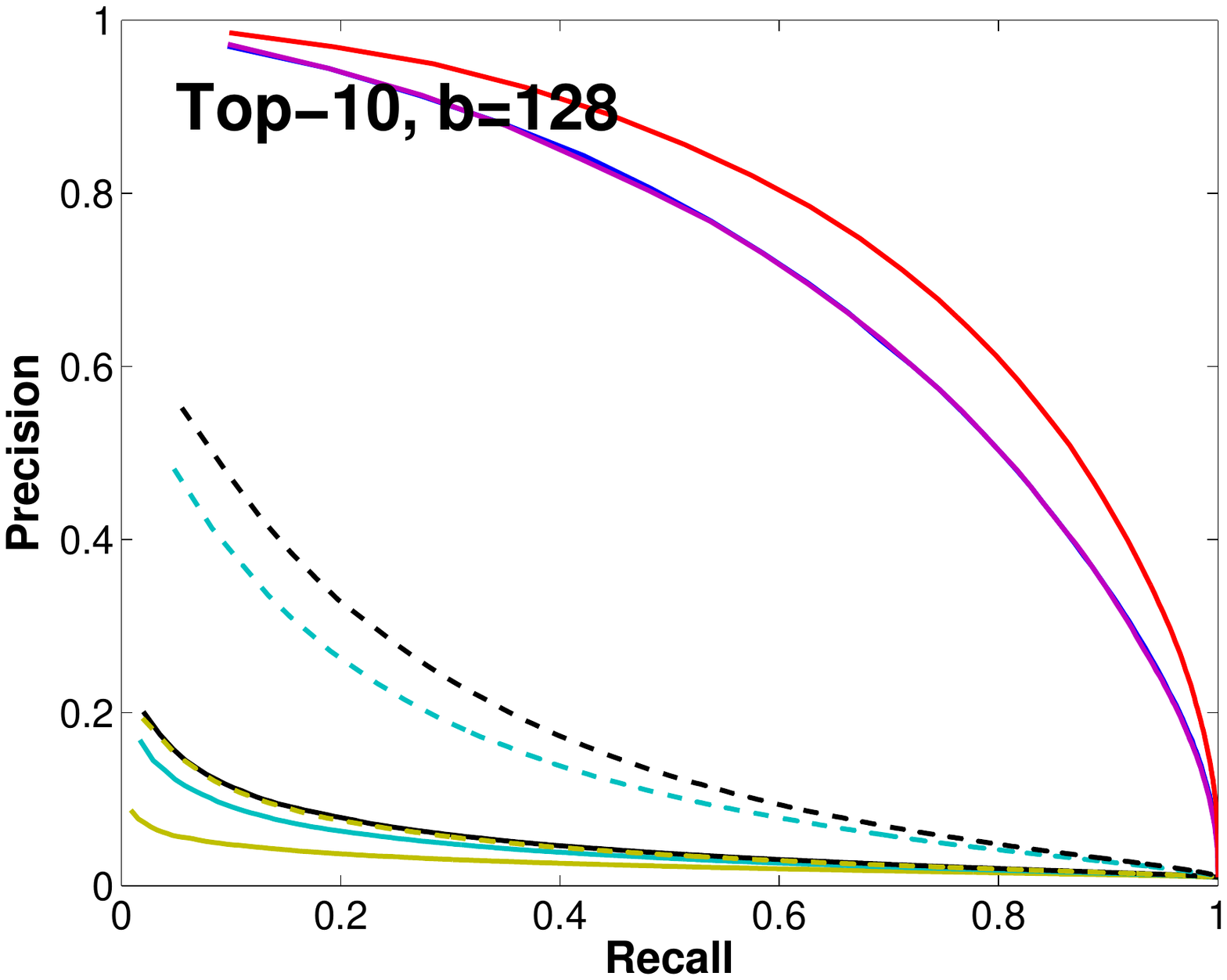} \hskip -3pt
\includegraphics[trim=0.9in 2.5in 0.9in 2.5in,clip,width=.248 \textwidth]{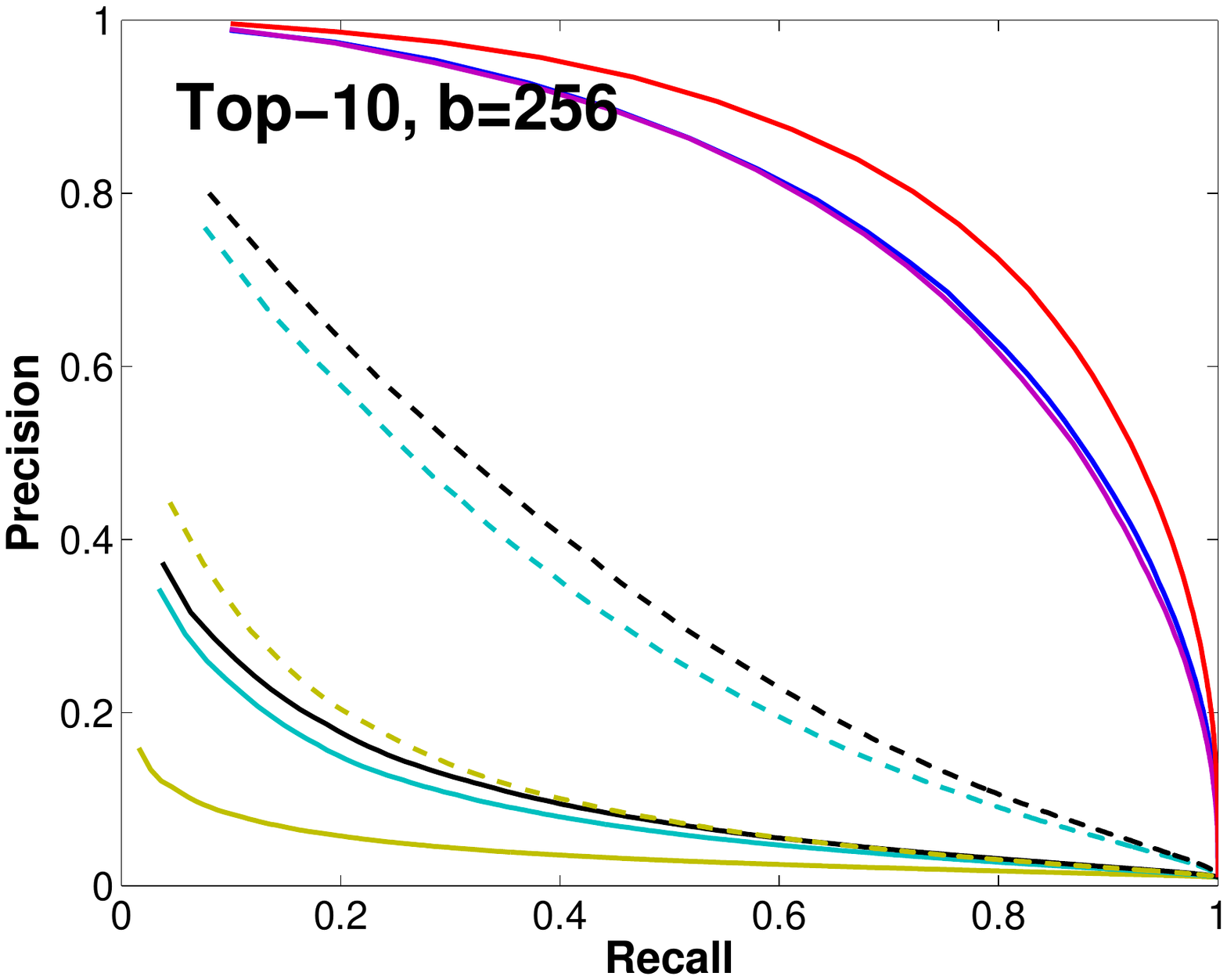} \hskip -3pt
\includegraphics[trim=0.9in 2.5in 0.9in 2.5in,clip,width=.248 \textwidth]{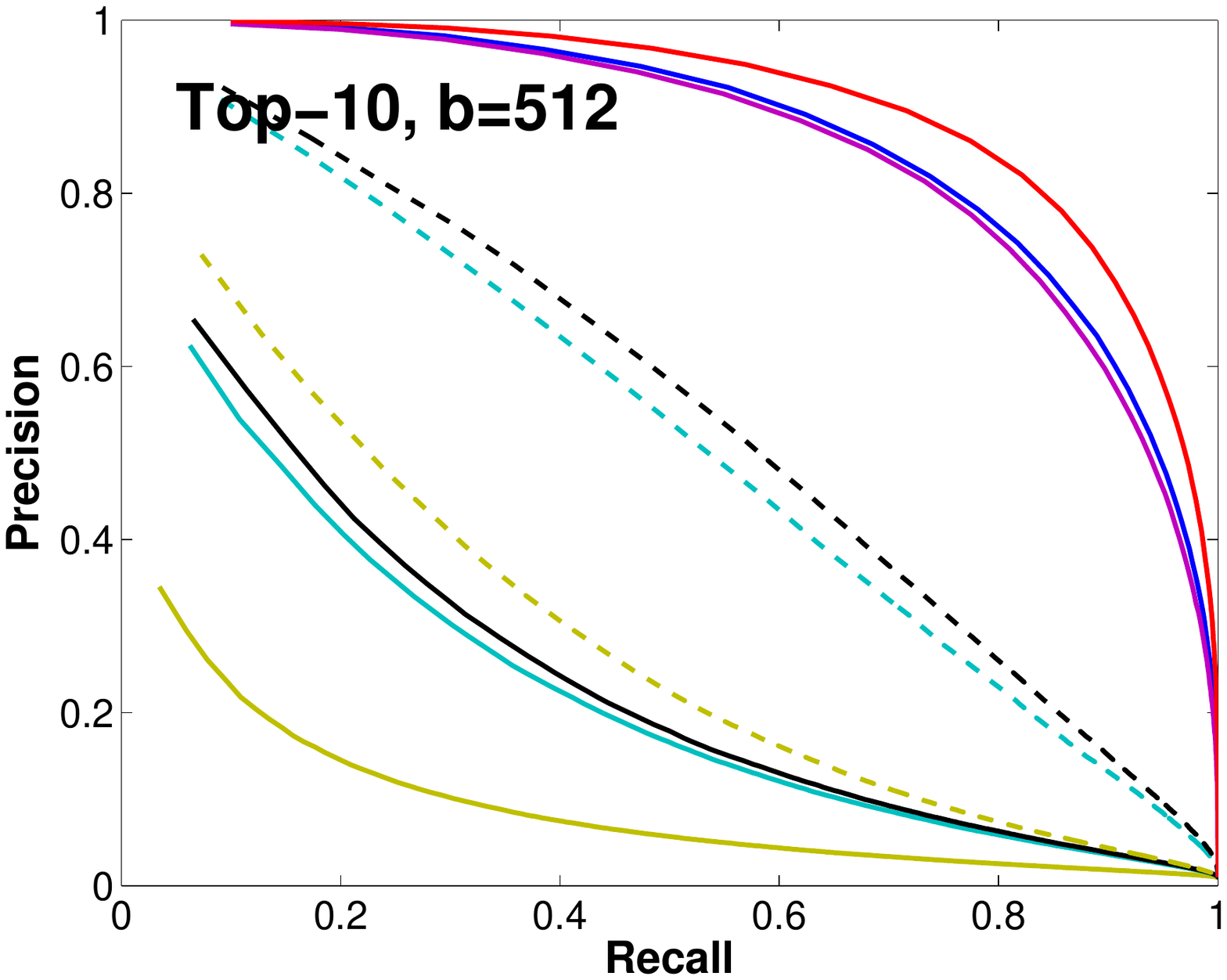}
\vskip -3pt
\subcaption{ImageNet dataset, retrieval of Top-1, 5 and 10 items.}
\end{subfigure}
\begin{subfigure}[c]{1 \textwidth}
\centering
\vskip -3pt
\includegraphics[trim=0.6in 2.5in 0.9in 2.5in,clip,width=.26 \textwidth]{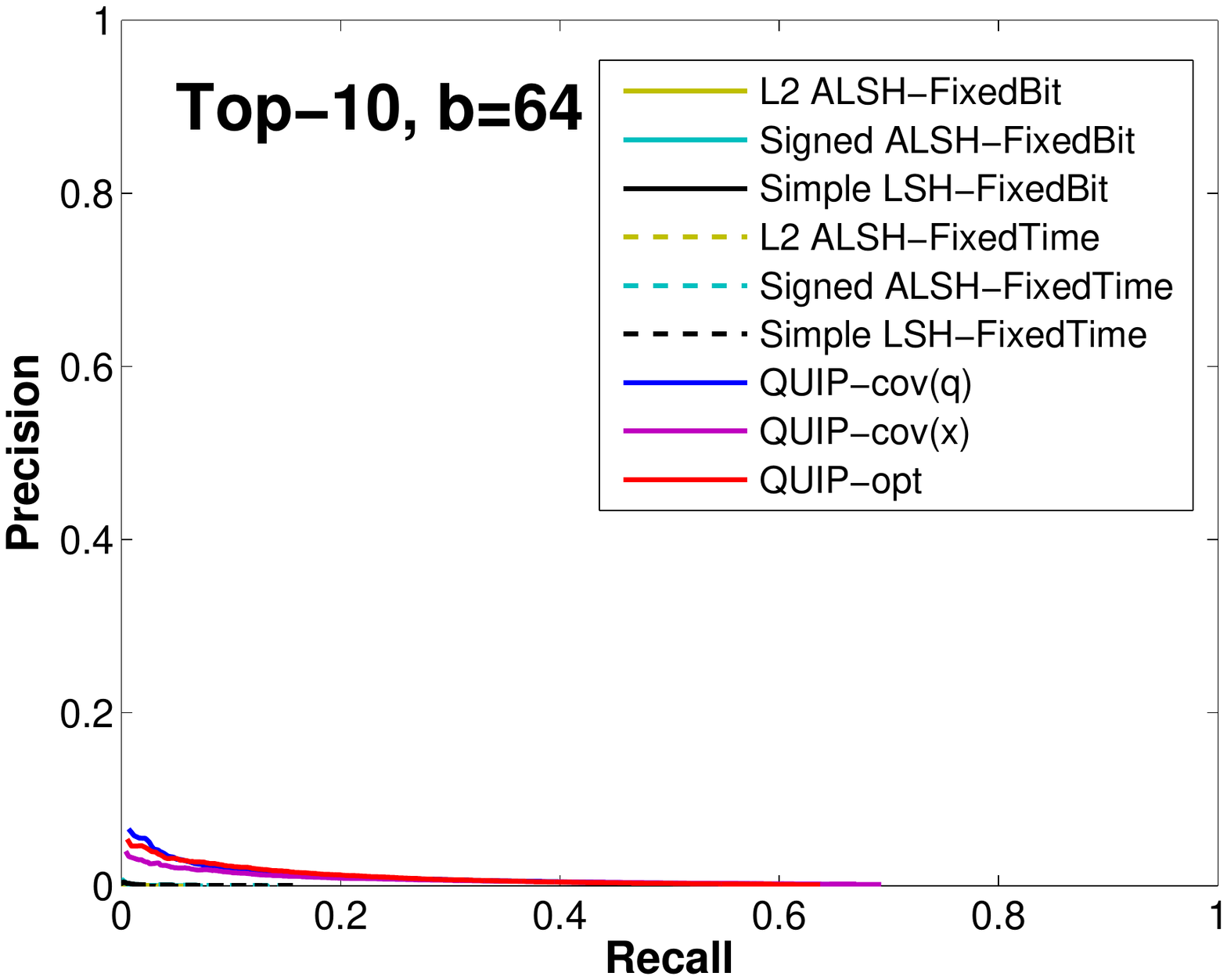} \hskip -3pt
\includegraphics[trim=0.9in 2.5in 0.9in 2.5in,clip,width=.248 \textwidth]{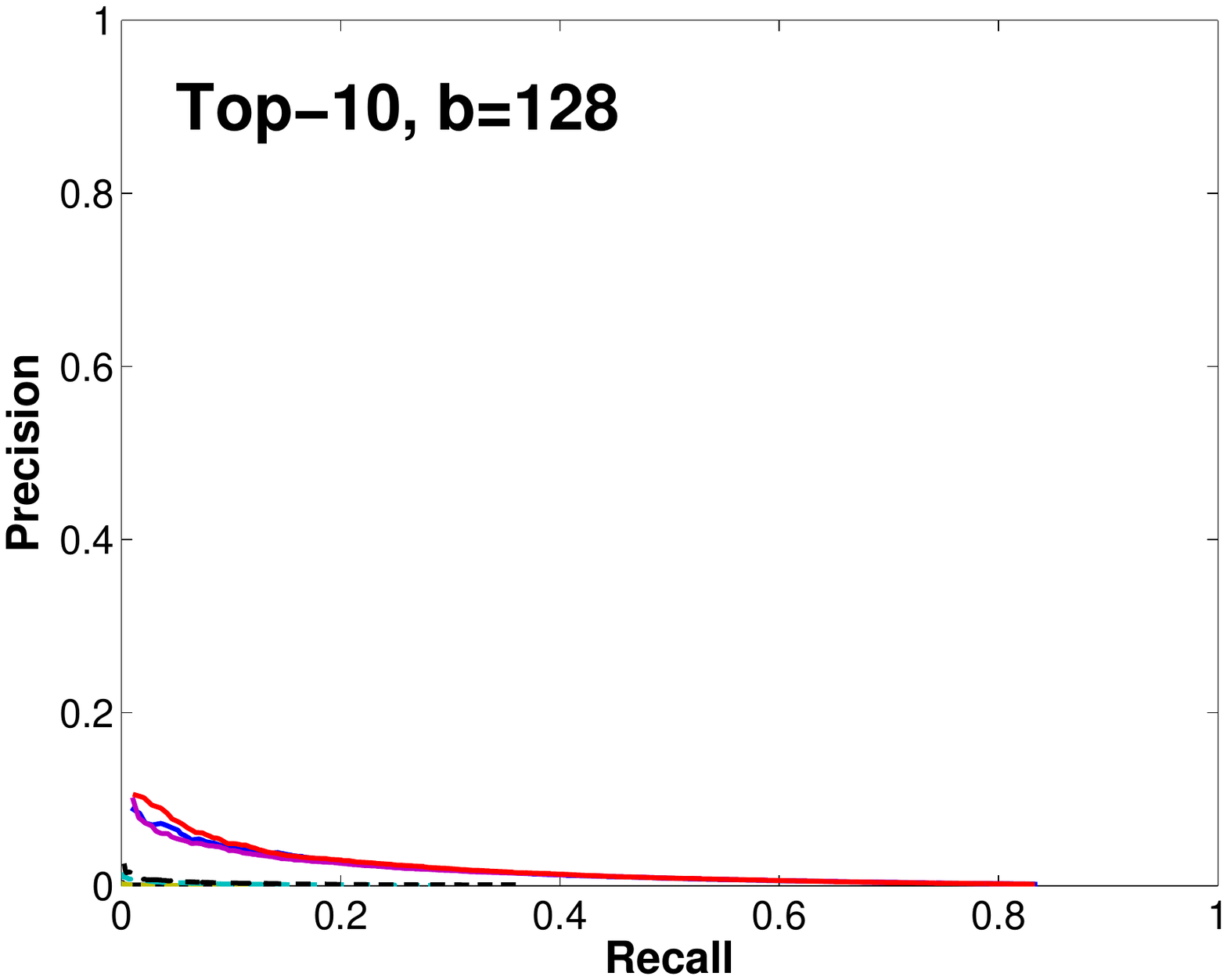} \hskip -3pt
\includegraphics[trim=0.9in 2.5in 0.9in 2.5in,clip,width=.248 \textwidth]{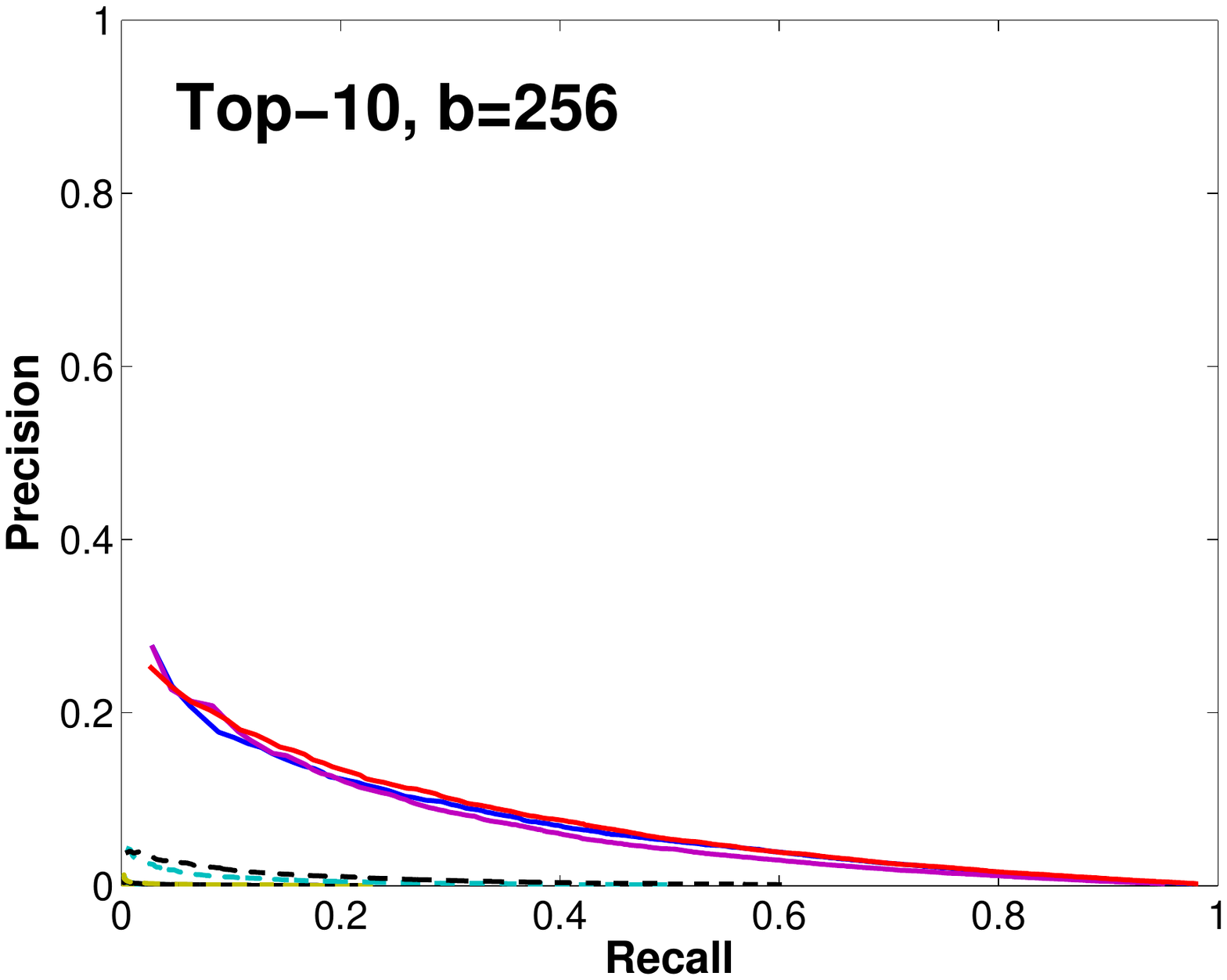} \hskip -3pt
\includegraphics[trim=0.9in 2.5in 0.9in 2.5in,clip,width=.248 \textwidth]{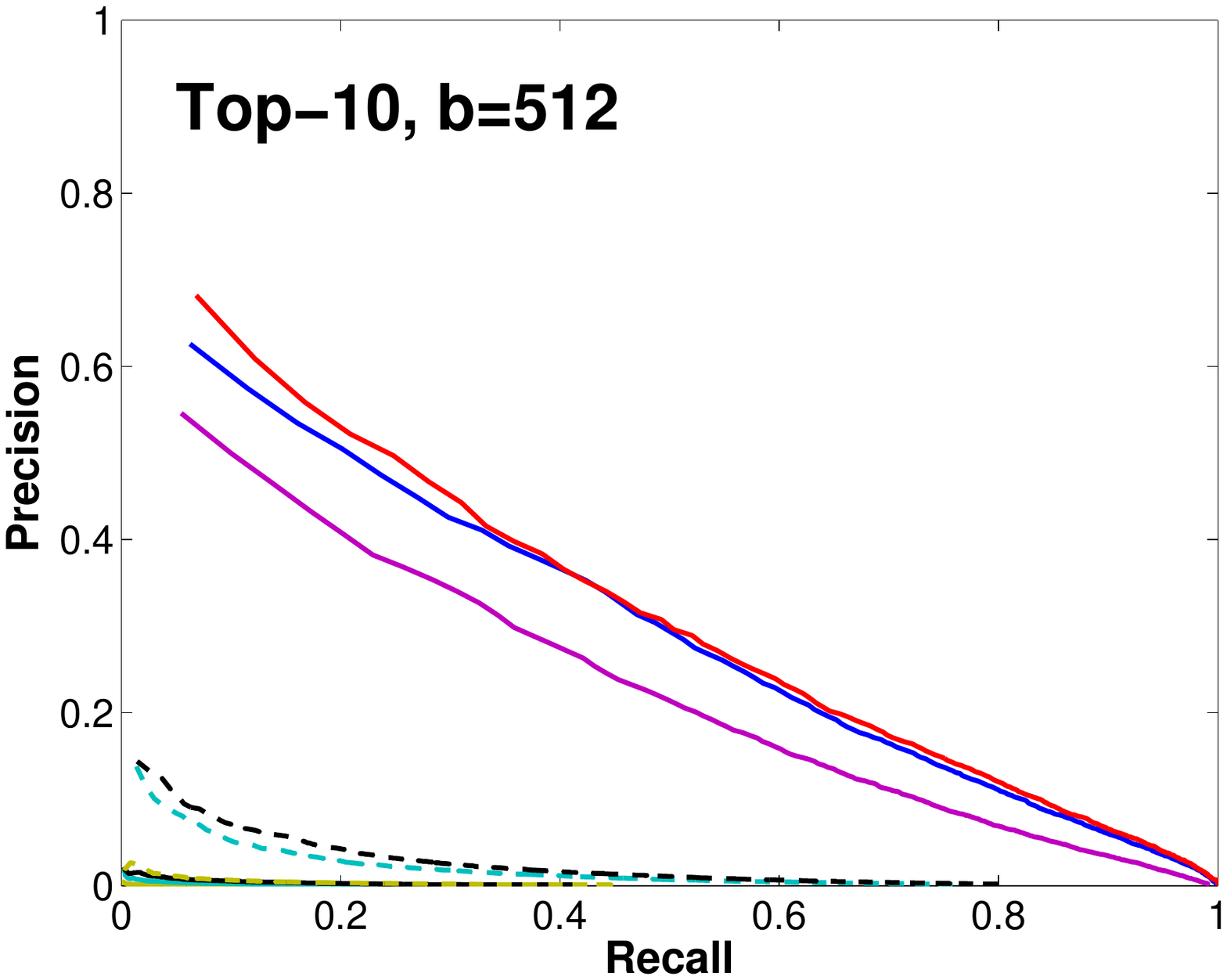}
\vskip -9pt
\includegraphics[trim=0.6in 2.5in 0.9in 2.5in,clip,width=.26 \textwidth]{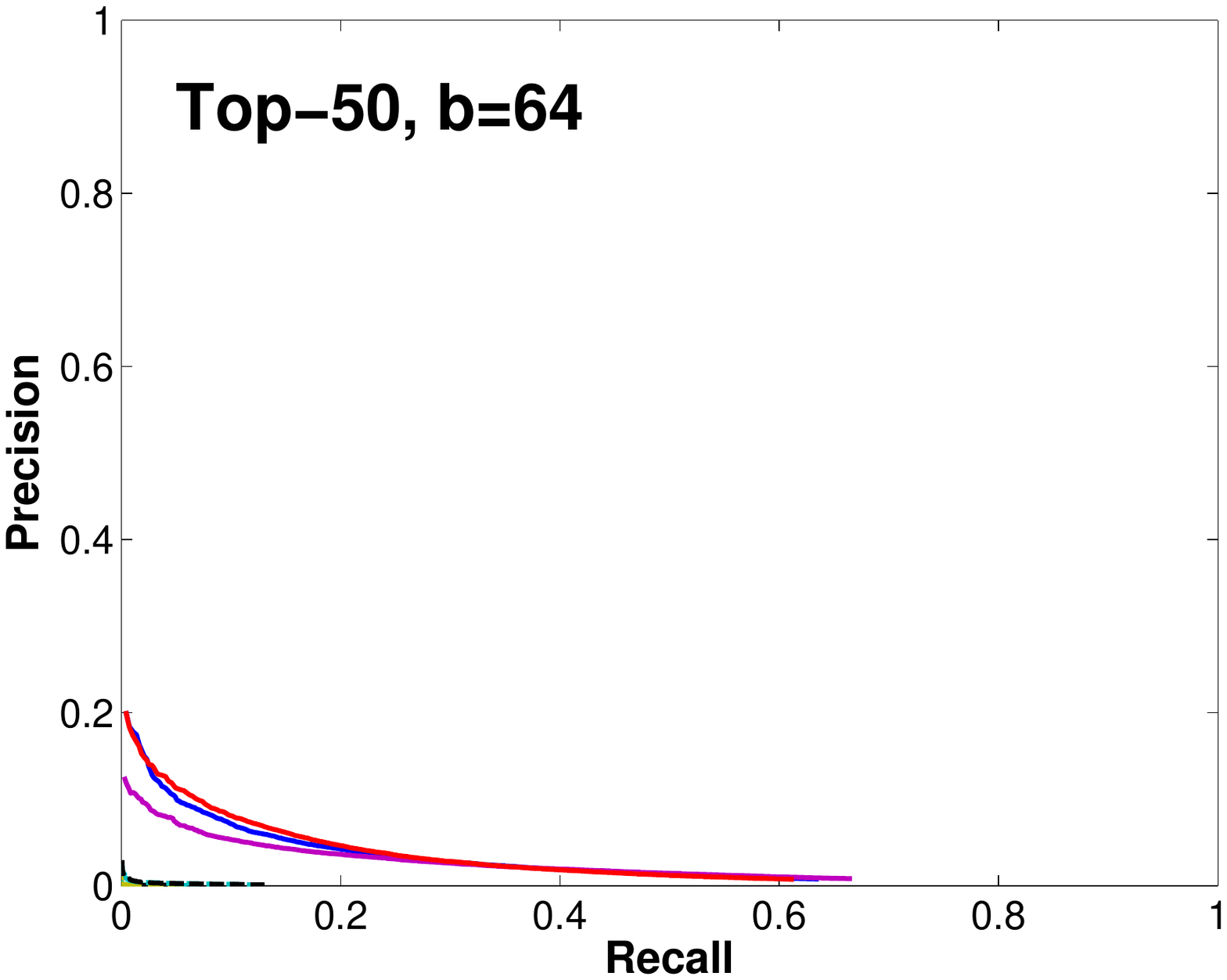} \hskip -3pt
\includegraphics[trim=0.9in 2.5in 0.9in 2.5in,clip,width=.248 \textwidth]{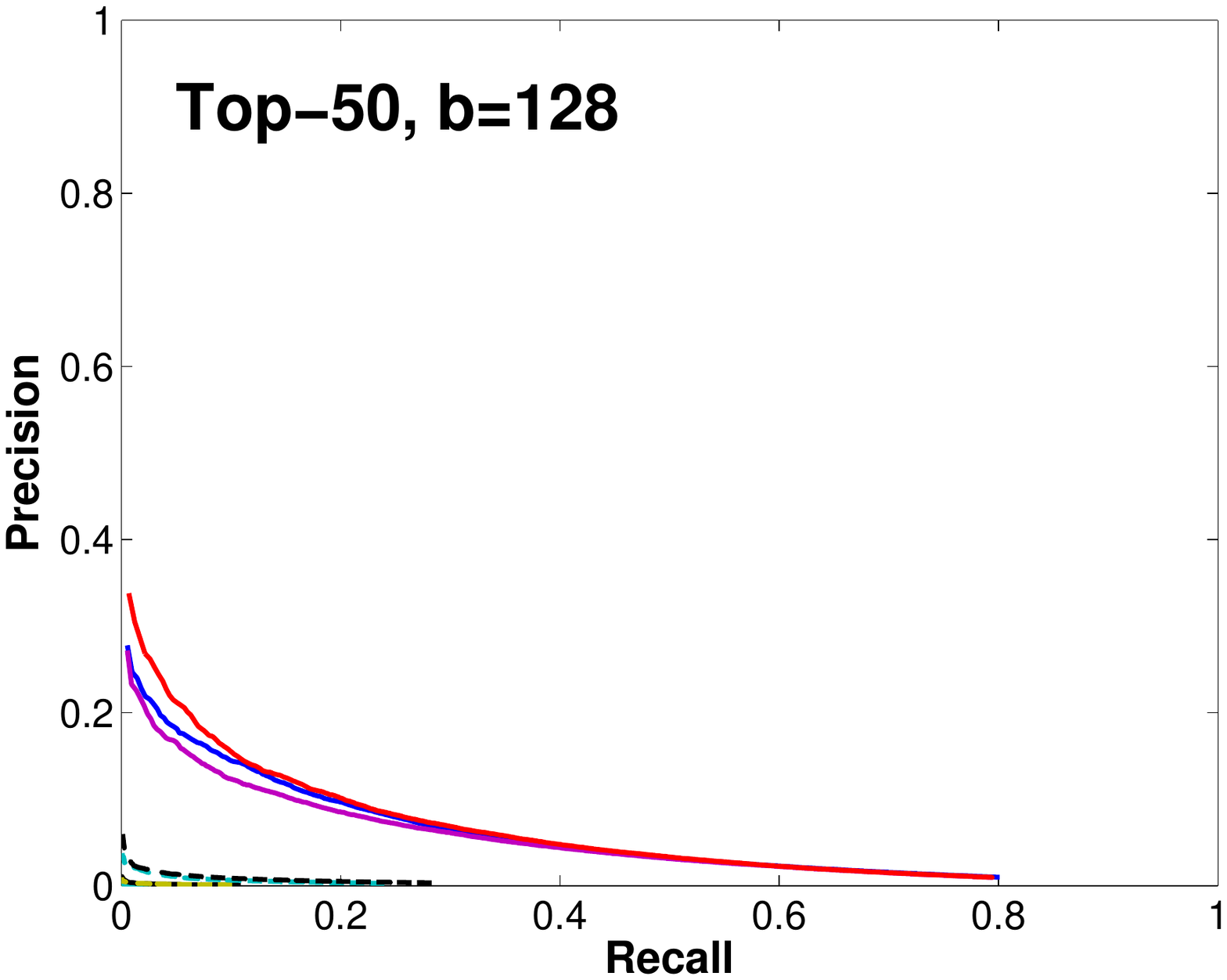} \hskip -3pt
\includegraphics[trim=0.9in 2.5in 0.9in 2.5in,clip,width=.248 \textwidth]{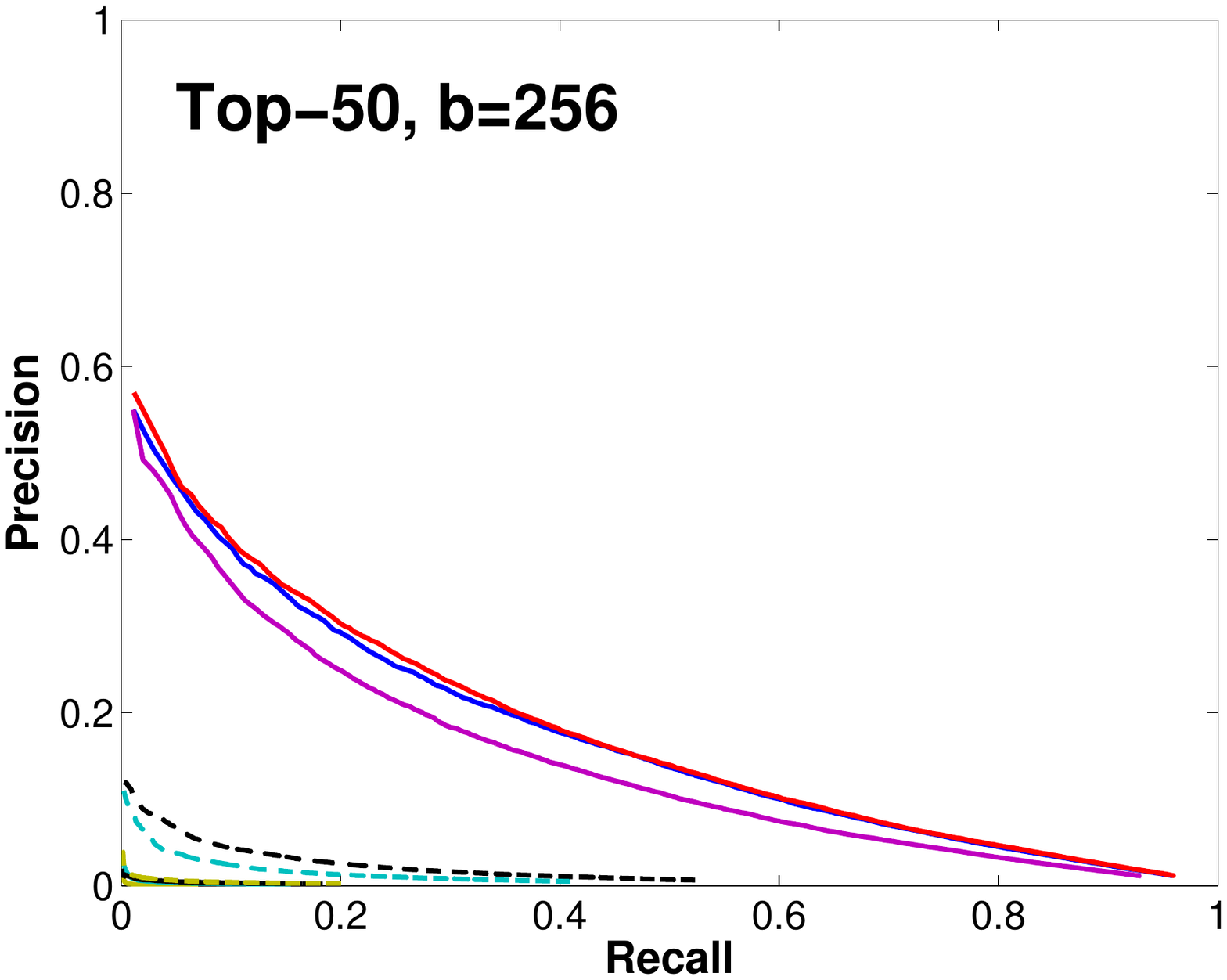} \hskip -3pt
\includegraphics[trim=0.9in 2.5in 0.9in 2.5in,clip,width=.248 \textwidth]{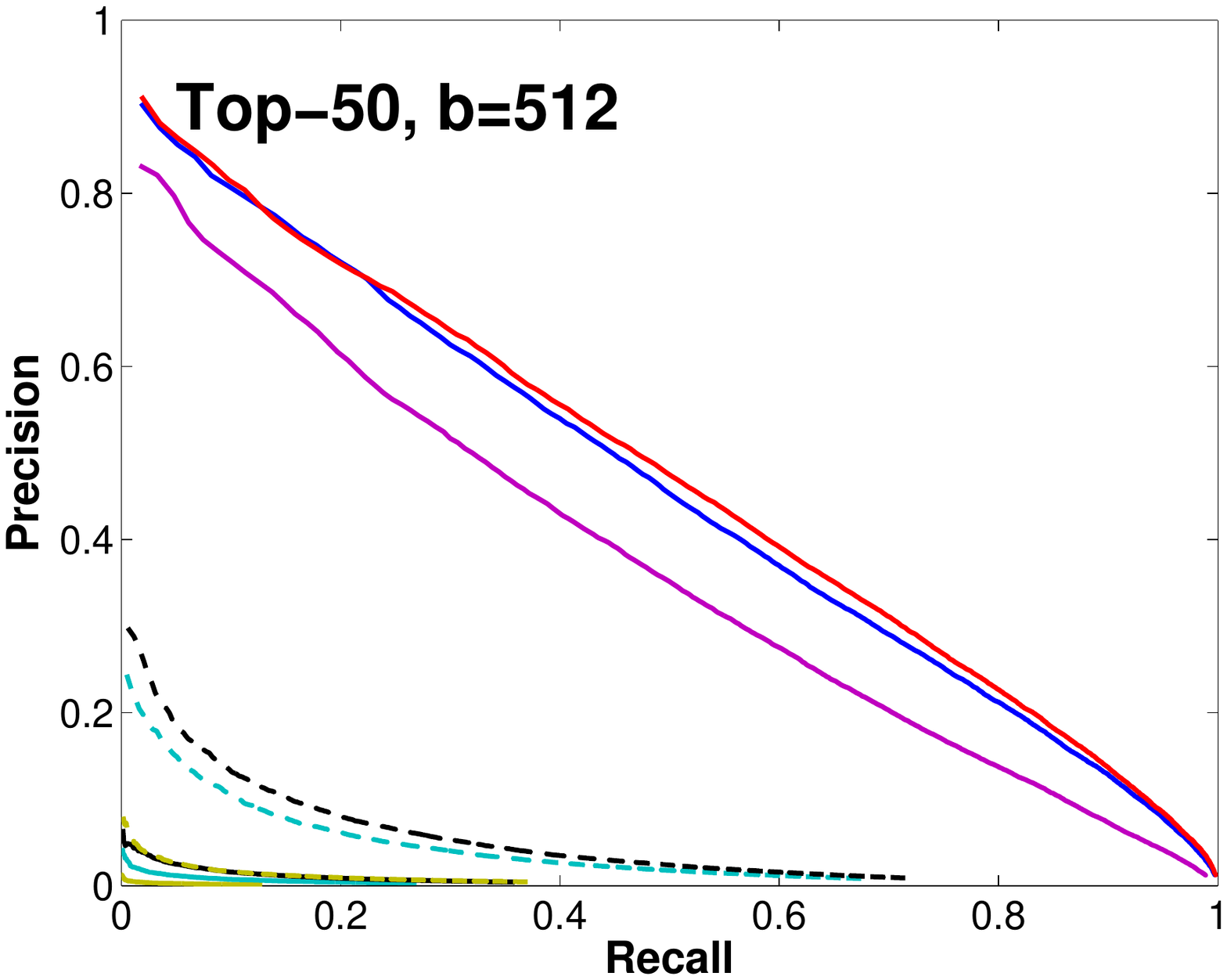}
\vskip -9pt
\includegraphics[trim=0.6in 2.5in 0.9in 2.5in,clip,width=.26 \textwidth]{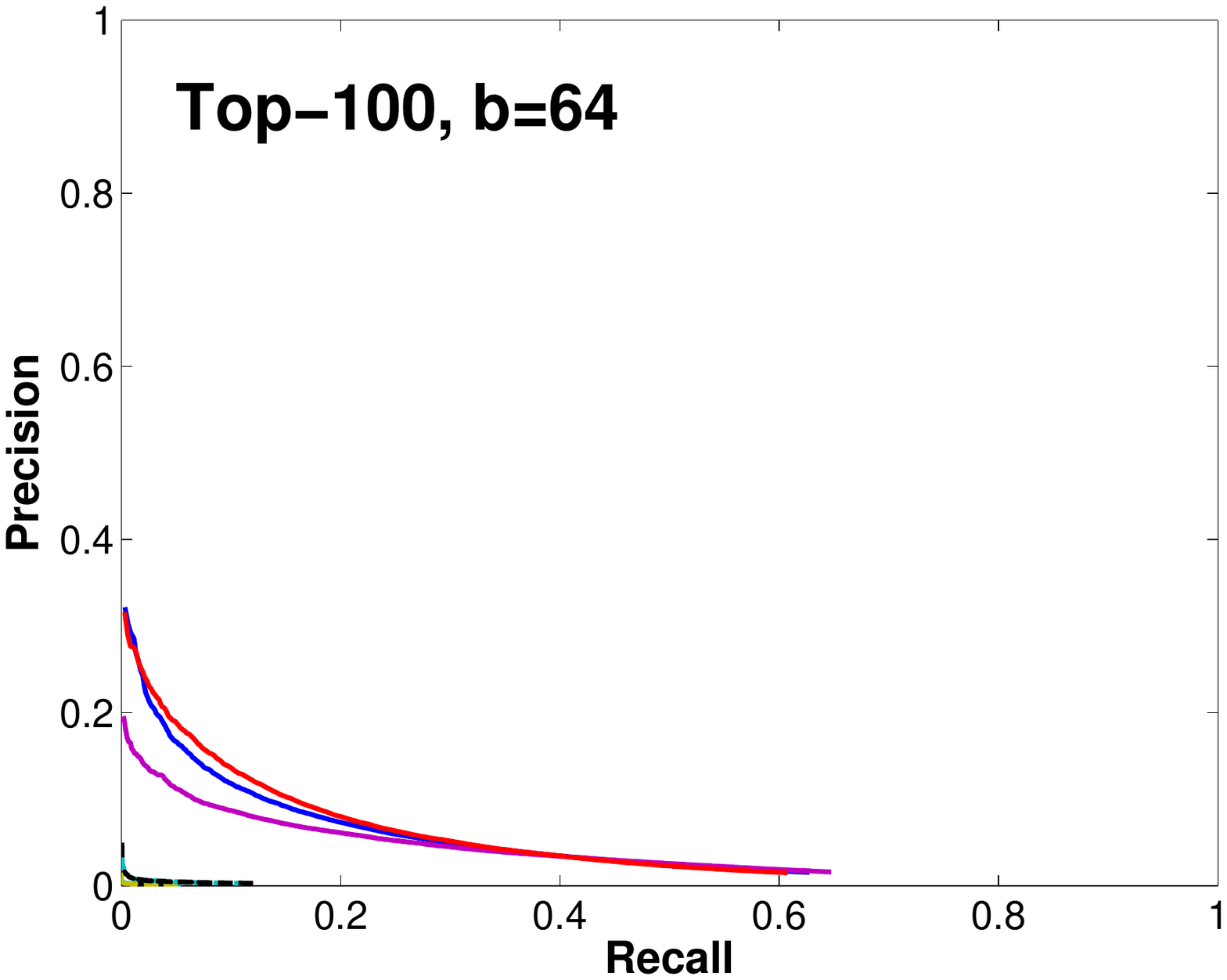} \hskip -3pt
\includegraphics[trim=0.9in 2.5in 0.9in 2.5in,clip,width=.248 \textwidth]{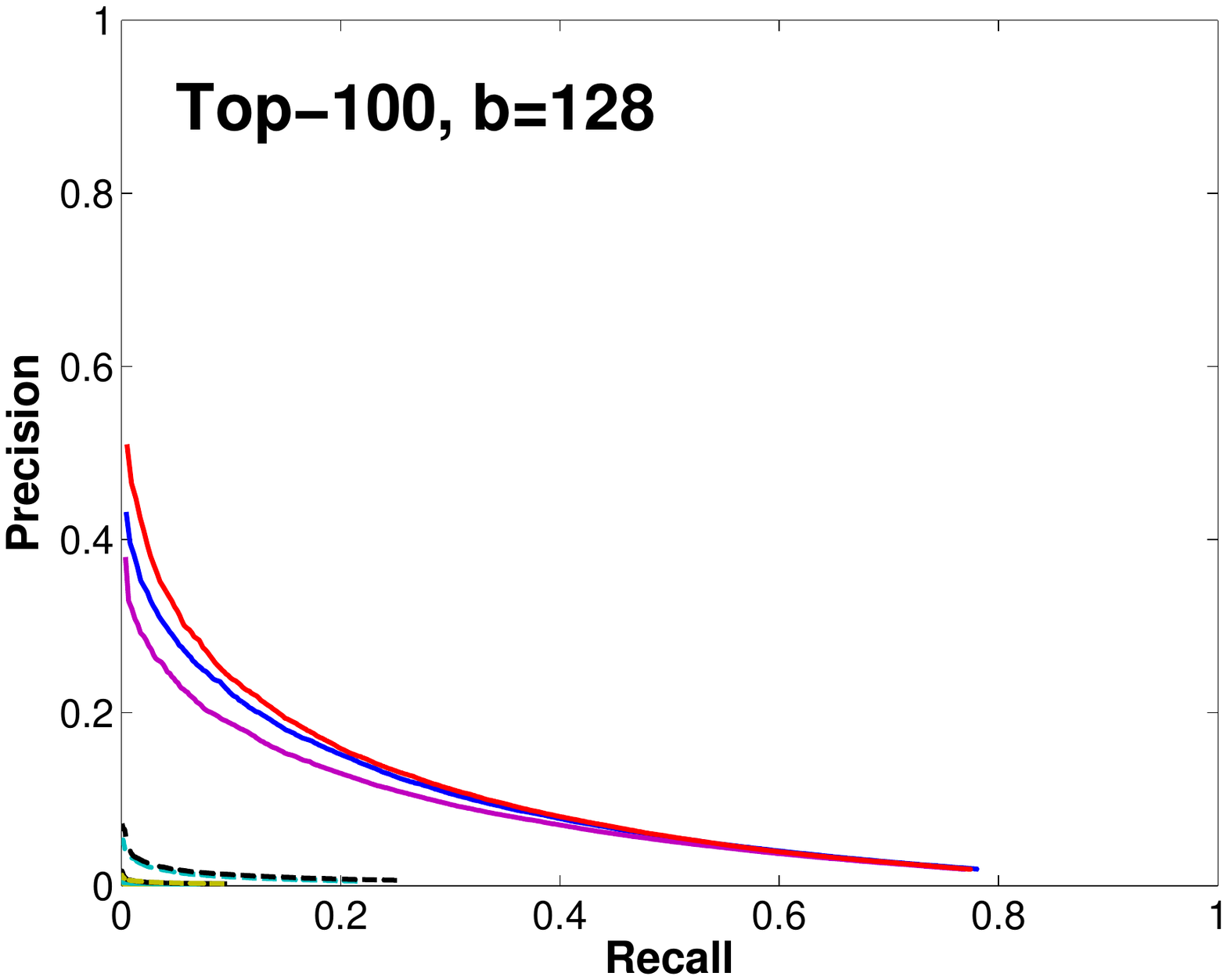} \hskip -3pt
\includegraphics[trim=0.9in 2.5in 0.9in 2.5in,clip,width=.248 \textwidth]{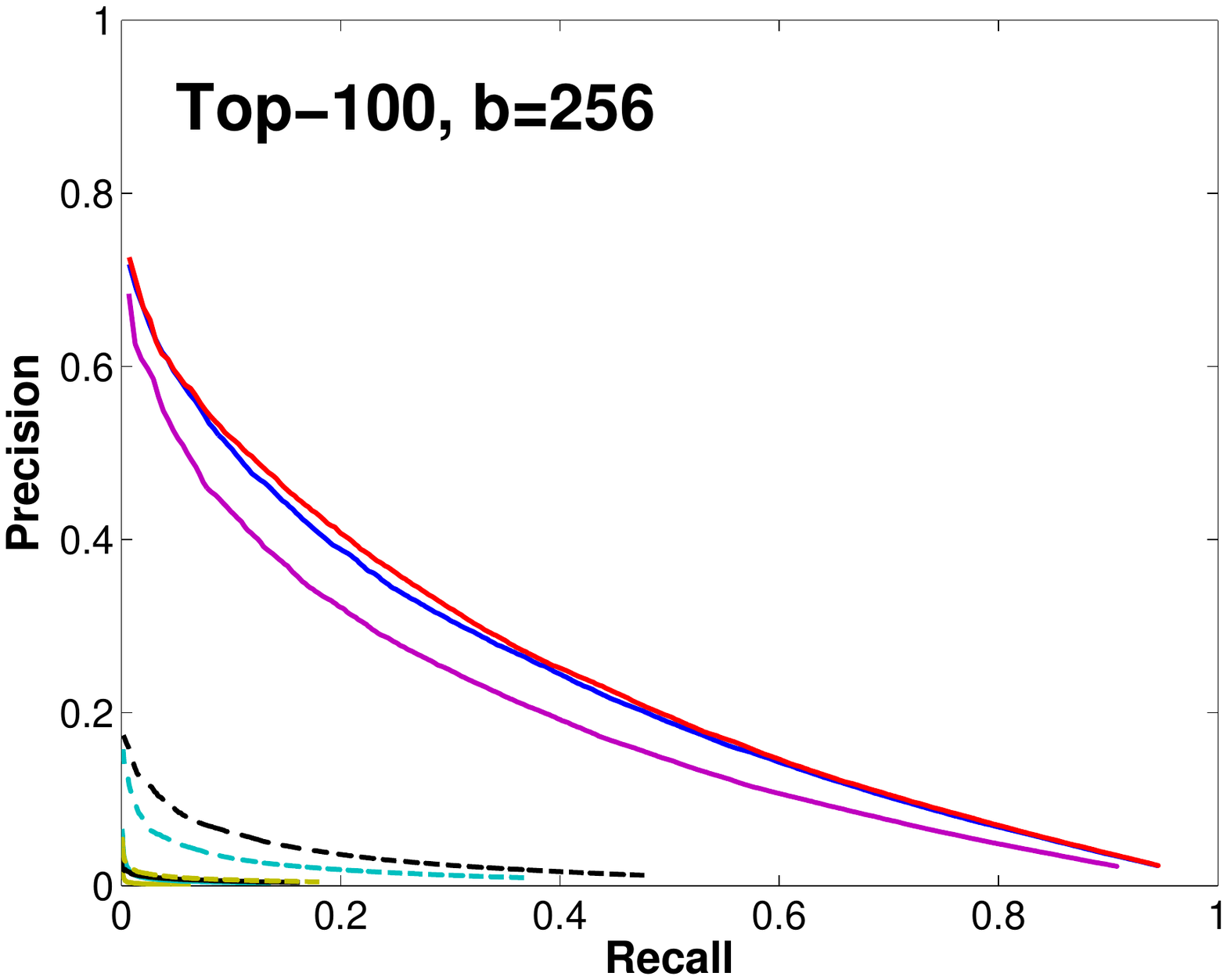} \hskip -3pt
\includegraphics[trim=0.9in 2.5in 0.9in 2.5in,clip,width=.248 \textwidth]{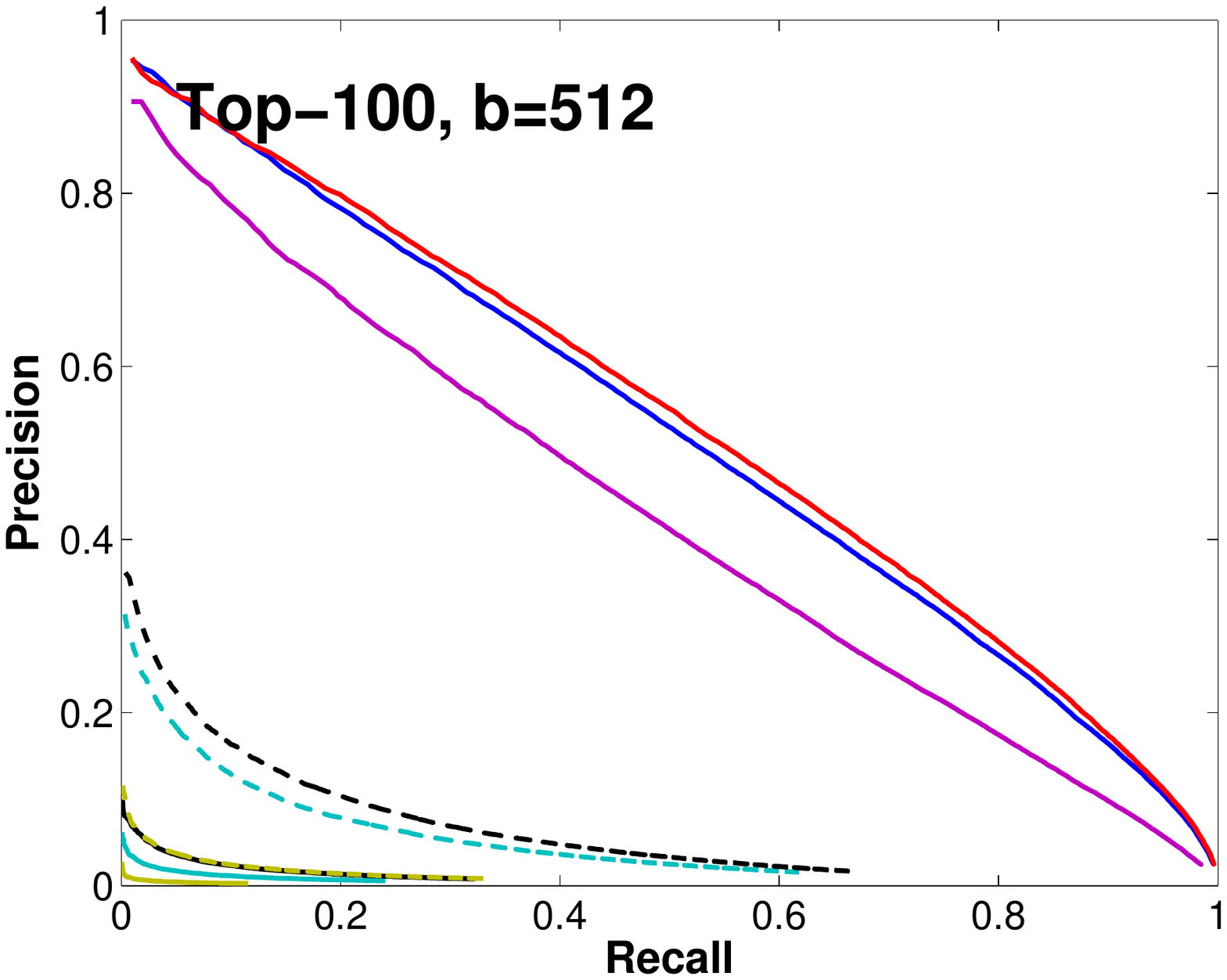}
\vskip -3pt
\subcaption{VideoRec dataset, retrieval of Top-10, 50 and 100 items.}
\vskip -3pt
\end{subfigure}
\begin{subfigure}[c]{1 \textwidth}
\end{subfigure}
\caption{\label{fig:top}
Precision Recall curves using different methods on \emph{ImageNet} and \emph{VideoRec}.}
\vskip -9pt
\end{figure}

\begin{figure}
\begin{subfigure}[c]{.49 \textwidth}
\includegraphics[trim=0.6in 2.5in 0.75in 2.5in,clip,width=1 \textwidth]{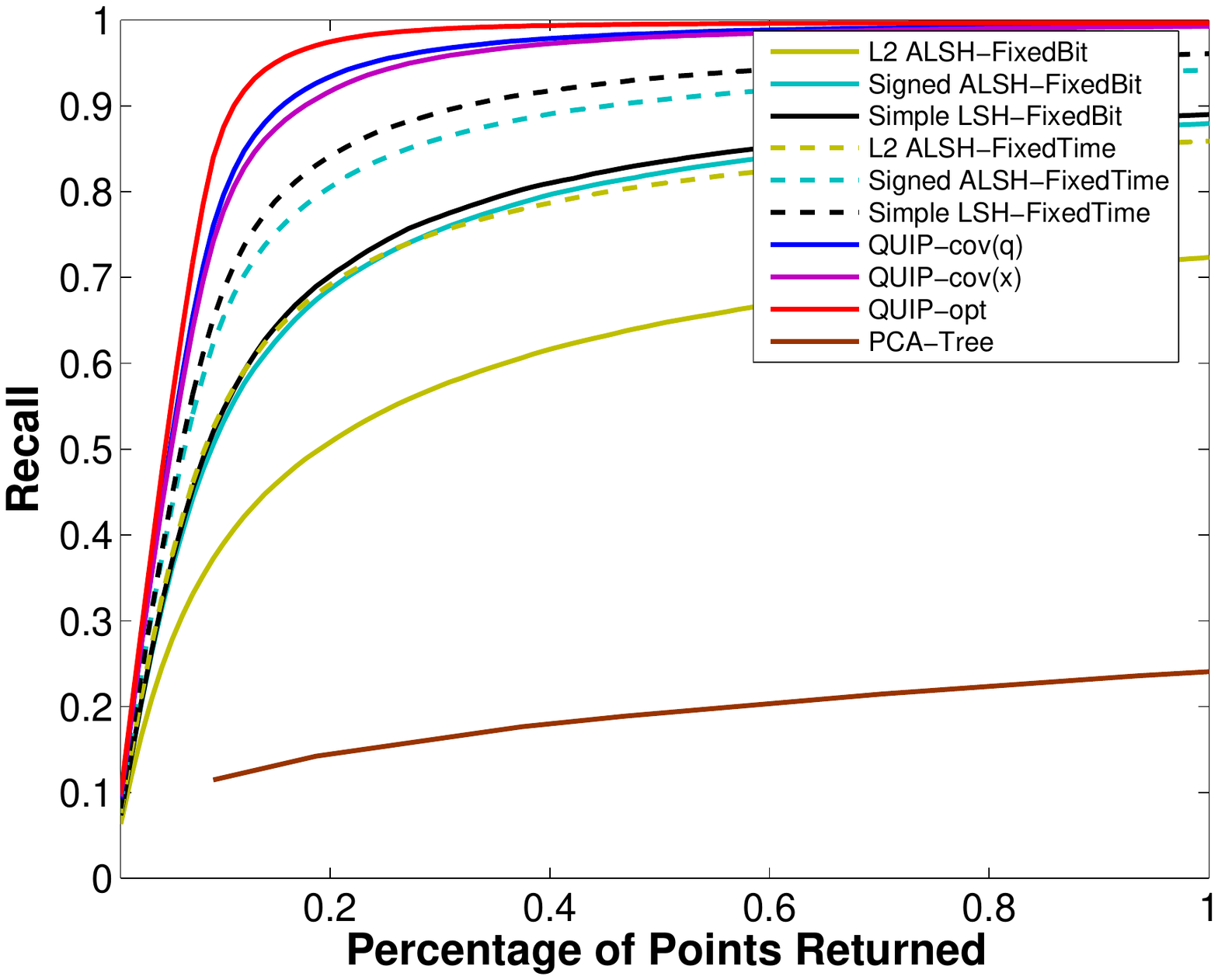}
\subcaption{Movielens, top-10}
\end{subfigure}
\begin{subfigure}[c]{.49 \textwidth}
\includegraphics[trim=0.6in 2.5in 0.75in 2.5in,clip,width=1 \textwidth]{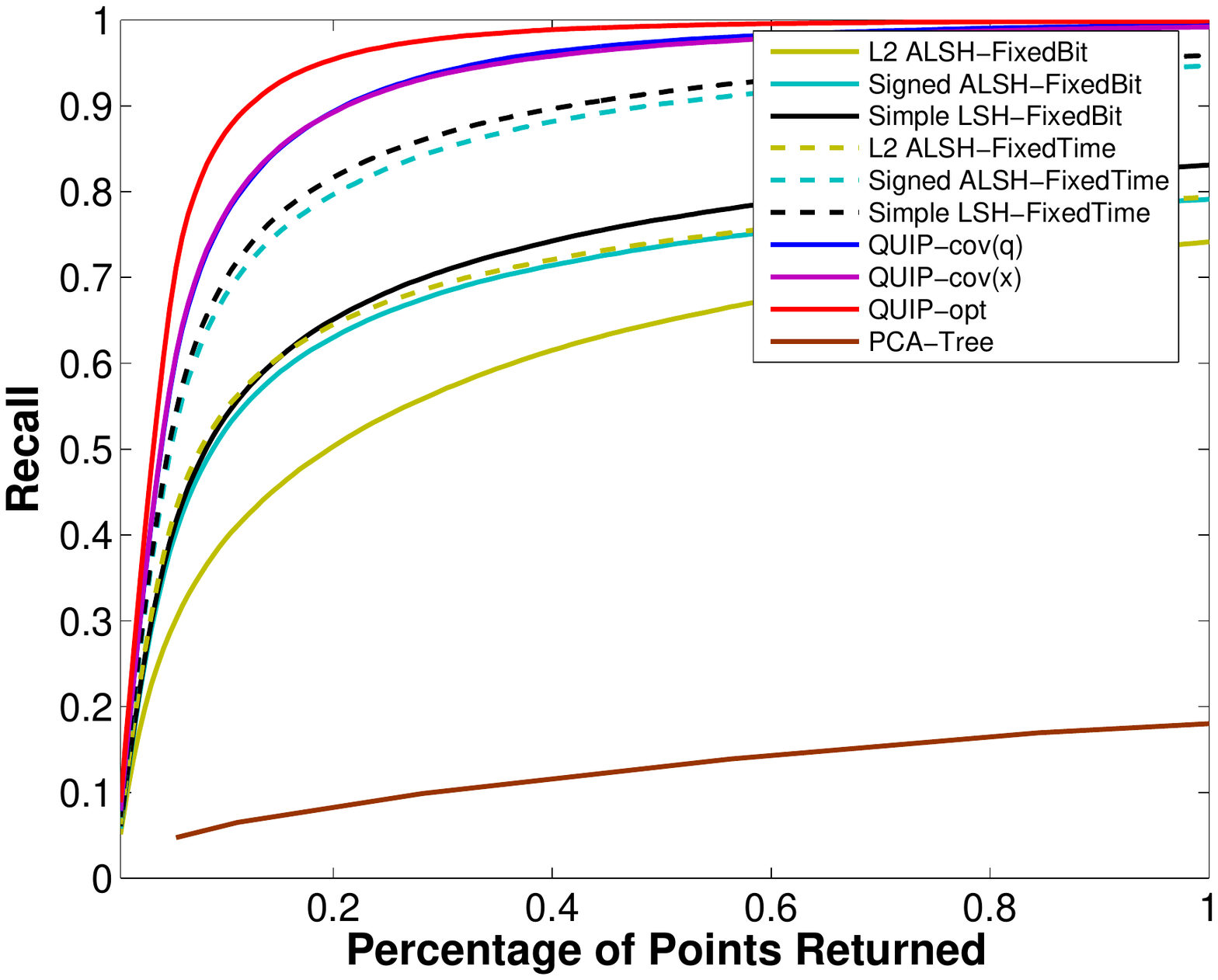}
\subcaption{Netflix, top-10}
\end{subfigure}
\begin{subfigure}[c]{.49 \textwidth}
\includegraphics[trim=0.6in 2.5in 0.75in 2.5in,clip,width=1 \textwidth]{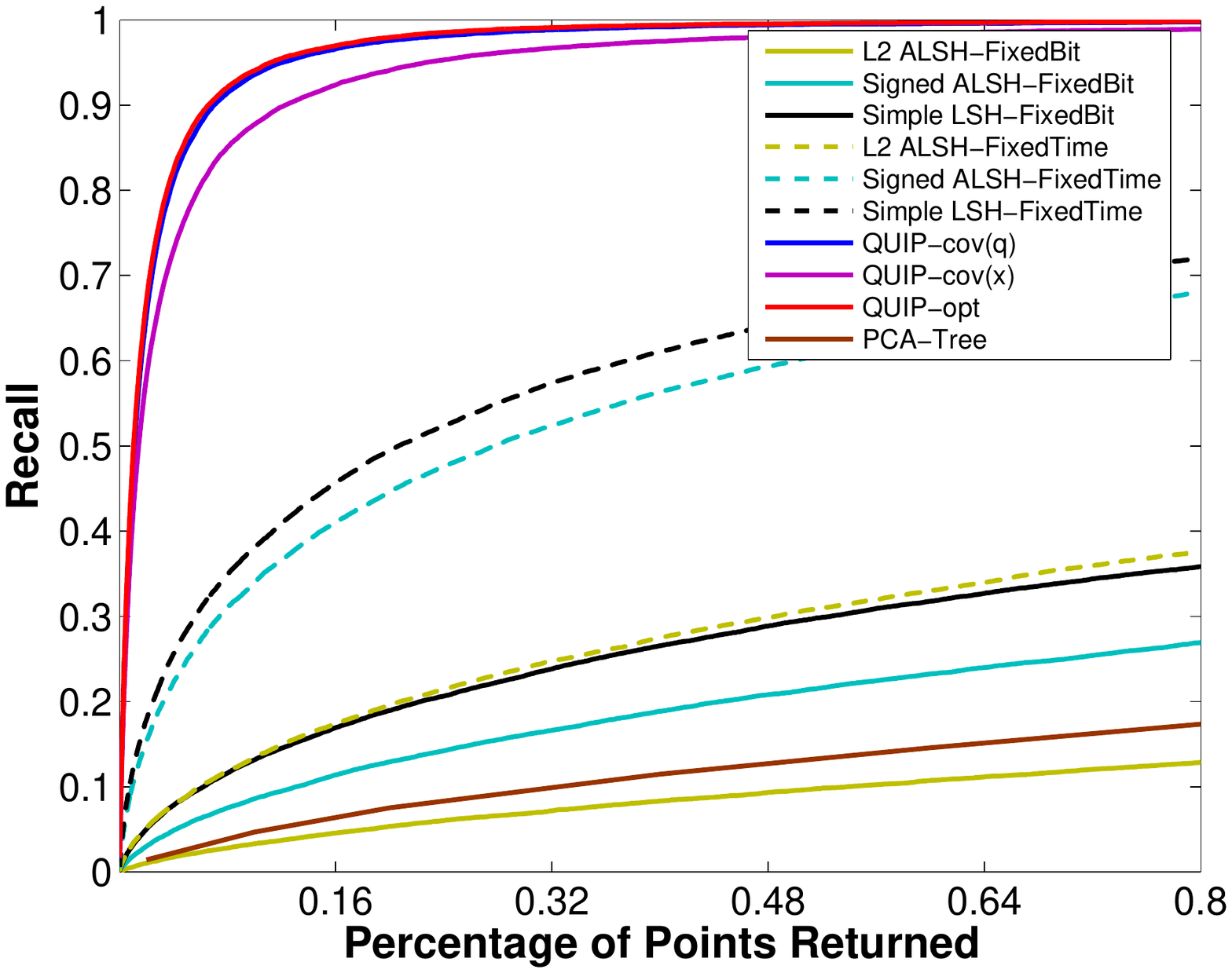}
\subcaption{VideoRec, top-50}
\end{subfigure}
\begin{subfigure}[c]{.49 \textwidth}
\includegraphics[trim=0.6in 2.5in 0.75in 2.5in,clip,width=1 \textwidth]{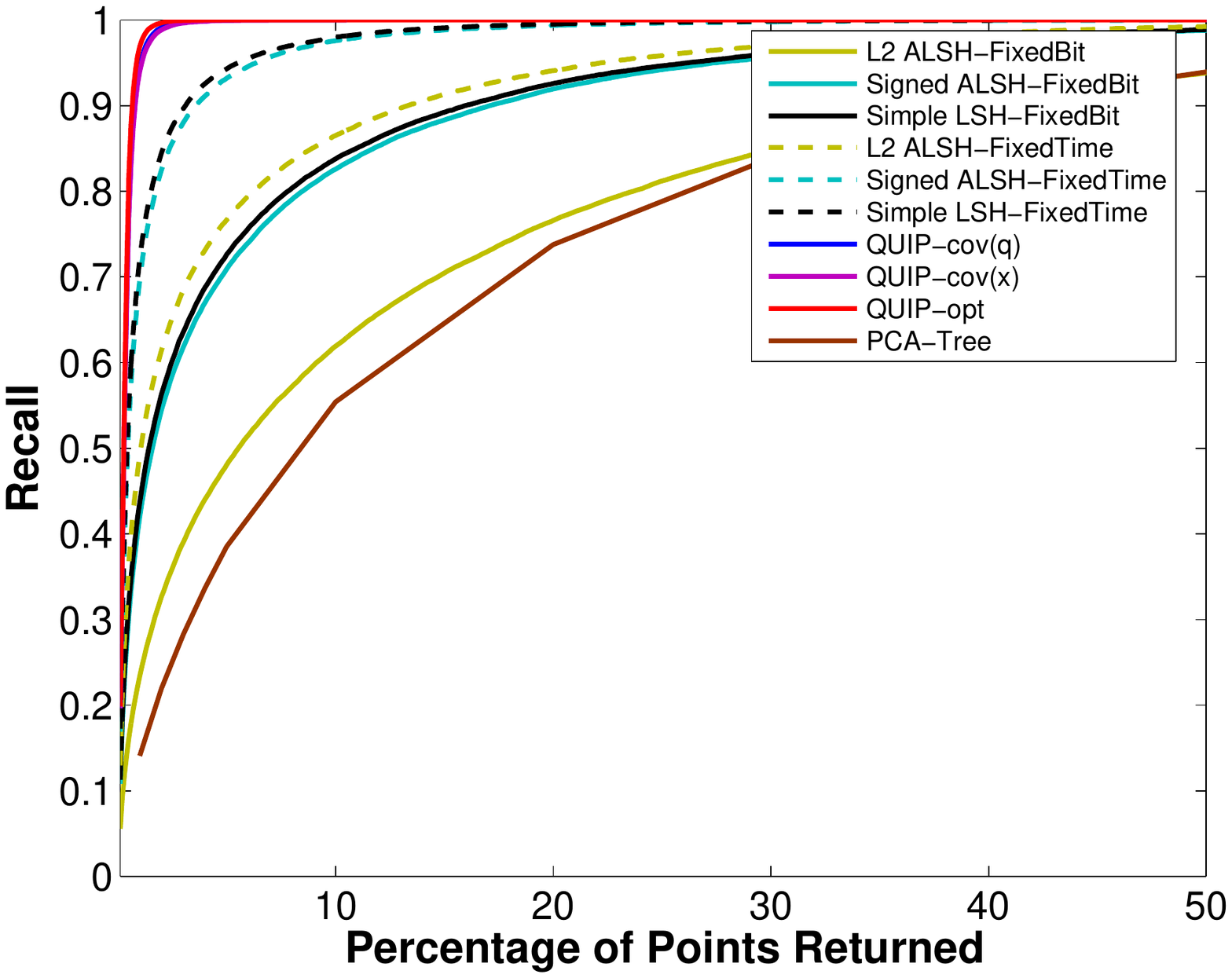}
\subcaption{ImageNet, top-5}
\end{subfigure}
\caption{\label{fig:recall}
Recall curves for different techniques under different numbers of returned neighbors (shown as the percentage of total number of points in the database). We plot the recall curve instead of the precision recall curve because \emph{PCA-Tree} uses original vectors to compute distances therefore the precision will be the same as recall in Top-K search. The number of bits used for all the plots is $512$, except for \emph{Signed ALSH-FixedTime},  \emph{L2 ALSH-FixedTime} and  \emph{Simple LSH-FixedTime}, which use $1536$ bits. \emph{PCA-Tree} does not perform well on these datasets, mostly due to the fact that the dimensionality of our datasets is relatively high ($150$ to $1025$ dimensions), and trees are known to be more susceptible to dimensionality. Note the the original paper from Bachrach et al. [2] used datasets with dimensionality $50$.}
\end{figure}

\subsection{Theoretical analysis - proofs}
%armia2
%\subsection{Introduction}

In this section we present proofs of all the theorems presented in the main body of the paper. We also show some additional theoretical results on our quantization based method.

%Armia
\subsubsection{Vectors' balancedness - proof of Theorem \ref{perm_theorem_main}}

In this section we prove Theorem \ref{perm_theorem_main} 
and show that one can also obtain balancedness property with the use of the random rotation.

%The proof of Theorem 4.1 is as follows:

\begin{proof}
Let us denote $v=(v_{1},...,v_{d})$ and $perm(v)=[B_{1},...,B_{K}]$,
where $B_{i}$ is the $i$th block ($i=1,...,K$).
Let us fix some block $B_{j}$.
For a given $i$ denote by $X^{j}_{i}$ a random variable such that $X^{j}_{i}=v_{i}^{2}$ if $v_{i}$ is the block $B_{j}$ after applying random permutation and $X^{j}_{i} = 0$ otherwise.
Notice that a random variable $N^{j} = \sum_{i=1}^{d} X^{j}_{i}$
captures this part of the squared norm of the vector $v$ that resides in block $j$. We have: 
\begin{equation}
E[N^{j}] = \sum_{i=1}^{d} E[X^{j}_{i}] = \sum_{i=1}^{d} \frac{1}{K}v_{i}^{2} = \frac{1}{K}\|v\|^{2}_{2}.
\end{equation}

Since the analysis presented above can be conducted for every block $B_{j}$, we complete the proof. 

\end{proof}

Another possibility is to use random rotation, that can be performed for instance by applying 
random normalized Hadamard matrix $\mathcal{H}_{n}$. The Hadamard matrix is a matrix with entries taken from the set $\{-1,1\}$, 
where the rows form an orthogonal system. Random normalized Hadamard matrix can be obtained from the above one by first multiplying 
by the random diagonal matrix $\mathcal{D}$, (where the entries on the diagonal are taken uniformly and independently from the 
set $\{-1,1\}$) and then by rescaling by the factor $\frac{1}{\sqrt{d}}$, where $d$ is the dimensionality of the data. Since dot 
product is invariant in regards to permutations or rotations, we end up with the equivalent problem.

If we take the random rotation approach then we have the following:

\begin{theorem}
\label{hadamard_theorem}
Let $v$ be a vector of dimensionality $d$ and let $0 < \eta <1$.
Then after applying to $v$ linear transformation $\mathcal{H}_{n}$, the transformed vector is 
$\eta$-balanced with probability at least $1-2de^{-\frac{(1-\eta)^{2}K^{2}}{2}}$, where $K$ is the number of blocks.
\end{theorem}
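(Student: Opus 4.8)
The plan is to exploit the explicit form $\mathcal{H}_{n} = \frac{1}{\sqrt{d}}\mathcal{H}\mathcal{D}$ of the normalized random Hadamard transform, where $\mathcal{H}$ is the deterministic $\pm 1$ Hadamard matrix and $\mathcal{D} = \mathrm{diag}(\epsilon_{1},\dots,\epsilon_{d})$ carries independent Rademacher signs $\epsilon_{j}\in\{-1,1\}$, and to control the image vector coordinate by coordinate. Since the transform is linear and orthogonal and Definition~\ref{def:balance} is homogeneous of degree two in $v$, I would first normalize to $\|v\|=1$; writing $w = \mathcal{H}_{n}v$, the balancedness requirement becomes $\|w^{(k)}\|^{2} \le \frac{1}{K} + (1-\eta)$ for every block $k$, and $\|w\| = \|v\| = 1$ because rotation preserves the norm.

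Next I would express each coordinate of the image as a bounded Rademacher sum,
\[
w_{i} = (\mathcal{H}_{n}v)_{i} = \frac{1}{\sqrt{d}}\sum_{j=1}^{d} \mathcal{H}_{ij}\,\epsilon_{j}\,v_{j} = \sum_{j=1}^{d} \epsilon_{j}\, b_{ij}, \qquad b_{ij} := \frac{\mathcal{H}_{ij}v_{j}}{\sqrt{d}},
\]
and record that $\sum_{j} b_{ij}^{2} = \frac{1}{d}\sum_{j} v_{j}^{2} = \frac{1}{d}$, using $\mathcal{H}_{ij}^{2}=1$. In particular $\E[w_{i}^{2}] = 1/d$, hence $\E[\|w^{(k)}\|^{2}] = l/d = 1/K$ with $l=\lceil d/K\rceil$, which recovers the first-moment content of Theorem~\ref{perm_theorem_main}; what remains is a high-probability upper tail. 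The engine is Hoeffding's inequality applied to $w_{i}$: since $\sum_{j}b_{ij}^{2} = 1/d$, we obtain $\mathbb{P}(|w_{i}| \ge t) \le 2\exp(-t^{2}d/2)$ for every $t>0$. I would then pick the threshold $t$ so that the deterministic event $\{\,|w_{i}| \le t \text{ for all } i\,\}$ already forces balancedness: on that event $\|w^{(k)}\|^{2} = \sum_{i\in B_{k}} w_{i}^{2} \le l\,t^{2}$ for each block $B_{k}$, so it suffices that $l\,t^{2}$ not exceed the allowance $\frac{1}{K} + (1-\eta)$. Calibrating against the Hoeffding exponent $t^{2}d/2$ and taking a union bound over the $d$ coordinates then produces both the $2d$ prefactor and the exponential decay: concretely, taking $t = (1-\eta)K/\sqrt{d}$ makes the per-coordinate exponent equal to $(1-\eta)^{2}K^{2}/2$, and the union bound over the $d$ coordinates gives the stated failure probability $2d\,e^{-(1-\eta)^{2}K^{2}/2}$.

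The main obstacle is precisely this calibration. Hoeffding naturally controls a single coordinate, whereas balancedness is a statement about the per-block sum $\sum_{i\in B_{k}}w_{i}^{2}$, so one must trade the $l = \lceil d/K\rceil$ coordinates of a block against the permitted surplus $(1-\eta)$ and verify that the chosen $t$ keeps $l\,t^{2}$ inside the allowance (while also absorbing the rounding in $\lceil d/K\rceil$). This passage from coordinate-wise control to block-norm control is where the quantitative bookkeeping must be done carefully. A sharper alternative I would keep in reserve is to concentrate $\|w^{(k)}\|^{2}$ directly, noting that $d\,\|w^{(k)}\|^{2}$ is an approximately $\chi^{2}_{l}$ quantity amenable to a Bernstein or Laurent--Massart tail bound, followed by a union bound over the $K$ blocks rather than the $d$ coordinates; however, the $2d$ factor and the single-coordinate exponential appearing in the statement indicate that the coordinate-wise Hoeffding estimate combined with the union bound over coordinates is the intended and most direct route.
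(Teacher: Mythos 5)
Your plan has a genuine gap at exactly the step you flagged as ``the main obstacle'': the calibration does not close, and the coordinate-wise route cannot be repaired to give the stated exponent. With $\|v\|=1$ and $t=(1-\eta)K/\sqrt{d}$ you get $l\,t^{2} = \frac{d}{K}\cdot\frac{(1-\eta)^{2}K^{2}}{d} = (1-\eta)^{2}K$, which must be compared to the allowance $\frac{1}{K}+(1-\eta)$; for, say, $\eta=1/2$ and $K=8$ this is $2$ versus $0.625$, so the event $\{|w_{i}|\le t\ \forall i\}$ does not imply balancedness and the union bound proves nothing about the event you need. Worse, forcing $l\,t^{2}\le \frac{1}{K}+(1-\eta)$ requires $t^{2}\le \frac{1+(1-\eta)K}{d}$, and then the Hoeffding exponent $t^{2}d/2$ is $\frac{1+(1-\eta)K}{2}$ --- linear in $K$, not quadratic. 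The loss is structural: bounding $\|w^{(k)}\|^{2}\le l\max_{i\in B_{k}}w_{i}^{2}$ charges every coordinate of the block with its full high-probability deviation, whereas the $K^{2}$ in the exponent comes from the fact that the block norm only needs to exceed its \emph{mean} $1/K$ by $(1-\eta)$.

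The paper's proof avoids this by centering each squared coordinate at its expectation before concentrating. Writing the $i$th transformed coordinate's square as
\begin{equation*}
w_{i}^{2} \;=\; \frac{\|v\|^{2}}{d} \;+\; 2\!\!\sum_{1\le j_{1}<j_{2}\le d}\!\! v_{j_{1}}v_{j_{2}}h_{i,j_{1}}h_{i,j_{2}},
\end{equation*}
the deterministic parts already sum to exactly $\|v\|^{2}/K$ over a block, so only the zero-mean Rademacher-chaos cross term must be controlled. Its natural scale is $\|v\|^{2}/d$, and Azuma's inequality (the paper's Lemma 2.2) at deviation level $a=(1-\eta)K\|v\|^{2}/d$ --- i.e.\ $(1-\eta)K$ ``standard deviations'' --- yields the exponent $(1-\eta)^{2}K^{2}/2$; a union bound over the $d/K$ coordinates of a block and then over the $K$ blocks produces the $2d$ prefactor. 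So your instinct that the $2d$ signals a union bound over coordinates is right, but the per-coordinate quantity being concentrated is $w_{i}^{2}-\mathbb{E}[w_{i}^{2}]$, not $|w_{i}|$. Your ``alternative in reserve'' (concentrating $\|w^{(k)}\|^{2}$ directly) is in fact the viable direction and is essentially what the paper does, implemented coordinate-by-coordinate via the cross-term expansion rather than via a chi-square tail bound.
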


\begin{proof}

\begin{figure}
\centering
\begin{subfigure}[c]{1 \textwidth}
\includegraphics[trim=0.8in 2.5in 0.9in 2.5in,clip,width=.49 \textwidth]{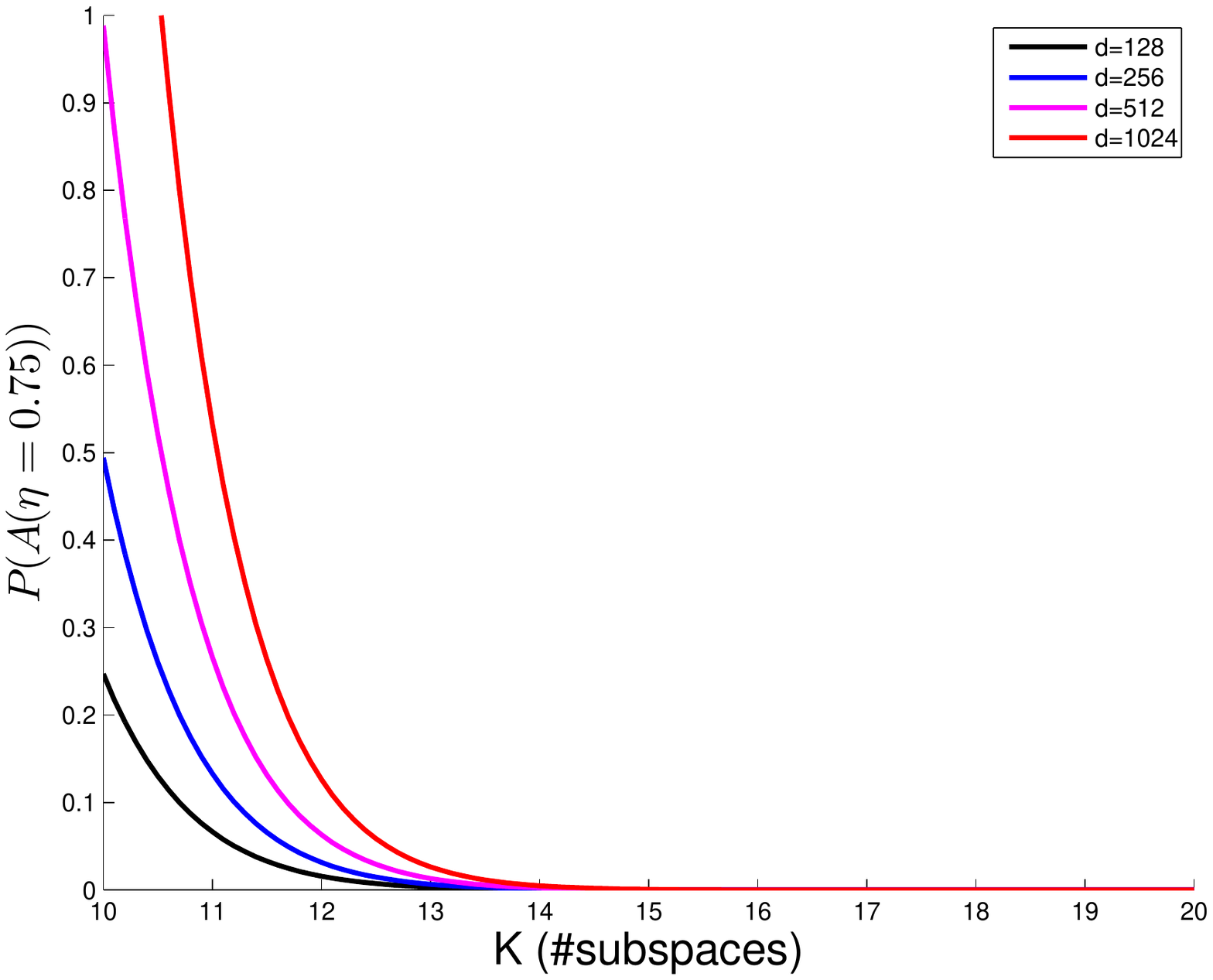}
\includegraphics[trim=0.8in 2.5in 0.9in 2.5in,clip,width=.49 \textwidth]{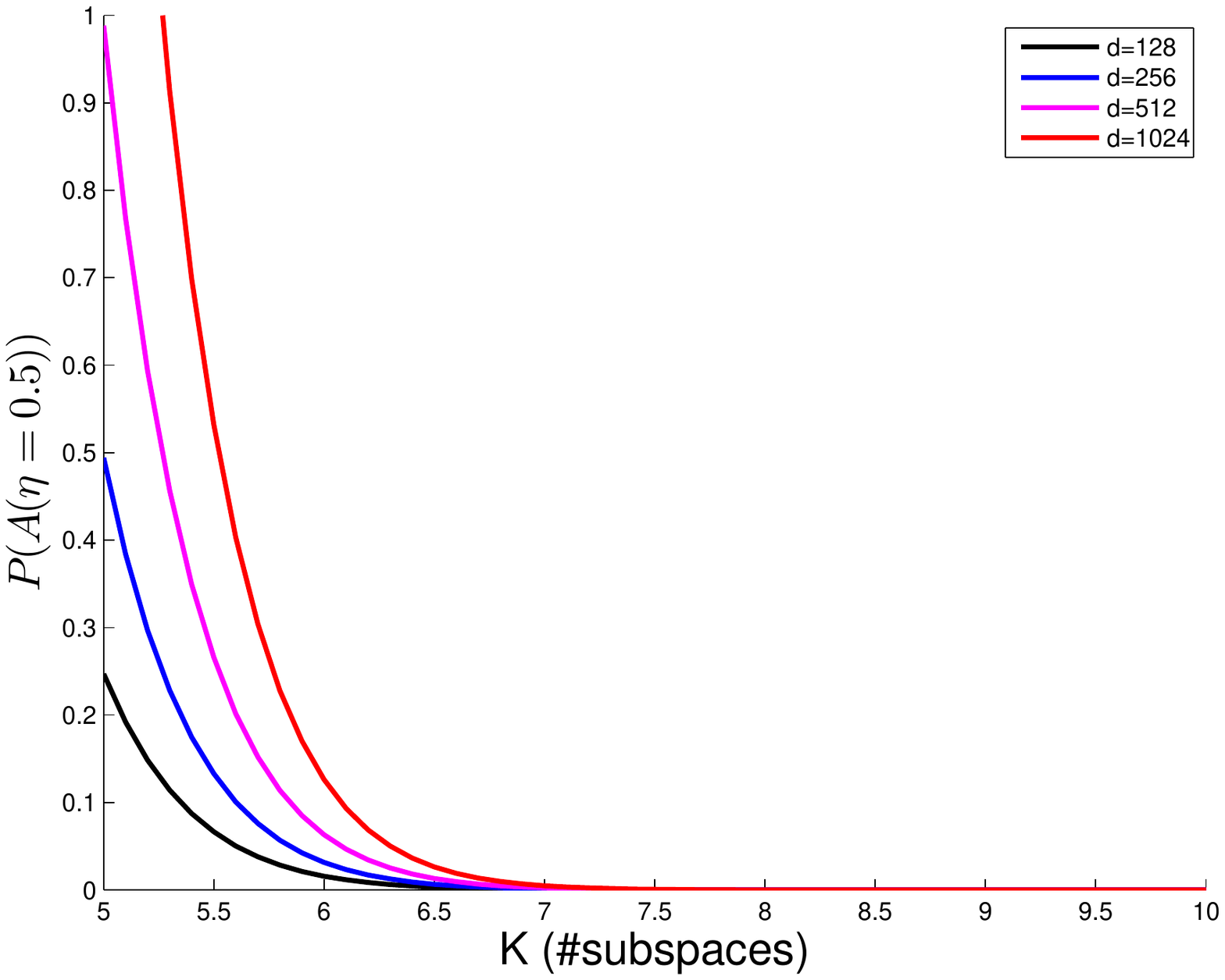}
%\subcaption{Movielens dataset, number of bits used are 64, 128, 256 and 512, %respectively.}
\end{subfigure}
\caption{Upper bound on the probability 
of an event $A(\eta)$ that a vector $v$ obtained by the random rotation is not $\eta$-balanced as a function of the number of
subspaces $K$. The left figure corresponds to $\eta=0.75$ and the right one to $\eta=0.5$.
Different curves correspond to different data dimensionality ($d=128,256,512,1024$).}
\label{pfail_upper_bound}
\end{figure}

We start with the following Azuma's concentration inequality that we will also use later:

\begin{lemma}
\label{conc_lemma}
Let $X_{1},X_{2},...$ be random variables such that $E[X_{1}]=0$, $E[X_{i}|X_{1},...,X_{i-1}] = 0$
and $-\alpha_{i} \leq X_{i} \leq \beta_{i}$ for $i=1,2,...$
and some $\alpha_{1},\alpha_{1},...,\beta_{1},\beta_{2},...>0$. Then $\{X_{1},X_{2},...\}$ is a martingale and the following holds for any $a>0$:
$$
\mathbb{P}(\sum_{i=1}^{n}X_{i} \geq a) \leq exp(-\frac{2a^{2}}{\sum_{i=1}^{n}(\alpha_{i}+\beta_{i})^{2}}).
$$
\end{lemma}

Let us denote: $v=(v_{1},...,v_{d})$.
The $j$th entry of the transformed $x$ is of the form:
$h_{j,1}v_{1} + ... + h_{j,d}v_{d}$, where $(h_{j,1},...,h_{j,d})$ is the $j$th row of $\mathcal{H}_{n}$ and thus each $h_{j,i}$ (for the fixed $j$) takes uniformly at random and independently a value from the set $\{-\frac{1}{\sqrt{d}},\frac{1}{\sqrt{d}}\}$.

Let us consider random variable $Y_{1} = \sum_{j=1}^{\frac{d}{K}} (h_{j,1}v_{1}+...+h_{j,d}v_{d})^{2}$ that captures the squared $L_{2}$-norm of the first block of the transformed vector $v$. We have:
\begin{equation}
E[Y_{1}] = \sum_{j=1}^{\frac{d}{K}} (\frac{1}{d}v_{1}^{2}+...+\frac{1}{d}v_{d}^{2})+
2\sum_{j=1}^{\frac{d}{K}} \sum_{1\leq i_{1} < i_{2} \leq d}v_{i_{1}}v_{i_{2}}E[h_{j,i_{1}}h_{j,i_{2}}]=\frac{\|v\|_{2}^{2}}{K},
\end{equation}

where the last inequality comes from the fact that $E[h_{j,i_{1}}h_{j,i_{2}}]=0$
for $i_{1} \neq i_{2}$.
Of course the same argument is valid for other blocks, thus we can conclude that in expectation the transformed vector is $1$-balanced. Let us prove now some concentration inequalities regarding this result.
Let us fix some $j \in \{1,...,d\}$.
Denote $\xi_{i_{1},i_{2}} = v_{i_{1}}v_{i_{2}}h_{j,i_{1}}h_{j,i_{2}}$.
Let us find an upper bound on the probability $\mathbb{P}(|\sum_{1 \leq i_{1} < i_{2} \leq d} \xi_{i_{1},i_{2}}|>a)$ for some fixed $a>0$.
We have already noted that $E[\sum_{1 \leq i_{1} < i_{2} \leq d} \xi_{i_{1},i_{2}}]=0$. 

Thus, by applying Lemma \ref{conc_lemma}, we get the following:
\begin{equation}
\mathbb{P}(|\sum_{1 \leq i_{1} < i_{2} \leq d} \xi_{i_{1},i_{2}}|>a) \leq
2e^{-\frac{a^{2}d^{2}}{2(\sum_{i=1}^{d} v_{i}^{2})^{2}}}.
\end{equation}
Therefore, by the union bound, $\mathbb{P}(|Y_{1}-\frac{\|v\|^{2}_{2}}{K}| > \frac{da}{K}) \leq \frac{2d}{K}e^{-\frac{a^{2}d^{2}}{2(\sum_{i=1}^{d} v_{i}^{2})^{2}}}$.
Let us fix $\eta > 0$.
Thus by taking $a=\frac{\sigma K \|v\|^{2}_{2}}{d}$, and again applying the union bound (over all the blocks) we conclude that the transformed vector $v$ is not $\eta$-balanced with probability at most $2de^{-\frac{(1-\eta)^{2}K^{2}}{2}}$.
That completes the proof.
\end{proof}

Calculated upper bound on the probability of failure from Theorem \ref{hadamard_theorem} as a function of 
the number of blocks $K$ is presented on Fig. \ref{pfail_upper_bound}. We clearly see that failure probability 
exponentially decreases with number of blocks $K$.

\subsubsection{Proof of Theorem \ref{lipsch_theory_main}}

If some boundedness and balancedness conditions regarding datapoints can be assumed, we can 
obtain exponentially-strong concentration results regarding unbiased estimator considered in the paper. 
Next we show some results that can be obtained even if the boundedness and balancedness conditions do not hold.
Below we present the proof of Theorem \ref{lipsch_theory_main}.

\begin{proof}

Let us define: $\mathcal{Z} = \sum_{k=1}^{K} \mathcal{Z}^{(k)}$, where: $\mathcal{Z}^{(k)} = q^{(k)T}x^{(k)} - q^{(k)T}u_{x}^{(k)}$.
We have:
\begin{align}
\begin{split}
\label{main_ineq}
\mathbb{P}(\mathcal{F}(a,\epsilon)) = \mathbb{P}((q^{T}x > a) \land (q^{T}u_{x}>q^{T}x(1+\epsilon)) \lor (q^{T}u_{x}<q^{T}x(1-\epsilon)))\\
 \leq \mathbb{P}(|q^{T}x-q^{T}u_{x}| > a \epsilon)\\
 =\mathbb{P}(|\sum_{k=1}^{K}(q^{(k)T}x^{(k)}-q^{(k)T}u_{x}^{(k)})| > a \epsilon)\\
 = \mathbb{P}(|\sum_{k=1}^{K} \mathcal{Z}^{(k)}| > a\epsilon).
\end{split}
\end{align}
%armia

Note that from Eq. (\ref{main_ineq}), we get:
\begin{equation}
\label{imp_eq}
\mathbb{P}(\mathcal{F}(a,\epsilon)) \leq \mathbb{P}(|\sum_{k=1}^{K} \mathcal{Z}^{(k)}| > a \epsilon).
\end{equation}

Let us fix now the $k$th block ($k=1,...,K$). 
From the $\eta$-balancedness we get that every datapoint truncated to its $k$th block is within distance 
$\gamma = \sqrt{(\frac{1}{K}+(1-\eta))}r$ to $p^{(k)}$ (i.e. $z$ truncated to its $k$th block). Now consider in the linear space related to the $k$th block the ball $\mathcal{B}^{'}(p^{(k)},\gamma)$. Note that 
since the dimensionality of each datapoint truncated to the $k$th block is $\frac{d}{K}$, we can conclude that all datapoints truncated to their $k$th blocks that reside in $\mathcal{B}^{'}(p^{(k)},\gamma)$ can be covered by $c$ balls of radius $r^{'}$ each, where: $(\frac{\gamma}{r^{'}})^{\frac{d}{K}} = c$. We take as the set of quantizers $u^{(k)}_{1},...,u^{(k)}_{C}$ for the $k$th block the centers of mass of sets consisting of points from these balls. 
We will show now that sets: $\{u^{(k)}_{1},...,u^{(k)}_{C}\}$ ($k=1,...,K$) defined in such a way are the codebooks we are looking for.

%Consider the subspace $X^{(k)}$ and take in $X^{(k)}$ balls $\mathcal{B}^{(k)}$
%(p_{1}^{(k)},\gamma),...,\mathcal{B}^{(k)}(p_{1}^{(k)}$ 
%From the definition of the doubling dimension we can then conclude that
%one can find a set of $c$ quantizers in the truncated space $X^{(k)}$
%in such a way that each $x^{(k)}$ is within distance 
%Note that from the definition of the doubling dimension, $X^{(k)}$ can be covered by %$c$ balls of radius
%$r_{k}=\frac{R_{k}}{c^{\frac{1}{d}}}$ each. Thus in particular  

From the triangle inequality and Cauchy-Schwarz inequality, we get: 
\begin{equation}
\label{z_bound}
|\mathcal{Z}^{(k)}| \leq (\max_{q \in Q} \|q^{(k)}\|_{2})(\max_{x \in X} \|x^{(k)}-u_{x}^{(k)}\|_{2}) \leq 2q_{max}r^{'} = 2q_{max}\gamma c^{-\frac{K}{d}}.
\end{equation}

This comes straightforwardly from the way we defined sets: 
$\{u^{(k)}_{1},...,u^{(k)}_{C}\}$ for $k=1,...,K$.

\begin{comment}
We will need now the following definition:

\begin{definition}
\label{lip_defi}
Let $\Omega = \Omega_{1} \times ... \times \Omega_{K}$ and let $f:\Omega \rightarrow \mathbb{R}$ be a (measurable) function, i.e. a real random variable on $\Omega$.
We say that the $k$th coordinate has effect at most $c_{k}$ on $f$ if $|f(\omega)-f(\omega^{'})| \leq c_{k}$ for all $\omega$, $\omega^{'} \in \Omega$ that differ only in the $k$th coordinate.
\end{definition}
\end{comment}

\begin{comment}
\begin{lemma}
\label{conc_lemma}
Let $(\Omega, \Sigma, P)$ 
be the product of probability spaces $(\Omega_{i}, \Sigma_{i}, P_{i})$ for $k=1,...,K$ and let $f:\Omega \rightarrow \mathbb{R}$ be a function such that the $k$th 
coordinate has effect at most $c_{k}$. Then:
$$\mathbb{P}(|f-E[f]| \geq t) \leq 2e^{-\frac{t^{2}}{2\sigma^{2}}},$$
where $\sigma^{2}=\sum_{k=1}^{K} c_{k}^{2}$.
\end{lemma}
\end{comment}

Let us take: $X_{i}=\mathcal{Z}^{(i)}$.
Thus, from (\ref{z_bound}), we see that $\{X_{1},...,X_{K}\}$ defined in such a way satisfies assumptions of Lemma \ref{conc_lemma} for $c_{k}=2q_{max}\gamma c^{-\frac{K}
{d}}$.

Therefore, from Lemma \ref{conc_lemma}, we get:

\begin{equation}
\mathbb{P}(|\sum_{k=1}^{K} \mathcal{Z}^{(k)}| > a\epsilon) \leq 
2e^{-(\frac{a \epsilon}{r})^{2}\frac{C^{\frac{2K}{d}}}{8q^{2}_{max}(1+(1-\eta)K)}},
\end{equation}

and that, by (\ref{imp_eq}), completes the proof.

The dependence of the probability of failure $\mathcal{F}(a,\epsilon)$
from Theorem \ref{lipsch_theory_main} on the number of subspaces $K$ is presented on Fig. \ref{f_bound}.

\end{proof}

\begin{figure}
\centering
\begin{subfigure}[c]{1 \textwidth}
\includegraphics[trim=0.7in 2.5in 0.9in 2.5in,clip,width=.49 \textwidth]{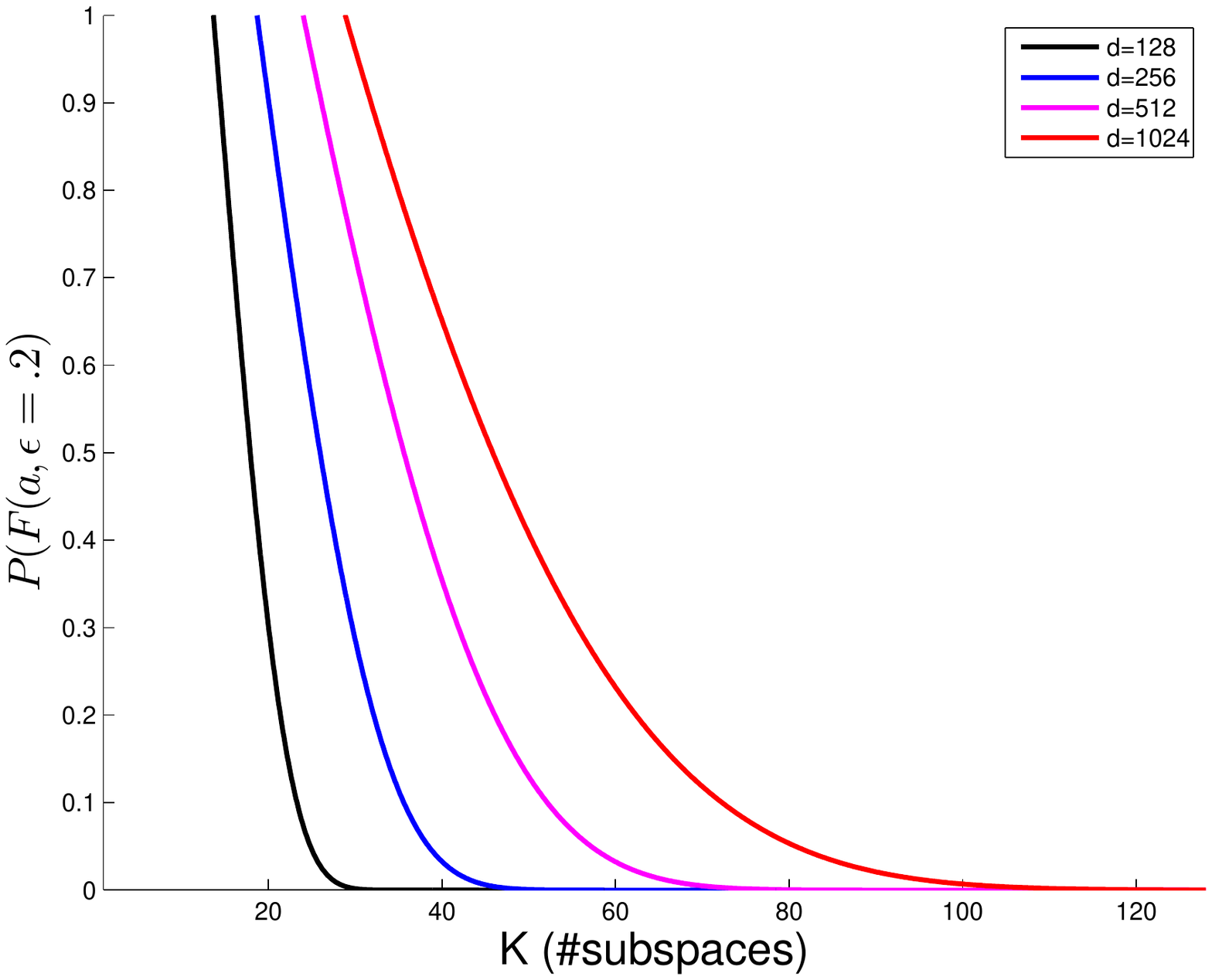}
\includegraphics[trim=0.7in 2.5in 0.9in 2.5in,clip,width=.49 \textwidth]{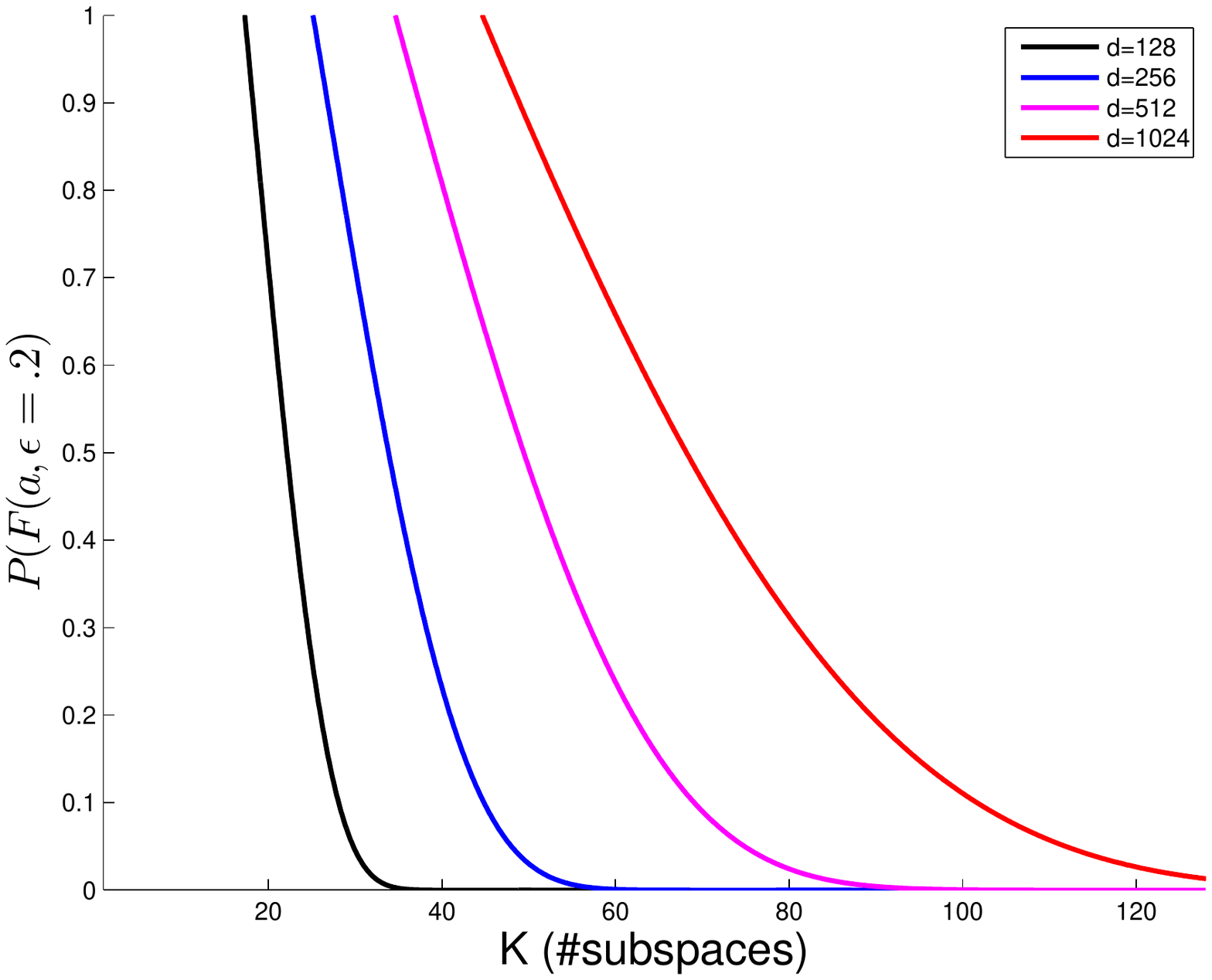}
%\subcaption{Movielens dataset, number of bits used are 64, 128, 256 and 512, %respectively.}
\end{subfigure}
\caption{Upper bound on the probability 
of an event $\mathcal{F}(a,\epsilon)$ as a function of the number of
subspaces $K$ for $\epsilon=0.2$. The left figure corresponds to $\eta=0.75$ and the right one to $\eta=0.5$.
Different curves correspond to different data dimensionality ($d=128,256,512,1024$). We assume that the entire data is in the unit-ball and 
the norm of $q$ is uniformly split across all $K$ chunks.}
\label{f_bound}
\end{figure}

\begin{comment}
Let us see what the result above means in practice. Assume that $d=10$, $b=1$, $R=100$ and we are interested only
in dot products of order $100^{\frac{7}{4}}$ or more (notice that the maximum dot product if the queries are also taken from $\mathcal{B}(r)$ is of order $O(100^{2})$. Then the probability that the quantization method
returns datapoint for which dot product differs from the exact one by at least $10\%$ is no more than $3.33\%$.
\end{comment}

\begin{comment}
\begin{figure}
\label{pfail_upper_bound}
\centering
\begin{subfigure}[c]{1 \textwidth}
\includegraphics[trim=0.9in 2.5in 0.9in 2.5in,clip,width=.98 \textwidth]{}
%\subcaption{Movielens dataset, number of bits used are 64, 128, 256 and 512, %respectively.}
\end{subfigure}
\caption{...}
\end{figure}
\end{comment}

The following result is of its own interest since it does not assume anything about balancedness or boundedness.
It shows that minimizing the objective function $L = \sum_{k=1}^{K}L^{(k)}$, where:
$L^{(k)} = E_{q \sim \mathbf{Q}} [\sum_{S^{(k)}_{c}} \sum_{x\in S^{(k)}_{c}}(q^{(k)T} x^{k} - q^{(k)T} u^{(k)}_{x})^2]$,
leads to concentration results regarding error made by the algorithm.

\begin{theorem}
\label{variance_theorem}
The following is true:
$$\mathbb{P}(\mathcal{F}(a,\epsilon)) \leq \frac{K^{3}\max_{k=1,...,K}L^{(k)}}{|X|a^{2}\epsilon^{2}}.$$
\end{theorem}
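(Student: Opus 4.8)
The plan is to reuse the reduction already established at the very beginning of the proof of Theorem~\ref{lipsch_theory_main}, and then replace the Azuma-type step by an elementary second-moment argument. Concretely, recall from inequality~(\ref{imp_eq}) that $\mathbb{P}(\mathcal{F}(a,\epsilon)) \leq \mathbb{P}(|\sum_{k=1}^{K}\mathcal{Z}^{(k)}| > a\epsilon)$, where $\mathcal{Z}^{(k)} = q^{(k)T}x^{(k)} - q^{(k)T}u_{x}^{(k)}$. Since the present statement assumes nothing about boundedness or balancedness, I would bound the right-hand side using only first/second moments of the per-block errors rather than any exponential concentration.

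First I would pass from the sum to its individual blocks by a union bound. The event $\{|\sum_{k}\mathcal{Z}^{(k)}| > a\epsilon\}$ is contained in $\bigcup_{k}\{|\mathcal{Z}^{(k)}| > a\epsilon/K\}$, because if every block error satisfied $|\mathcal{Z}^{(k)}| \leq a\epsilon/K$, their sum could not exceed $a\epsilon$ in absolute value. This gives $\mathbb{P}(\mathcal{F}(a,\epsilon)) \leq \sum_{k=1}^{K}\mathbb{P}(|\mathcal{Z}^{(k)}| > a\epsilon/K)$. Next, for each $k$ I would apply Markov's inequality to the nonnegative random variable $(\mathcal{Z}^{(k)})^{2}$, yielding $\mathbb{P}(|\mathcal{Z}^{(k)}| > a\epsilon/K) \leq K^{2}\,\mathbb{E}[(\mathcal{Z}^{(k)})^{2}]/(a^{2}\epsilon^{2})$, where the expectation is over the joint draw of $q \sim \mathbf{Q}$ and $x$ uniform in $X$.

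The key identification is that this second moment is exactly $L^{(k)}/|X|$: the double sum $\sum_{S^{(k)}_{c}}\sum_{x\in S^{(k)}_{c}}$ appearing in the definition of $L^{(k)}$ simply ranges over all of $X$ (the cells $S^{(k)}_{c}$ partition the database), so $L^{(k)} = \mathbb{E}_{q}\sum_{x\in X}(\mathcal{Z}^{(k)})^{2}$, and averaging $x$ uniformly over the $|X|$ points introduces the factor $1/|X|$. Substituting this in and summing over $k$ gives $\mathbb{P}(\mathcal{F}(a,\epsilon)) \leq \frac{K^{2}}{|X|a^{2}\epsilon^{2}}\sum_{k=1}^{K}L^{(k)}$, and bounding $\sum_{k}L^{(k)} \leq K\max_{k}L^{(k)}$ produces the final factor of $K$, for the claimed $K^{3}$ dependence.

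The argument is essentially calculation-free, so the only real obstacle is bookkeeping rather than any substantive difficulty: keeping track that the probability is over the joint law of $(q,x)$, getting the $1/|X|$ normalization right in the identity $\mathbb{E}[(\mathcal{Z}^{(k)})^{2}] = L^{(k)}/|X|$, and correctly attributing one factor of $K$ to the union bound and two factors to the $(a\epsilon/K)^{2}$ threshold in Markov. I would also note that a direct Cauchy--Schwarz bound $(\sum_{k}\mathcal{Z}^{(k)})^{2} \leq K\sum_{k}(\mathcal{Z}^{(k)})^{2}$ followed by a single application of Markov would in fact yield the slightly sharper $K^{2}$ dependence; the $K^{3}$ in the statement reflects the looser but transparent union-bound route, and either suffices for the qualitative conclusion that the failure probability vanishes as the objective $L = \sum_{k}L^{(k)}$ is driven down.
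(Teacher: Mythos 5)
Your proposal is correct and follows essentially the same route as the paper's own proof: the reduction $\mathbb{P}(\mathcal{F}(a,\epsilon)) \leq \mathbb{P}(|\sum_{k}\mathcal{Z}^{(k)}| > a\epsilon)$, a union bound over blocks at threshold $a\epsilon/K$, Markov's inequality applied to $(\mathcal{Z}^{(k)})^{2}$, and the identification $\E[(\mathcal{Z}^{(k)})^{2}] = L^{(k)}/|X|$ (which the paper phrases via $\mathrm{Var}(\mathcal{Z}^{(k)})$ using unbiasedness, an equivalent statement). Your closing remark that a single Chebyshev step after $(\sum_{k}\mathcal{Z}^{(k)})^{2} \leq K\sum_{k}(\mathcal{Z}^{(k)})^{2}$ would sharpen $K^{3}$ to $K^{2}$ is also accurate, though not needed for the stated bound.
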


\begin{proof}
Fix some $k \in \{1,...,K\}$.
Let us consider first the expression 
$L^{(k)} = E_{q \sim \mathbf{Q}} [\sum_{S^{(k)}_{c}} \sum_{x\in S^{(k)}_{c}}(q^{(k)T} x^{k} - q^{(k)T} u_{x}^{(k)})^2]$ 
that our algorithm aims to minimize. We will show that it is a rescaled version of the variance of the random variable 
$\mathcal{Z}$.

We have: 

\begin{align}
\begin{split}
Var(\mathcal{Z}^{(k)}) = E_{q \sim \mathbf{Q}, x \sim \mathbf{X}}[(q^{(k)T}x^{(k)}-q^{(k)T}u_{x}^{(k)})^{2}]
-(E_{q \sim \mathbf{Q}, x \sim \mathbf{X}}[q^{(k)T}x^{(k)}-q^{(k)T}u_{x}^{(k)}])^{2} \\
=E_{q \sim \mathbf{Q}, x \sim \mathbf{X}}[(q^{(k)T}x^{(k)}-q^{(k)T}u_{x}^{k})^{2}],
\label{var_eq1}
\end{split}
\end{align}

where the last inequality comes from the unbiasedness of the estimator (Lemma \ref{thm:unbiased}). 

Thus we obtain:

\begin{align}
\begin{split}
Var(\mathcal{Z}^{(k)}) = E_{q \sim \mathbf{Q}} [\sum_{x \in X} \frac{1}{|X|} (q^{(k)T}x^{(k)}-q^{(k)T}u_{x}^{(k)})^{2}] =\frac{1}{|X|}L^{(k)}.
\end{split}
\end{align}

Therefore, by minimizing $L^{(k)}$ we minimize the variance of the random variable that measures the discrepancy between exact answer and quantized answer to the dot product query for the space truncated to the fixed $k$th block.
Denote $u_{x}=(u_{x}^{(1)},...,u_{x}^{(K)})$.
We are ready to give an upper bound on $\mathbb{P}(\mathcal{F}(a,\epsilon))$.

We have:

\begin{align}
\begin{split}
\mathbb{P}(\mathcal{F}(a,\epsilon))
 \leq \mathbb{P}(|q^{T}x-q^{T}u_{x}| > a \epsilon)
 =\mathbb{P}(|\sum_{k=1}^{K}(q^{(k)T}x^{(k)}-q^{(k)T}u_{x}^{(k)})| > a \epsilon)\\
 \leq \mathbb{P}(\sum_{k=1}^{K}|(q^{(k)T}x^{(k)}-q^{(k)T}u_{x}^{(k)})| > a \epsilon) \\
 \leq \mathbb{P}(\exists_{k \in \{1,...,K\}}|q^{(k)T}x^{(k)}-q^{(k)T}u_{x}^{(k)})| > \frac{a \epsilon}{K})\\
\leq \frac{K^{3}\max_{k \in \{1,...,K\}}(Var(q^{(k)T}x^{(k)}-q^{(k)T}u_{x}^{(k)}))}{a^{2}\epsilon^{2}}\\
= \frac{K^{3}\max_{k=1,...,K} Var(\mathcal{Z}^{(k)})}{a^{2}\epsilon^{2}}.
\end{split}
\end{align}

The last inequality comes from Markov's inequality applied to the random variable $(\mathcal{Z}^{(k)})^{2}$ and the union bound.
Thus, by applying obtained bound on $Var(\mathcal{Z}^{(k)})$, we complete the proof.

\end{proof}

\subsubsection{Independent blocks - the proof of Theorem \ref{ind_theory1_main}}

Let us assume that different blocks correspond to independent sets of dimensions. 
Such an assumption is often reasonable in practice. 
If this is the case, we can strengthen our methods for obtaining tight concentration inequalities. 
The proof of Theorem \ref{ind_theory1_main} that covers this scenario is given below.

%We start with the proof of Theorem \ref{ind_theory1}.
\begin{proof}
Let us assume first the most general case, when no balancedness is assumed. We begin the proof in the same way as we did 
in the previous section, i.e. fix some $k \in \{1,...,K\}$ and consider random variable $\mathcal{Z}^{(k)}$.
The goal is again to first find an upper bound on $Var(\mathcal{Z}^{(k)})$.
From the proof of Theorem \ref{variance_theorem} we get: 
$Var(\mathcal{Z}^{(k)}) = \frac{1}{|X|}L^{(k)}$. Then again, following the proof of 
Theorem \ref{variance_theorem}, we have:

\begin{equation}
\label{berry_intro}
\mathbb{P}(\mathcal{F}(a,\epsilon)) \leq \mathbb{P}(|\sum_{i=k}^{K} \mathcal{Z}^{(k)}| > a \epsilon)
\end{equation}

We will again bound the expression $\mathbb{P}(|\sum_{i=k}^{K} \mathcal{Z}^{(k)}| > a \epsilon)$.
We will use now the following version of the Berry-Esseen inequality ([11]):

\begin{theorem}
\label{berry}
Let $\{S_{1},...,S_{n}\}$ be a sequence of independent random variables with mean $0$, not necessarily identically distributed, with finite third moment each.
Assume that $\sum_{i=1}^{n}E[S_{i}^{2}] = 1$. Define: $W = \sum_{i=1}^{n} S_{i}$.
Then the following holds:
$$
|\mathbb{P}(W_{n} \leq x) - \phi(x)| \leq \frac{C}{1+|x|^{3}}\sum_{i=1}^{n}E[|S_{i}|^{3}],
$$
for every $x$ and some universal constant $C>0$, where
$\phi(x) = \mathbb{P}(g \leq x)$ and $g \sim \mathcal{N}(0,1)$.
\end{theorem}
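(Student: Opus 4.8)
The final statement is the non-uniform Berry--Esseen inequality, and the plan is to prove it by Stein's method in the leave-one-out form of Chen and Shao, which handles independent but non-identically distributed summands and produces the $(1+|x|^3)^{-1}$ decay directly. First I would introduce Stein's equation: for the target $h_x(w) = \mathbb{I}[w \leq x]$, let $f_x$ be the bounded solution of $f_x'(w) - w f_x(w) = h_x(w) - \phi_{\Phi}(x)$, where I write $\Phi$ for the standard normal CDF, and which has the closed form $f_x(w) = \sqrt{2\pi}\,e^{w^2/2}\,\Phi(w)(1-\Phi(x))$ for $w \leq x$ and the mirror-image expression for $w > x$. The reason this helps is the exact identity
\[
\mathbb{P}(W \leq x) - \Phi(x) = E[f_x'(W) - W f_x(W)],
\]
so it suffices to bound the right-hand side. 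I would then record the standard estimates on $f_x$: it is uniformly bounded with bounded derivative, and, crucially, $f_x$, $w f_x(w)$ and $f_x'$ all enjoy decay of order $(1+|x|)^{-1}$ (and better) when $w$ stays away from the level $x$. These decay properties are the seed of the non-uniform factor.

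Second, I would use independence through the leave-one-out variables $W^{(i)} = W - S_i$, each independent of $S_i$. Since $E[S_i]=0$, we have $E[S_i f_x(W^{(i)})] = 0$, so writing $f_x(W) - f_x(W^{(i)}) = \int_0^{S_i} f_x'(W^{(i)}+t)\,dt$ gives
\[
E[W f_x(W)] = \sum_i E\Big[ S_i \int_0^{S_i} f_x'(W^{(i)}+t)\,dt \Big] = \sum_i \int E\big[f_x'(W^{(i)}+t)\big]\,K_i(t)\,dt,
\]
where $K_i(t) \geq 0$ is supported on $|t| \leq |S_i|$ with $\int K_i(t)\,dt = E[S_i^2]$. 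Because $\sum_i E[S_i^2] = 1$, the same weights let me write $E[f_x'(W)] = \sum_i \int E[f_x'(W)]\,K_i(t)\,dt$, whence
\[
E[f_x'(W) - W f_x(W)] = \sum_i \int E\big[f_x'(W) - f_x'(W^{(i)}+t)\big]\,K_i(t)\,dt.
\]
Each integrand is controlled by the oscillation of $f_x'$ over an interval of length $|S_i|$; a first-order Taylor expansion together with the bounds on $f_x'$ and $f_x''$ converts this into terms of size $E[|S_i|^3]$, weighted by how likely $W$ is to sit near $x$.

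The main obstacle is exactly this last weighting: turning the ``probability that $W$ lands near the level $x$'' into genuine $(1+|x|^3)^{-1}$ decay. For this I would prove, inside the Stein framework, a randomized concentration inequality bounding $\mathbb{P}(a \leq W \leq b)$ by $(b-a)$ plus $\sum_i E|S_i|^3$, and feed it into the error terms so that large $|x|$ suppresses the contribution. To make the argument uniform I would split into regimes. For moderate $x$ the Stein estimate above already yields the bound, while for large $|x|$ I would instead bound $\mathbb{P}(W \leq x)$ and $\Phi(x)$ separately, using a polynomial tail estimate $\mathbb{P}(|W|>|x|) \lesssim (\sum_i E|S_i|^3)/|x|^3$ obtained by truncation at level $|x|$ plus the third-moment hypothesis, together with the Gaussian tail for $\Phi$, since in that regime these tail estimates already dominate $(1+|x|^3)^{-1}\sum_i E|S_i|^3$. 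Patching the two regimes and absorbing all numerical constants into the universal $C$ completes the proof. As an alternative I would note the classical Fourier route of the cited reference: apply Esseen's smoothing lemma to reduce the distributional distance to an integral of $|(\hat f_W(t) - e^{-t^2/2})/t|$, and introduce an exponentially tilted (conjugate) distribution to recover the decay in $x$.
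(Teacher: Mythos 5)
First, a point of comparison: the paper does not prove this statement at all. It is quoted as a known black-box result --- the classical non-uniform Berry--Esseen inequality of Bikelis/Nagaev --- cited from reference [11] and used only inside the proof of Theorem 4.3. So there is no in-paper proof to compare against, and your attempt has to be judged as a proof of the classical theorem itself. The skeleton you chose is the Chen--Shao Stein's-method route (Stein equation, leave-one-out variables $W^{(i)}=W-S_i$, the nonnegative smoothing measures $K_i$ with $\int K_i = E[S_i^2]$, a concentration inequality, and a regime split in $x$), which is a legitimate and standard way to prove this theorem, and the identities in your first two steps are correct.

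However, the sketch has genuine gaps located exactly where the theorem is hard: producing the factor $(1+|x|^{3})^{-1}$ rather than the uniform bound. Write $\gamma=\sum_i E[|S_i|^{3}]$ and $\Phi$ for the standard normal CDF (the paper's $\phi$). (a) The concentration inequality you propose, $\mathbb{P}(a\le W\le b)\le C\bigl((b-a)+\gamma\bigr)$, is the \emph{uniform} one; fed into the Stein error terms it yields only the uniform Berry--Esseen bound $C\gamma$, with no decay in $x$. The decay of $f_x$, $wf_x(w)$, $f_x'$ away from the level $x$ cannot rescue this, because the error terms live on the event that $W^{(i)}+t$ and $W$ straddle $x$, and the bound on that event's probability is exactly the concentration inequality --- if it carries no $x$-dependent weight, neither does the final estimate. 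What Chen--Shao actually use is an exponentially weighted concentration inequality of the form $\mathbb{P}(a\le W^{(i)}\le b)\le Ce^{-a/2}\bigl((b-a)+\gamma\bigr)$, whose proof in turn requires Bennett-type exponential moment bounds for truncated summands; none of this machinery appears in your outline. (Also, $f_x'$ has a unit jump at $w=x$, so ``Taylor expansion using bounds on $f_x''$'' is not available; that jump is precisely the term the concentration inequality must absorb.) (b) Your large-$|x|$ regime is also broken: the tail estimate $\mathbb{P}(|W|>|x|)\lesssim \gamma/|x|^{3}$ is false in the intermediate range $1\ll x\ll \gamma^{-1/3}$. Take $S_i=\pm n^{-1/2}$ i.i.d.\ symmetric, so $\gamma=n^{-1/2}$, and $x=\sqrt{\log n}$: then $\mathbb{P}(W>x)\approx 1-\Phi(x)\approx n^{-1/2}(2\pi\log n)^{-1/2}$, which exceeds $\gamma/x^{3}=n^{-1/2}(\log n)^{-3/2}$ by a factor of order $\log n$. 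Moreover, even where the estimate is true ($x\gtrsim\gamma^{-1/3}$), truncation plus third-moment Markov/Rosenthal only gives $C(1+\gamma)/x^{3}$, and the ``$+1$'' is fatal when $\gamma$ is small; one needs Bennett's inequality on the truncated sum to get $O(x^{-6})$, which is below $\gamma/x^{3}$ precisely when $x\ge\gamma^{-1/3}$. Consequently the cutoff between your two regimes must be the $\gamma$-dependent level $x\sim\gamma^{-1/3}$, and the entire range $1\le x\le \gamma^{-1/3}$ must be handled by a Stein argument that is itself non-uniform --- which is gap (a) again. The Fourier alternative you mention at the end (Esseen smoothing plus exponential tilting) is essentially the classical proof and would work, but it is only name-checked, not carried out.
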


Note that if dimensions corresponding to different blocks are independent, then
$\{\mathcal{Z}^{(1)},...,\mathcal{Z}^{(K)}\}$ is the family of independent random variables. This is the case, since every $\mathcal{Z}^{(k)}$ is defined as:
$\mathcal{Z}^{(k)} = q^{(k)T}x^{(k)}-q^{(k)T}u_{x}^{(k)})$.
Note that we have already noticed that the following holds: 
$E[\mathcal{Z}^{(k)}]=0$.
Let us take: $S^{(k)} = \frac{\mathcal{Z}^{(k)}}{\sqrt{\sum_{k=1}^{K} Var(\mathcal{Z}^{(k)})}}$. Clearly, we have: $\sum_{k=1}^{K}E[(S^{(k)})^{2}]=1$.
Besides, random variables $S^{(k)}$ defined in this way are independent and
$E[S^{(k)}]=0$ for $k=1,...,K$.
Denote: 

\begin{equation}
F = \sum_{k=1}^{K} E[|S^{(k)}|^{3}] = \frac{1}{(\sum_{k=1}^{K}Var(\mathcal{Z}^{(k)}))^{\frac{3}{2}}}\sum_{k=1}^{K} E[|\mathcal{Z}^{(k)}|^{3}]
\end{equation}

Thus, from Theorem \ref{berry} we get:

\begin{equation}
|\mathbb{P}\left(\frac{\sum_{k=1}^{K}\mathcal{Z}^{(k)}}
{\sqrt{\sum_{k=1}^{K}Var(\mathcal{Z}^{(k)})}} \leq x\right)-\phi(x)| \leq 
\frac{C}{1+x^{3}}F.
\end{equation}

Therefore, for every $c>0$ we have:

\begin{align}
\begin{split}
\mathbb{P}\left(\frac{|\sum_{k=1}^{K}\mathcal{Z}^{(k)}|}
{\sqrt{\sum_{k=1}^{K}Var(\mathcal{Z}^{(k)})}} > c\right) 
= 1 - \mathbb{P}\left(\frac{\sum_{k=1}^{K}\mathcal{Z}^{(k)}}
{\sqrt{\sum_{k=1}^{K}Var(\mathcal{Z}^{(k)})}} \leq c\right) \\
+ \mathbb{P}\left(\frac{\sum_{k=1}^{b}\mathcal{Z}^{(k)}}
{\sqrt{\sum_{k=1}^{K}Var(\mathcal{Z}^{(k)})}} < -c\right) \\
\leq 1 - \phi(c) + \phi(-c) + \frac{2C}{1+c^{3}}F
\end{split}
\end{align}

Denote $\hat{\phi}(x) = 1 - \phi(x)$.
Thus, we have:

\begin{align}
\label{esseen_final}
\begin{split}
\mathbb{P}\left(|\sum_{k=1}^{K} \mathcal{Z}^{(k)}| > 
c\sqrt{\sum_{k=1}^{K} Var(\mathcal{Z}^{(k)})}\right) \leq 1-\phi(c)+\phi(-c)+\frac{2C}{1+c^{3}}F \\
= 2\hat{\phi}(c) + \frac{2C}{1+c^{3}}F \\
\leq \frac{2}{\sqrt{2 \pi}c}e^{-\frac{c^{2}}{2}}+\frac{2C}{1+c^{3}}F, 
\end{split}
\end{align}

where in the last inequality we used a well-known fact that: 
$\hat{\phi}(x) \leq \frac{1}{\sqrt{2\pi}x}e^{-\frac{x^{2}}{2}}$.

If we now take: $c = \frac{a \epsilon}{\sqrt{\sum_{k=1}^{K} Var(\mathcal{Z}^{(k)}}}$,
then by applying (\ref{esseen_final}) to (\ref{berry_intro}), we get:
\begin{align}
\begin{split}
\mathbb{P}(|\sum_{k=1}^{K}\mathcal{Z}^{(k)}| > a\epsilon) \leq 
\frac{2\sum_{k=1}^{b}Var(\mathcal{Z}^{(k)})}{\sqrt{2 \pi} a\epsilon}
e^{-\frac{a^{2}\epsilon^{2}}{2(\sum_{k=1}^{K}Var(\mathcal{Z}^{(k)}))^{2}}}\\
+ \frac{2C}{1+(\sum_{k=1}^{K}Var(\mathcal{Z}^{(k)}))^{\frac{3}{2}}}\sum_{k=1}^{K}E[|\mathcal{Z}^{(k)}|^{3}],
\end{split}
\end{align}

Substituting the exact expression for $Var(\mathcal{Z}^{(k)})$, we get:

\begin{equation}
\label{final_equation}
\mathbb{P}(|\sum_{k=1}^{K}\mathcal{Z}^{(k)}| > a\epsilon) \leq
\frac{2\sum_{k=1}^{K}L^{(k)}}{\sqrt{2\pi}|X|a\epsilon}e^{-\frac{a^{2}\epsilon^{2}|X|^{2}}{2(\sum_{k=1}^{K}L^{(k)})^{2}}}+
\frac{2C(\sum_{k=1}^{K} L^{(k)})^{\frac{3}{2}}}{a^{3}\epsilon^{3}|X|^{\frac{3}{2}}}
\sum_{k=1}^{K}E[|\mathcal{Z}^{(k)}|^{3}].
\end{equation}

Note that $|\mathcal{Z}^{(k)}| = |q^{(k)T}x^{(k)}-q^{(k)T}u_{x}^{(k)}|=
|q^{(k)T}(x^{(k)} - u_{x}^{(k)})| \leq 
\|q^{(k)}\|_{2}\|x^{(k)}-u_{x}^{(k)}\|_{2}$.
The latter expression is at most $q_{max}\Delta$, by the definition of $\Delta$
and $q_{max}$. Thus we get: $|\mathcal{Z}^{(k)}|^{3} \leq q_{max}^{3}\Delta^{3} \leq a$,
where the last inequality follows from the assumptions on $\Delta$ from the statement of the theorem. Therefore, from \ref{final_equation} we get:

\begin{equation}
\mathbb{P}(|\sum_{k=1}^{K}\mathcal{Z}^{(k)}| > a\epsilon) \leq
\frac{2\sum_{k=1}^{K}L^{(k)}}{\sqrt{2\pi}|X|a\epsilon}e^{-\frac{a^{2}\epsilon^{2}|X|^{2}}{2(\sum_{k=1}^{K}L^{(k)})^{2}}}+
\frac{2CK(\sum_{k=1}^{K} L^{(k)})^{\frac{3}{2}}}{a^{2}\epsilon^{3}|X|^{\frac{3}{2}}}.
\end{equation}

Thus, taking into account (\ref{berry_intro}) and putting $\beta = 2C$, we complete the proof.
\end{proof}

{
\bibliographystyle{ieee}
\bibliography{thesisrefs}
}

\end{document}